\documentclass{article}

\usepackage[preprint]{neurips_2026}

\usepackage[utf8]{inputenc} 
\usepackage[T1]{fontenc}    
\usepackage{amsmath,amssymb,amsthm,mathtools}
\usepackage{bm}
\usepackage[table]{xcolor}
\definecolor{linkcolor}{HTML}{f07167}
\usepackage[pagebackref,breaklinks,colorlinks,allcolors=linkcolor]{hyperref}

\usepackage{url}            
\usepackage{array}          
\usepackage{multirow}       
\usepackage{booktabs}       
\usepackage{arydshln}       
\usepackage{threeparttable} 
\usepackage{amsfonts}       
\usepackage{nicefrac}       
\usepackage{microtype}      
\usepackage{xspace}         
\usepackage{etoc}
\usepackage{graphicx}       
\usepackage{wrapfig2}       
\usepackage[skins]{tcolorbox} 
\usepackage{subcaption}     
\usepackage{afterpage}      
\graphicspath{{content/figures/}}
\usepackage[capitalise]{cleveref}  
\creflabelformat{equation}{#2#1#3}  
\usepackage{enumitem}
\usepackage{thmtools}
\usepackage{thm-restate}
\usepackage{algorithm}
\usepackage{algpseudocode}
\usepackage{macros}

\usepackage{marginnote}

\newcommand{\dataset}[1]{#1}
\newcommand{\model}[1]{\texttt{#1}}
\newcommand{\method}[1]{\textsc{#1}}
\newcommand{\methodcite}[2]{\hyperlink{cite.#2}{\method{#1}}\nocite{#2}}
\definecolor{bestcolor}{HTML}{D9534F}
\definecolor{secondcolor}{HTML}{0500c7}

\newcommand{\best}[1]{\textbf{#1}}
\newcommand{\secondbest}[1]{\underline{#1}}
\definecolor{degradedcolor}{HTML}{A02020}
\newcommand{\degraded}[1]{\textcolor{degradedcolor}{#1}}

\title{Learning What to Forget: Improving LLM Unlearning via Learned Token-Level Importance}

\author{%
    Gizem Yüce\thanks{Equal contribution} \qquad Giorgos Nikolaou\footnotemark[1] \qquad Nicolas Flammarion \\
    Theory of Machine Learning Lab, EPFL \\
    \texttt{ \{gizem.yuce, georgios.nikolaou, nicolas.flammarion\}@epfl.ch }
}

\begin{document}

\etocdepthtag.toc{main}

\maketitle

\begin{abstract}
    Machine unlearning aims to remove targeted knowledge from a trained model while preserving its general capabilities. 
    For autoregressive language models, not all tokens in a forget sample are equally relevant to forgetting.
    Existing approaches either ignore this heterogeneity or rely on auxiliary models, heuristics, or external annotations to estimate each token's relevance for forgetting.
    We instead characterize it through the interaction with the retain objective: a token is forget-specific to the extent that minimizing the forget loss on that token does not conflict with retain optimality. 
    We formalize this perspective as a joint optimization problem over the model parameters and the token weights and show that, under a natural separation condition, the resulting objective recovers the oracle forget-specific token support.
    Motivated by this formulation, we introduce \textbf{Alternating Token-Weighted Unlearning} (\method{ATWU}), a lightweight framework that jointly learns token forget-specificity and model parameters during unlearning using a simple linear scorer over hidden states, without external token-level supervision.
    Across \dataset{TOFU} and \dataset{RWKU}, \method{ATWU} achieves state-of-the-art forget--retain trade-offs, outperforming sample-level methods, probability-based token-weighting heuristics, and auxiliary-model-based approaches. Moreover, the learned scores align substantially better with ground-truth forget-specific spans, indicating that \method{ATWU} identifies semantically meaningful token-level forgetting signals. Overall, our results suggest that retain conflict provides an effective criterion for identifying what language models should forget, enabling unsupervised learning of token-level forget-specificity directly from model representations with minimal computational overhead.
\end{abstract}

\section{Introduction}\label{sec:intro}

Large language models trained on web-scale corpora can memorize personal information~\citep{nasr2025scalable}, copyrighted text~\citep{karamolegkou-etal-2023-copyright, carlini2023quantifying}, and harmful content~\citep{li2024the, Barrett_2023}. However, the sheer scale of these models and their training data makes it infeasible to identify the specific training samples containing such information and retrain the model from scratch without them. 
Machine unlearning~\citep{10.1109/SP.2015.35, 9519428, 10.1145/3749987} has therefore become a critical capability for deploying large language models responsibly, enabling the removal of sensitive, copyrighted, or harmful knowledge without retraining from scratch. Despite its importance, scalable and reliable unlearning remains far from solved, particularly in the context of modern language models. 

Unlearning in large language models requires identifying which parts of a forget sample actually encode the targeted information to be removed. As in standard training, the unlearning objective decomposes over tokens in an autoregressive manner. However, when given a forget sample (e.g., a sentence or document to be unlearned), not all tokens contribute equally to the information that should be removed. Many tokens—such as common function words, punctuation, and syntactic patterns—are \emph{structural} and would be generated regardless of exposure to the forget data. Penalizing the generation of such tokens does not promote forgetting of the targeted content, but instead risks degrading general language capabilities.

This token heterogeneity has motivated token-weighted unlearning objectives~\citep{yang2025exploring, wan-etal-2025-every, wang2025selective, zhou2026not}, in which the forget loss is selectively applied to tokens judged to encode the targeted information. Existing approaches,
however, still obtain these token weights from external annotations, auxiliary models, probability
heuristics, or task-specific rules. In contrast, we characterize token-level forget-specificity through
the interaction between the forget and the retain objectives themselves. While prior work typically treats
retain conflict merely as a motivation for token selective unlearning, we argue that it can also provide a criterion for selection. 
A forget token is deemed \emph{forget-specific} if applying the forget objective to that
token is compatible with remaining close to retain optimality, whereas applying the same objective
to structural tokens induces retain degradation. This turns token weighting from an external heuristic
into a latent variable of the unlearning objective.

\paragraph{Contributions.} 
Our contributions are threefold: 
\begin{itemize}[topsep=-1pt,leftmargin=2em]
\item We give a retain-conflict characterization of token-level forget-specificity and show that a joint token-weighted objective recovers the oracle forget-specific support under a natural separation condition. 
\item We instantiate this principle with \method{ATWU} (\textbf{Alternating Token-Weighted Unlearning}): a lightweight scorer parameterizes token weights as a linear direction in the language model's hidden-state space and is trained by alternating scorer and model updates on the joint objective. 
\item 
Across \dataset{TOFU} and \dataset{RWKU}, we show that \method{ATWU} improves the forget--retain trade-off over sample-level baselines, probability-based token heuristics, and auxiliary-model-based token weighting, while producing token scores that align with ground-truth forget-specific spans.
\end{itemize}

\begin{figure}[t]
    \centering
    \captionsetup[subfigure]{justification=centering}
    \begin{subfigure}[c]{0.48\textwidth}
        \centering
        \footnotesize
        \setlength{\fboxsep}{1pt}
        \chatbubble{human-icon.pdf}{bubble}{What is the full name of the geology author born in Karachi, Pakistan on 06/30/1975?}

        \vspace{0.15em}
        \chatbubble{ai-icon-seul.pdf}{bubble}{
            \tokhl{seulhl}{0}{The} \tokhl{seulhl}{75}{author's} \tokhl{seulhl}{0}{name} \tokhl{seulhl}{0}{is} \tokhl{seulhl}{100}{\gttok{Hina}} \tokhl{seulhl}{100}{\gttok{Ameen}}\tokhl{seulhl}{100}{.}%
        }

        \vspace{0.05em}
        \chatbubble{ai-icon-su.pdf}{bubble}{
            \tokhl{suhl}{0}{The} \tokhl{suhl}{0}{author's} \tokhl{suhl}{100}{name} \tokhl{suhl}{100}{is} \tokhl{suhl}{100}{\gttok{Hina}} \tokhl{suhl}{100}{\gttok{Ameen}}\tokhl{suhl}{100}{.}%
        }

        \vspace{0.05em}
        \chatbubble{ai-icon-satimp.pdf}{bubble}{
            \tokhl{satimphl}{3}{The} \tokhl{satimphl}{71}{author's} \tokhl{satimphl}{87}{name} \tokhl{satimphl}{0}{is} \tokhl{satimphl}{98}{\gttok{Hina}} \tokhl{satimphl}{64}{\gttok{Ameen}}\tokhl{satimphl}{99}{.}%
        }

        \vspace{0.05em}
        \chatbubble{ai-icon-ours.pdf}{bubble}{
            \tokhl{ourshl}{0}{The} \tokhl{ourshl}{0}{author's} \tokhl{ourshl}{0}{name} \tokhl{ourshl}{0}{is} \tokhl{ourshl}{22}{\gttok{Hina}} \tokhl{ourshl}{47}{\gttok{Ameen}}\tokhl{ourshl}{0}{.}%
        }
        \caption{}
        \label{fig:qualitative:teaser-distilled}
    \end{subfigure}\hfill
    \begin{subfigure}[c]{0.51\textwidth}
        \centering
        \includegraphics[width=\linewidth]{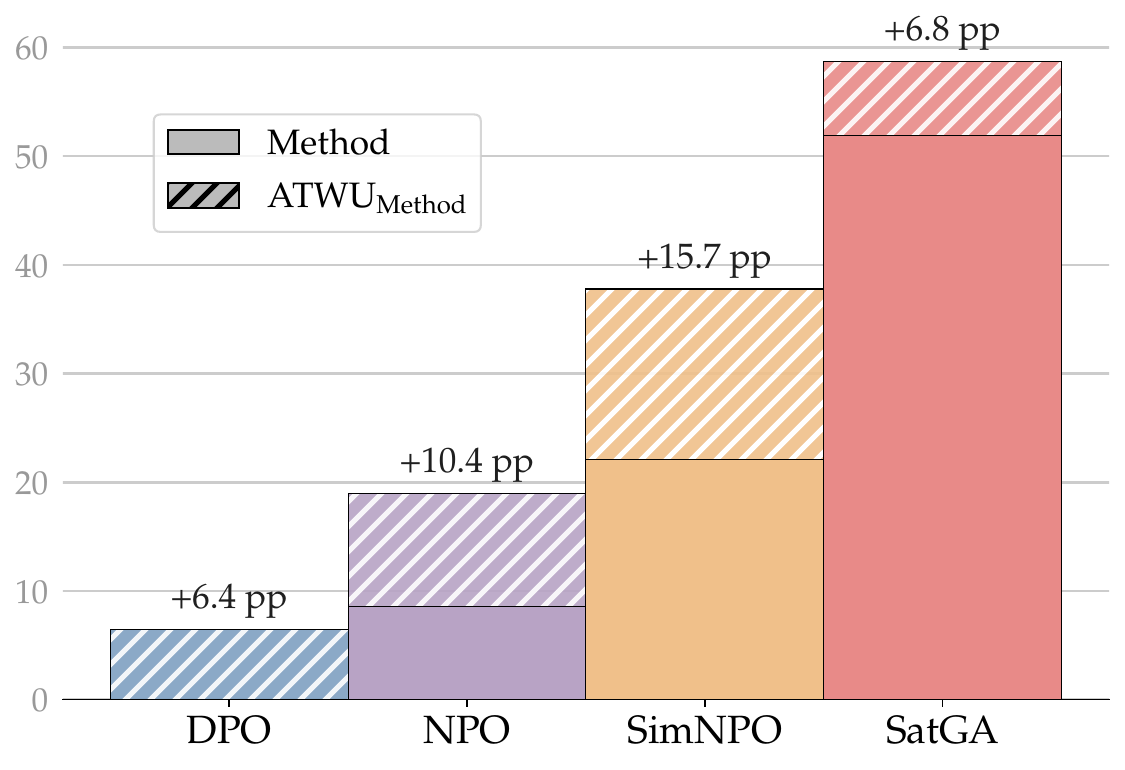}
        \caption{}
        \label{fig:qualitative:rwku-uplift}
    \end{subfigure}

    \caption{
    (\subref{fig:qualitative:teaser-distilled})~Token-level
    scores on a \dataset{TOFU} forget sample.
    \textcolor{seulhl!85!black}{\method{SEUL}},
    \textcolor{suhl}{\method{SU-LLM}}, and
    \textcolor{satimphl}{\method{SatImp}} assign substantial weight to both
    forget-specific and structural tokens, whereas 
    \textcolor{ourshl}{\method{ATWU}} concentrates on the bold
    ground-truth forget-specific span.
    (\subref{fig:qualitative:rwku-uplift})~\method{ATWU} can be combined with diverse forget losses. For each objective,
    \textcolor{dpohl}{\method{DPO}},
    \textcolor{npohl}{\method{NPO}},
    \textcolor{simnpohl}{\method{SimNPO}}, and
    \textcolor{graddiffhl}{\method{SatGA}},
    solid bars denote baseline Unlearning Quality (\UQ), and  added segments denote the gains from using \method{ATWU} token weights with the corresponding forget loss. \method{ATWU} consistently improves the forget--retain trade-off. 
    }
    \label{fig:qualitative:teaser-rwku}
\end{figure}

\section{Related Work}\label{sec:related-work}

We situate our approach relative to four key areas of prior work: we first review general sample-level unlearning objectives and their more recent token-weighted variants; we then explore the representation probing literature that motivates our hidden-state scoring mechanism; and finally, we summarize the robust evaluation protocols used to assess unlearning efficacy.

\paragraph{Sample-level/token-agnostic unlearning methods.}
Sample-level LLM unlearning methods specify the forget target at the level of full examples, documents, or question--answer pairs. Gradient Ascent (\method{GA}) maximizes the cross-entropy loss on forget samples~\citep{jang2023knowledge}, while Gradient Difference (\method{GradDiff}) combines this ascent with retain-set descent or regularization~\citep{lu2022quark}. Preference-based approaches instead cast unlearning as alignment, including \method{DPO}-style refusal objectives~\citep{rafailov2023direct}, Negative Preference Optimization (\method{NPO})~\citep{zhang2024negative}, and reference-free variants such as \method{SimNPO}~\citep{fan2026simplicity}. JensUn~\citep{singh2025unlearninglasts} instead replaces the forget and retain training losses with Jensen-Shannon-divergence objectives, yielding a strong sample-level baseline with stable unlearning dynamics. Other methods act on internal states or softened token distributions: \method{RMU} perturbs representations associated with the forget set~\citep{li2024the}, and \method{UNDIAL} distills toward adjusted distributions that reduce memorized-token probability~\citep{dong2025undial}.  

\paragraph{Token-weighted unlearning methods.}
Token-weighted unlearning approaches aim to identify tokens that encode targeted information and selectively apply the forgetting objective to avoid unnecessary utility degradation. These methods typically adopt one of the sample-level objectives above for the actual forget update; they differ primarily in how token forget-specificity is estimated. \emph{Auxiliary-model approaches} infer it by comparing the predictions of models fine-tuned on retain and forget splits~\citep{eldan2023whos, wan-etal-2025-every}. \emph{Probability-based heuristics} instead use model confidence, entropy or other loss-based criteria as proxies for token forget-specificity~\citep{wang2025rethinking, yang2025exploring, koh2026forgetmattersrestselective}. Other methods rely on external signals such as linguistic parsers~\citep{dong2025undial} or LLM-generated annotations~\citep{zhou2026not} to identify forget-specific tokens. In particular, approaches based on large language model-generated annotations can provide strong supervision, but may be costly to scale and raise additional privacy concerns.
In contrast, \method{ATWU} learns token-level weights jointly with the unlearning update, using the same retain and forget objectives that define the desired deletion behavior, and requires neither external supervision nor auxiliary model training. 

\paragraph{Probes for linguistic properties.} 
A complementary line of work studies what token-level information is encoded in language model hidden representations using probing classifiers, often linear maps or MLPs trained on frozen activations \citep{alain2017understanding}. These studies show that contextual token representations encode a broad range of linguistic structure including part-of-speech, morphology \citep{belinkov-etal-2017-neural}, syntactic dependencies, and lexical-semantic distinctions \citep{clark-etal-2019-bert} and that many such properties are linearly recoverable from intermediate layers of transformer language models \citep{tenney-etal-2019-bert}. Structural probing work further suggests that hidden-state geometry can reflect syntactic relations directly, rather than merely supporting downstream decoding \citep{hewitt-manning-2019-structural}. While this literature is primarily descriptive and does not address unlearning, it motivates our choice to parameterize token forget-specificity as a lightweight linear scorer over hidden states: if token-level linguistic and semantic attributes are already organized in these representations, then forget-specific information may likewise be identifiable from the same representation space. Furthermore, our formulation allows the scorer to be learned jointly during unlearning without requiring any external supervision. 

\paragraph{Unlearning evaluation.}
Unlearning in LLMs is typically evaluated by balancing forget efficacy against retained utility. TOFU~\citep{maini2024tofu, dorna2026openunlearning} focuses on fictitious-author question answering, measuring forget quality and model utility. MUSE~\citep{shi2025muse} expands evaluation to broader corpora and includes metrics such as memorization, privacy leakage, utility preservation, and scalability. RWKU~\citep{jin2024rwku} considers real-world entity forgetting alongside general capabilities including reasoning, factuality, and fluency. Recent work highlights limitations of surface-form evaluation: \citet{singh2025unlearninglasts} show that models may appear to forget while still recovering the same information under paraphrased or perturbed inputs, and advocate evaluation based on paraphrases, semantic judging, and worst-case performance across variants. We follow this perspective in our evaluations for a more robust assessment. 

\section{Token-Weighted Unlearning as a Joint Optimization Problem}
\label{sec:setting}
We now formalize the token-selection problem underlying \method{ATWU}. The goal is to define when a forget token should receive the forget update, and to show that this selector can be identified from the retain and forget objectives themselves.

\subsection{Background and Notation}
Let $\vocab$ be a finite vocabulary, $\paramspace \subseteq \R^d$ the parameter space, and $\llm{\tth} : \vocab^* \to \Delta(\vocab)$ an autoregressive language model. We write $\llmf{\tth}{x}{t}$ for the probability of token $\tok{x}{t}$ in context $\tokc{x}{t}$ assigned by the model and $\size*{\seq{x}}$ for the length of sequence $\seq{x}$. We are given a pretrained model $\llm{\thpre}$, a forget set $\DF$ whose influence should be removed, and a retain set $\DR$ on which model performance should be preserved.


The common framework for LLM unlearning combines a retain loss $\rtokloss{x}{t}{\tth}$, which preserves behavior on retain samples, with a forget loss $\ftokloss{x}{t}{\tth}$, which suppresses the original forget completion. The retain loss is typically cross-entropy \(\ell_{\text{CE}}\) or KL divergence to the pretrained model predictions. Forget losses include negative cross-entropy $\ell_{\text{GA}} =\minus \ell_{\text{CE}} = \GAloss{\tth}{x}{t}$ with various scalings and preference-based losses. 
These objectives are in tension because while the retain loss tries to increase the likelihood of a sample, the forget loss often tries to decrease it.  
For clarity, we absorb fixed loss coefficients and dataset normalizations into $\ell_\retain$ and $\ell_\forget$. 
Finally, we write $R(\tth) \defeq \sum_{\seq{x} \in \DR}\sum_{t=1}^{\size{\seq{x}}} \rtokloss{x}{t}{\tth}$ for the retain objective and $R^\star \defeq \min_{\tth \in \paramspace} R(\tth)$ for its minimum over the parameter space. 

\subsection{Forget-Specific vs.\ Structural Tokens}
The forget set contains both information that should be removed and linguistic structure that should be preserved. Tokens such as names, dates, quantities, rare identifiers, and attribute-bearing phrases can identify the targeted knowledge in \(\DF\); we call these tokens \textbf{forget-specific}. In contrast, function words, punctuation, syntactic patterns, and generic continuations are \textbf{structural}: they are useful under the retain distribution and would often be generated even without exposure to the forget data. Applying the forget loss to forget-specific tokens removes targeted information, while applying it to structural tokens can instead damage general language capabilities.



This distinction has motivated the introduction of token-level weights $\lvar{x}{t} \in [0,1]$ approximating the oracle binary label $\olvar{x}{t} = \mathbf{1}_{\{\tok{x}{t} \text{ is a forget-specific token}\}}$, and the token-weighted loss
\begin{equation}
    \Loss(\tth,\zz)
    \;=\;
    \sum_{\seq{x} \in \DR} \sum_{t=1}^{\size{\seq{x}}} \rtokloss{x}{t}{\tth}
    \;+\;
    \sum_{\seq{x} \in \DF} \sum_{t=1}^{\size{\seq{x}}}
    \lvar{x}{t} \, \ftokloss{x}{t}{\tth}.
    \label{eq:obj-weighted}
\end{equation}

If the oracle labels $\zstar$ were known, re-weighted unlearning could simply compute a minimizer \(\tth^\star \in \arg\min_{\tth} \Loss(\tth, \zstar)\). This oracle setting is useful as an analysis tool---Table~\ref{tab:exp:utilisation} later shows that using ground-truth token labels can indeed substantially improve the forget-retain trade-off---but such labels are unavailable or very costly in practice. Prior token-weighted methods therefore approximate $\zz^\star$ using annotations, auxiliary models, or probability-based proxies such as confidence, saturation, and surprisal. These methods, which adopt the formulation in Equation~\ref{eq:obj-weighted}, are likewise motivated by the retain conflict: minimizing forget-loss on structural tokens harms retain performance, while doing so on the forget-specific tokens does not. The key difference in our method from prior works is that we use retain conflict as a token-level identification criterion.


\subsection{Characterizing Forget-Specificity via Retain Conflict}

Let $\IF \defeq \{(x,t) : x \in \DF,\, t \in [|x|]\}$ denote the set of forget sequence position pairs and $N_F \defeq |\IF|$ be the number of tokens in the forget set. For each $i = (x, t) \in \IF$, write $\ell^i_\forget(\tth)=\ell_\forget(\paramby{\given{\tok{x}{t}}{\tokc{x}{t}}}{\tth})$. 
For any subset $\set{A} \subseteq \IF$, define the retain conflict
\begin{equation}
\conflict(\set{A}) \coloneqq \min_{\tth \in \paramspace}\left[ R(\tth) - R^\star + \sum_{i \in \set{A}}\left(\ell^i_\forget(\tth)-\ell_\forget^{\min}\right)\right],
\end{equation}
where $\ell_\forget^{\min} \coloneqq \min_{\tth,i} \ell^i_\forget(\tth)$ is the global minimum of the forget loss. Each summand $\ell^i_\forget(\tth) - \ell_\forget^{\min} \geq 0$, and therefore we have $\conflict(\set{A}) \geq 0$. Moreover, $\conflict$ is \emph{monotone}: for $\set{A} \subseteq \set{B}$, $\conflict(\set{A}) \leq \conflict(\set{B})$.


We write $\conflict_i \coloneqq \conflict(\{i\})$ for the singleton conflict. This quantity is the best possible residual cost of simultaneously preserving the retain performance and driving the forget loss for token $i \in \IF$ to its floor. A token with small conflict can be suppressed without moving far from a retain-optimal model, so it is a plausible forget-specific token. A token with large conflict cannot be suppressed without sacrificing retain performance, which is precisely the behavior expected of structural tokens. We give a formal characterization of this intuition with the next assumption. 


\begin{assumption}[Retain-conflict separation]
\label{def:zstar}
There exist $\varepsilon \ge 0, \delta > 0$, and a partition $\IF = \Fset^\star \cup \set{S}^\star$ with \(\size{\Fset^\star} = \rho^\star N_F\), where the oracle labels are $\solvar{i} = \mathbf{1}_{\{i \in \Fset^\star\}}$, such that
\begin{equation}
    \conflict(\Fset^\star) \le \varepsilon \qquad\text{and}\qquad \conflict_i \ge \varepsilon + \delta \quad \forall i \in \set{S}^\star.
    \label{eq:partition-margins}
\end{equation}
\end{assumption}
This assumption characterizes the forget-specific tokens as the tokens that can be forgotten while remaining in the retain optimum, and structural tokens as those whose unlearning forces the model outside the retain optimum. While this conceptual separation is elegant, using it directly as an algorithmic selection criterion is intractable. Computing the singleton conflict $\conflict_i$ demands a full optimization over the model parameters $\tth$ for each of the $N_F$ forget tokens. To bypass this computational bottleneck, we show that under the retain-conflict separation, the oracle labels $\zstar$ as well as \(\tth^\star\) are instead recovered as the minimizer of the joint problem
\begin{equation}
    \min_{\tth \in \paramspace,\;\zz \in \mathcal{Z}} \Loss(\tth, \zz),
    \qquad
    \mathcal{Z} \defeq \bigl\{\zz \in \{0,1\}^{N_F} : {\textstyle\sum_{i \in \IF}} \slvar{i} = \rho N_F \bigr\}.
    \label{eq:joint}
\end{equation}
The budget constraint $\mathcal{Z}$ prevents trivial minimizers of the form $\zz = \mathbf{0}$ or $\zz = \mathbf{1}$, which do not identify a forget-specific subset.

\begin{restatable}[Recovery]{theorem}{thmrecovery}
\label{thm:recovery}
Under \Cref{def:zstar}, suppose $0 < \rho \leq \rho^\star$ and $\rho N_F$ is an integer. If $(\hat{\tth},\hat{\zz})$ is any global minimizer of \eqref{eq:joint}, then $\mathrm{supp}(\hat{\zz}) \subseteq \Fset^\star$. If $\rho = \rho^\star$, then $\mathrm{supp}(\hat{\zz}) = \Fset^\star$ and \(\hat{\tth} = \tth^\star\).
\end{restatable}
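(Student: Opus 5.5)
The plan is to reduce the joint problem \eqref{eq:joint} to a combinatorial problem over supports, and then use the monotonicity of $\conflict$ together with \Cref{def:zstar}. Fix any $\zz \in \mathcal{Z}$ with support $\set{A} = \mathrm{supp}(\zz)$, so that $\size{\set{A}} = k \defeq \rho N_F$. Since $\zz$ is binary, \eqref{eq:obj-weighted} reads
\[
\Loss(\tth,\zz) = R(\tth) + \sum_{i \in \set{A}} \ell^i_\forget(\tth).
\]
Adding and subtracting the constant $R^\star + k\,\ell_\forget^{\min}$ gives
\[
\min_{\tth \in \paramspace} \Loss(\tth,\zz) = R^\star + k\,\ell_\forget^{\min} + \conflict(\set{A}).
\]
Here I take the minima defining $R^\star$, $\ell_\forget^{\min}$ and $\conflict$ to be attained, as the paper implicitly does. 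The offset $R^\star + k\,\ell_\forget^{\min}$ is the same for every feasible $\zz$, because the budget fixes $k$. Hence $(\hat\tth,\hat\zz)$ is a global minimizer of \eqref{eq:joint} if and only if two conditions hold. First, $\hat{\set{A}} = \mathrm{supp}(\hat\zz)$ minimizes $\conflict(\set{A})$ over all $\set{A} \subseteq \IF$ with $\size{\set{A}} = k$. Second, $\hat\tth$ attains the inner minimum for $\hat\zz$.

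Next I compare the two kinds of candidate supports. Because $\rho \le \rho^\star$, we have $k \le \size{\Fset^\star}$, so at least one size-$k$ subset $\set{A} \subseteq \Fset^\star$ exists.
\begin{itemize}
\item For any such $\set{A} \subseteq \Fset^\star$, monotonicity and \Cref{def:zstar} give $\conflict(\set{A}) \le \conflict(\Fset^\star) \le \varepsilon$.
\item For any size-$k$ set $\set{B}$ that contains some $i \in \set{S}^\star$, monotonicity applied to $\{i\} \subseteq \set{B}$ gives $\conflict(\set{B}) \ge \conflict_i \ge \varepsilon + \delta > \varepsilon$.
\end{itemize}
Every set of the second kind is therefore strictly worse than every set of the first kind. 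So no minimizer can contain a structural token, which gives $\mathrm{supp}(\hat\zz) \subseteq \Fset^\star$. The strict margin $\delta > 0$ is exactly what rules out ties here.

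For the case $\rho = \rho^\star$, we have $k = \size{\Fset^\star}$. The only size-$k$ subset of $\Fset^\star$ is $\Fset^\star$ itself, so $\mathrm{supp}(\hat\zz) = \Fset^\star$ and hence $\hat\zz = \zstar$. By the characterization in the first step, $\hat\tth \in \arg\min_\tth \Loss(\tth,\zstar)$, which is precisely the oracle solution $\tth^\star$.

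The main subtlety is this last identification. Literally, $\hat\tth = \tth^\star$ requires $\arg\min_\tth \Loss(\tth,\zstar)$ to be a singleton. Otherwise the statement should be read as saying that $\hat\tth$ is an oracle minimizer, which is how $\tth^\star$ was introduced ("$\tth^\star \in \arg\min$"). I would state this interpretation explicitly, or add a uniqueness assumption. Apart from this point and the attainment of the minima, the argument is a direct consequence of the decomposition in the first step.
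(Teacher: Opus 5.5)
Your proposal is correct and follows essentially the same route as the paper's proof: you use the value decomposition $\min_{\tth} \Loss(\tth,\zz_{\set{A}}) = R^\star + \rho N_F\,\ell_\forget^{\min} + \conflict(\set{A})$, an upper bound of $\varepsilon$ from a size-$\rho N_F$ subset of $\Fset^\star$ via monotonicity, and a lower bound of $\varepsilon+\delta$ for any support containing a structural token (the paper argues by contradiction, where you compare the two bounds directly). Your caveat that $\hat{\tth} = \tth^\star$ holds only up to non-uniqueness of $\arg\min_{\tth} \Loss(\tth,\zstar)$ is a fair refinement, since the paper's proof stops at $\mathrm{supp}(\hat{\zz}) = \Fset^\star$ and never addresses the claim about $\hat{\tth}$.
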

The proof is deferred to Appendix~\ref{app:proofs}.
\Cref{thm:recovery} shows that the joint fixed-budget objective recovers the oracle token subset under retain-conflict separation. In particular, for $\rho \le \rho^\star$, every global minimizer selects only tokens in $\Fset^\star$, and for $\rho=\rho^\star$, it recovers both the oracle support $\zstar$ and the corresponding oracle minimizer $\tth^\star$. Notably, this recovery argument only requires the token-wise forget loss to be bounded below---a condition satisfied by the saturated negative cross-entropy loss used in our \method{ATWU} instantiation (\Cref{sec:alternating}). The theorem therefore justifies learning token weights jointly with model parameters directly from the unlearning objective, without requiring externally specified token labels. In practice, $\rho$ serves as a scalar hyperparameter controlling the budget of this learned selector.

\section{Alternating Token-Weighted Unlearning}
\label{sec:alternating}

We now turn the joint formulation into a practical learning algorithm: \emph{Alternating Token-Weighted Unlearning} (ATWU). We first relax the discrete optimization problem, then introduce the hidden-state scorer, specify and justify the selection of forget loss used in our experiments, and finally describe the alternating optimization procedure. 

\paragraph{Lagrangian relaxation.} The constrained problem \eqref{eq:joint} is combinatorial in $\zz$. We therefore instead optimize a continuous penalized objective
\begin{equation}
    \tilde{\Loss}(\tth, \zz) = \Loss(\tth, \zz) + \lambda_H \sum_{\lvar{x}{t} \in \zz} H(\lvar{x}{t}) +  \lambda_{\mathrm{\rho}} \Bigl(\tfrac{1}{N_F} \sum_{\lvar{x}{t} \in \zz} \lvar{x}{t} - \rho\Bigr)^{\!2},
    \label{eq:atwu}
\end{equation}
 where $H(z)=-z\log z-(1-z)\log(1-z)$ is the binary entropy.

The objective $\tilde{\Loss}(\tth,\zz)$ relaxes the original combinatorial problem \eqref{eq:joint} on two fronts: the binary constraint $\zz \in \{0,1\}^{N_F}$ is relaxed to $\zz \in [0,1]^{N_F}$, and the budget constraint $\sum \lvar{x}{t} = \rho N_F$ is replaced by the quadratic penalty. The entropy term counters the first relaxation by pushing each $\lvar{x}{t}$ back towards $\{0,1\}$. This relaxation can be \emph{exact} under sufficiently large regularization.  

\begin{lemma}[Exactness of the relaxation, informal]\label{lem:exact-relaxation}
For sufficiently large \(\lambda_H\) and \(\lambda_\rho\), every global minimizer of the relaxed objective~\eqref{eq:atwu} over \([0,1]^{N_F}\) is binary, satisfies the budget constraint, and is a global minimizer of the constrained problem~\eqref{eq:joint}. 
\end{lemma}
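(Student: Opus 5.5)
The plan is to compare the relaxed objective $\tilde{\Loss}$ with the constrained optimum $M^\star \defeq \min_{\tth,\,\zz\in\mathcal{Z}}\Loss(\tth,\zz)$. I will first show that at any global minimizer of $\tilde{\Loss}$, $\zz$ must be binary, and then that it must satisfy the budget.

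\textbf{Assumptions and the easy direction.} Throughout I assume the token forget losses are uniformly bounded, $|\ell^i_\forget(\tth)|\le B$ for all $\tth$ and $i$. This holds for the saturated negative cross-entropy used in \method{ATWU}; the lower bound $\ell_\forget^{\min}$ alone is what \Cref{thm:recovery} needed, but this argument also uses the upper bound. I also assume $\rho N_F$ is an integer, as in \Cref{thm:recovery}. For binary $\zz\in\mathcal{Z}$ both penalties vanish, since $H(0)=H(1)=0$ and the budget term is zero. Hence $\tilde{\Loss}(\tth,\zz)=\Loss(\tth,\zz)$ there, and so $\min\tilde{\Loss}\le M^\star$. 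In the other direction, $R(\tth)\ge R^\star$ and $\lvar{x}{t}\,\ell^i_\forget\ge -B$, so $\Loss(\tth,\zz)\ge L_{\mathrm{low}}\defeq R^\star-BN_F$ for every $\zz\in[0,1]^{N_F}$. The gap $G\defeq M^\star-L_{\mathrm{low}}$ is therefore finite. It follows that any $(\hat\tth,\hat\zz)$ with total penalty exceeding $G$ cannot be a global minimizer.

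\textbf{Step 1: at most one fractional coordinate.} Fix $\hat\tth$ and view $\tilde{\Loss}(\hat\tth,\cdot)$ as a function of $\zz$. Suppose two coordinates $z_i,z_j$ lie in $(0,1)$, and move along the direction $z_i+s,\ z_j-s$. This keeps $\sum_k z_k$ fixed, so the budget penalty is constant. The term $\Loss$ is linear in $s$, and $\lambda_H(H(z_i+s)+H(z_j-s))$ is strictly concave in $s$. A strictly concave function on an interval cannot attain its minimum at an interior point, so $\hat\zz$ is not a minimizer. Consequently every minimizer has at most one fractional coordinate, for every $\lambda_H>0$.

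\textbf{Step 2: no fractional coordinate, then budget feasibility.} This is the main obstacle, because the penalties can be arbitrarily small near binary feasible points, so the crude comparison with $G$ fails there. Suppose $z_j\in(0,1)$ is the unique fractional coordinate, and let $m\defeq\min(z_j,1-z_j)$. Compare with the point $\zz'$ obtained by rounding $z_j$ to whichever of $0$ or $1$ makes $\sum_k z_k$ closer to the integer $\rho N_F$ (or not farther from it). The objective changes in three ways:
\begin{itemize}
\item the linear term changes by at most $Bm$;
\item the entropy term drops by $\lambda_H H(z_j)\ge 2\log 2\,\lambda_H m$;
\item the budget penalty does not increase, because $\sum_k z_k=k+z_j$ for an integer $k$ and the integer target $\rho N_F$ is at least as close to the rounded sum.
\end{itemize}
Hence $\tilde{\Loss}(\hat\tth,\zz')<\tilde{\Loss}(\hat\tth,\hat\zz)$ as soon as $\lambda_H>B/(2\log 2)$, so $\hat\zz$ must be binary. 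For binary $\hat\zz$ that violates the budget, $|\sum_k z_k-\rho N_F|\ge 1$. The budget penalty is then at least $\lambda_\rho/N_F^2$, and this exceeds $G$ once $\lambda_\rho>GN_F^2$, contradicting minimality by the comparison above.

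\textbf{Conclusion.} Under these two thresholds on $\lambda_H$ and $\lambda_\rho$, every global minimizer $(\hat\tth,\hat\zz)$ has $\hat\zz\in\mathcal{Z}$. Its value is $\tilde{\Loss}(\hat\tth,\hat\zz)=\Loss(\hat\tth,\hat\zz)\le M^\star$, so it is a global minimizer of \eqref{eq:joint}. Combined with \Cref{thm:recovery}, this also yields oracle recovery for the relaxed objective.
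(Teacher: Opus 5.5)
Your bookkeeping with the gap $G$ is correct, as are the exchange argument of Step~1 and the final budget and optimality steps (these last two match the paper's Steps~2--3). The gap is in Step~2's binarity claim.

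You choose the rounding endpoint by the budget, i.e.\ the one that moves $\sum_k z_k$ toward $\rho N_F$. You then bound the change of the linear term by $Bm$, with $m=\min(z_j,1-z_j)$. That bound holds only when the budget-favourable endpoint is also the nearest endpoint. In the opposite case it fails. For example, take $z_j=m<1/2$ with $\rho N_F\ge k+1$. Your rule rounds $z_j$ up to $1$, so the coordinate moves by $1-m$, and the linear term can grow by as much as $B(1-m)$. Meanwhile the entropy saving is $\lambda_H H(m)$, and $H(m)\to 0$ as $m\to0$. So no fixed $\lambda_H>B/(2\log 2)$ dominates $B(1-m)$ uniformly, and the claimed strict decrease does not follow. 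The only term that could pay for this move is the \emph{strict} decrease of the budget penalty. Your argument discards it by recording only that the penalty ``does not increase.''

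There are two easy repairs.

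(i) Keep your rule and use that decrease in the bad case. The distance of the sum to the integer target shrinks from $\delta+(1-m)$ to $\delta$ for some integer $\delta\ge 0$. So the penalty drops by at least $\frac{\lambda_\rho}{N_F^2}(1-m)^2\ge\frac{\lambda_\rho}{2N_F^2}(1-m)$. Rounding then strictly decreases the objective once also $\lambda_\rho\ge 2BN_F^2$. This gives the lemma with uncoupled thresholds $\lambda_H>B/(2\log 2)$ and $\lambda_\rho>\max(G,2B)\,N_F^2$.

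(ii) Alternatively, follow the paper. Always round to the \emph{nearest} endpoint, and absorb the possible increase of the budget penalty through its one-coordinate Lipschitz constant $2\lambda_\rho/N_F$. This needs $\lambda_H>(B+2\lambda_\rho/N_F)/(2\log 2)$ with your entropy bound; the paper uses the cruder constant $\log 2$. That dependence is why the paper's formal statement lets the $\lambda_H$-threshold depend on $\lambda_\rho$. This route also makes your Step~1 unnecessary, since the other coordinates need not be binary.

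Route (i) costs the extra exchange argument but yields thresholds that do not interact. Route (ii) is shorter but couples them.
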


The formal statement and proof are given in \Cref{lem:exact-relaxation-formal}. While this free-token-weight formulation provides strong theoretical guarantees, learning an independent scalar for every token is unscalable for large corpora and cannot generalize to unseen sequences. To bridge this gap from theory to practice, \method{ATWU} parameterizes the selector using a shared scoring function $g_\sw$. Under this scalable parameterization, the entropy and budget terms naturally transition from enforcing exact combinatorial constraints to acting as principled regularizers, successfully guiding the network toward sparse, budget-controlled token selection.

To instantiate this shared scorer, we parameterize token forget specificity using a simple linear projection over the language model’s hidden representations. This architectural choice is directly motivated by the structural probing literature: because intermediate hidden states natively encode rich linguistic and semantic properties \citep{tenney-etal-2019-bert}, they provide an ideal representation space for identifying forget-specific patterns without requiring a complex auxiliary network.

Let $\sh_{\tth}(\tok{x}{t}) \in \R^d$ denote the $t$-th hidden representation produced by $\llm{\tth}$ for the sequence $\seq{x}$. \method{ATWU} defines
\begin{equation}
\alvar{x}{t}
=
g_\sw(\sh_{\tth}(\tok{x}{t}))
=
\sigma\!\left(\langle \sw,\sh_{\tth}(\tok{x}{t})\rangle\right)
\in (0,1),
\end{equation}
where $\sw \in \R^d$ is the learned scorer parameter.

This single linear projection keeps the mechanism lightweight while encouraging reusable token-level patterns rather than independent per-token decisions. We initialize $\sw=\mathbf{0}_d$, meaning all tokens initially receive a uniform score of $1/2$. Consequently, the objective begins token-agnostic and smoothly becomes selective during training. As a proof of concept, Appendix~\ref{app:subsec:additional:linear-separability} shows that this exact linear architecture can recover ground-truth forget labels when trained with explicit supervision, i.e., the scorer model has enough capacity to model the true forget specificity; \method{ATWU}, however, learns the scores entirely implicitly from the unlearning objective.

\paragraph{Choice of loss functions.} 
We use standard cross-entropy as the retain loss and a saturated negative cross-entropy introduced by \citet{wang2025rethinking} as the base forget loss. For a hyperparameter $\beta > 0$, the token-wise forget loss is defined as:
\begin{equation} \label{eq:loss-sat}
    \ell_\forget(\paramby{\given{\tok{x}{t}}{\tokc{x}{t}}}{\tth})
    \;\defeq\;
    \llmf{\tth}{x}{t}^{\beta}
    \cdot
    \GAloss{\tth}{x}{t}.
\end{equation}
This loss formulation incorporates the saturation weighting mechanism utilized by \citep{wang2025rethinking} inside the loss, which we will refer to as \method{SatGA}. While saturation alone is a poor proxy for token-level forget-specificity (as evidenced by the poor AUROC of the saturation score in \cref{fig:exp:auroc}), it provides highly desirable gradient scaling properties for the loss function. 

Crucially, unlike raw negative cross-entropy $\ell_{\text{GA}}$, this saturated variant is bounded below. As a function of $p\in(0,1]$, the term $p^\beta\log p$ reaches a minimum of $-1/(e\beta)$ at $p=e^{-1/\beta}$. Consequently, the hyperparameter $\beta$ effectively controls the target probability level for the forget update, with smaller $\beta$ values enforcing stronger suppression.

To instantiate our general joint-learning framework using this bounded objective, we introduce a slight modification: we also inject the dynamically learned token score into the saturation exponent. We refer to this score-modulated variant as \method{SatGA}$^+$. Plugging this into the joint formulation yields our primary \method{ATWU} objective:
\begin{equation}\label{eq:atwu-parametric}
\begin{aligned}
    \Loss_{\method{ATWU}}(\tth, \sw) &= \alpha R(\tth) + \gamma \sum_{\seq{x},t} g_\sw(\tok{x}{t}) \cdot \llmf{\tth}{x}{t}^{\beta g_\sw(\tok{x}{t})} \cdot \GAloss{\tth}{x}{t} \\
    &\quad + \lambda_H \sum_{\seq{x},t} H\!\bigl(g_\sw(\tok{x}{t})\bigr) + \lambda_{\mathrm{\rho}} \Bigl(\tfrac{1}{N_F}{\textstyle\sum_{\seq{x},t}} g_\sw(\tok{x}{t}) - \rho\Bigr)^{\!2},
\end{aligned}
\end{equation}
where $g_\sw(\tok{x}{t}) \coloneqq g_\sw(\sh_{\tth}(\tok{x}{t}))$. 

For strictly binary scores, the \method{SatGA}$^+$ modification reduces to standard \method{SatGA}. However, because our scores $g_\sw$ are continuous during training, the modified exponent forces uncertain tokens to exert a much smoother, attenuated forget update. This stabilizes the early phases of optimization before the scorer has fully converged, an effect we ablate directly in \cref{tab:exp:utilisation}. The coefficients $\alpha$ and $\gamma$ denote the retain and forget loss weights, respectively. While we absorbed these constants into the base loss definitions in earlier sections for notational simplicity, we make them explicit here to reflect our exact empirical objective.

\paragraph{Alternating optimization.}
We optimize $\Loss_{\method{ATWU}}(\tth, \sw)$ by alternating between language-model and scorer updates.
With the scorer $\sw$ fixed, the language-model parameters \(\tth\) are updated using the current token-weighted forget objective together with the retain objective. With $\tth$ fixed, the scorer $\sw$ is updated to improve token selection under the same regularized objective.
During model updates, the scores are detached and treated as fixed
coefficients; during scorer updates, \(\tth\) is frozen and gradients flow only
through \(\mathbf{w}\).
This scheduled alternation is empirically more stable than updating the scorer and model in lockstep: the scorer changes the effective forget objective, while the model changes the hidden-state geometry on which the scorer depends. Updating them on separate timescales reduces this feedback loop. Finally, in Appendix~\ref{app:subsec:additional:ablations} we present an ablation on the update frequency that shows that the alternating variant outperforms joint updates.

\section{Experiments}\label{sec:experiments}
%
\begin{table}[t]
    \centering
    \small
    \renewcommand{\arraystretch}{1.2}
    \resizebox{\textwidth}{!}{%
    \begin{tabular}{@{}c c ccc ccc c ccc ccc@{}}
        \toprule
        \multirow{2}{*}[-0.4ex]{\textbf{Method}}
        & & \multicolumn{3}{c}{\textbf{Unlearning}}
        & \multicolumn{3}{c}{\textbf{Utility}}
        & & \multicolumn{3}{c}{\textbf{Unlearning}}
        & \multicolumn{3}{c}{\textbf{Utility}} \\
        \cmidrule(lr){3-5} \cmidrule(lr){6-8} \cmidrule(lr){10-12} \cmidrule(lr){13-15}
        & & $\FQ\,\uparrow$ & $\RD\,\downarrow$ & $\UQ\,\uparrow$ & MMLU & Rep. & \WR
        & & $\FQ\,\uparrow$ & $\RD\,\downarrow$ & $\UQ\,\uparrow$ & MMLU & Rep. & \WR \\
        \midrule
        \method{Original} &\multirow{10}{*}{\rotatebox[origin=c]{90}{\textbf{TOFU}}}
                & $0.0$              & $0.0$              & $0.0$              & $66.6$           & $546$             & $50.0$
                & \multirow{10}{*}{\rotatebox[origin=c]{90}{\textbf{RWKU}}}
                & $0.0$              & $0.0$              & $0.0$              & $70.1$           & $529$             & $50.0$                \\
        \midrule
        \methodcite{GradDiff}{lu2022quark} && $39.7$             & $17.3$             & $22.4$             & $66.1$           & $545$             & $48.0$
                & & $\secondbest{81.9}$ & $36.7$            & $45.2$             & $69.2$           & $514$             & \degraded{$43.0$} \\
        \methodcite{DPO}{rafailov2023direct} && $59.5$             & $15.0$             & $44.5$             & \degraded{$64.3$} & $533$            & \degraded{$38.5$}
                & & $12.3$             & $\best{17.0}$      & $0.0$              & $69.7$           & $523$             & \degraded{$43.0$} \\
        \methodcite{NPO}{zhang2024negative} && $48.0$             & $5.0$              & $43.0$             & $66.2$           & $551$             & $48.5$
                & & $47.6$             & $39.1$             & $8.5$              & $68.1$           & $543$             & \degraded{$33.5$} \\
        \methodcite{SimNPO}{fan2026simplicity} && $69.3$             & $6.5$              & $62.8$             & $65.8$           & $541$             & $50.5$
                & & $51.8$             & $29.7$             & $22.1$             & $68.8$           & $529$             & $45.5$             \\
        \methodcite{JensUn}{singh2025unlearninglasts} && $\best{98.0}$      & $9.6$              & $\secondbest{88.3}$ & $65.3$          & \degraded{$236$} & \degraded{$11.0$}
                & & $\best{85.4}$      & $36.1$             & $49.3$             & $70.2$           & $523$             & $46.5$             \\
        \methodcite{RMU}{li2024the} && $87.2$             & $\best{1.3}$       & $85.9$             & $65.3$           & $540$             & $50.5$
                & & ---                & ---                & ---                & ---              & ---               & ---               \\
        \methodcite{WGA}{wang2025rethinking} && $66.8$             & $4.8$              & $62.1$             & $66.0$           & $550$             & $60.0$
                & & $78.0$             & $26.1$             & $\secondbest{51.9}$ & $69.4$          & $518$             & \degraded{$41.0$} \\
        \methodcite{SatImp}{yang2025exploring} && $78.6$             & $\secondbest{3.0}$ & $75.6$             & $65.9$           & $549$             & $53.5$
                & & $77.3$             & $29.3$             & $48.0$             & $69.8$           & $518$             & \degraded{$40.0$} \\
        \midrule
        \method{ATWU} && $\secondbest{95.2}$ & $3.5$              & $\best{91.7}$      & $66.5$           & $558$             & $58.0$
                & & $81.4$             & $\secondbest{22.7}$ & $\best{58.7}$      & $70.3$           & $519$             & $46.0$             \\
        \bottomrule
    \end{tabular}%
    }
    {\footnotesize\par\vspace{0.25em}\noindent\FQ: forget quality, the relative reduction in worst-case forget-set judge score. \RD: retain degradation, the relative loss in retain-set judge score. $\UQ=[\FQ-\RD]_{+}$: net forget--retain trade-off. Definitions in Appendix~\ref{app:sec:metrics:derived}.\par}
    \vspace{0.2em}
    \caption{
        \method{ATWU} achieves the best \UQ on both benchmarks while preserving utility close to the original checkpoint, the only method to do so consistently across the six metric panels. Left: \dataset{TOFU}~\texttt{forget10} with \model{Llama-3.1-8B-Instruct}; right: canonical \dataset{RWKU} ten-subject batch with \model{Phi-3-Mini-4k-Instruct}. Higher-\FQ competitors such as \method{JensUn} forget aggressively but collapse generation quality; \method{RMU} matches \method{ATWU} on retain degradation but trails on \UQ. \best{Best} (\textbf{bold}) and \secondbest{second-best} (\underline{underlined}) per column; utility values that are materially degraded (MMLU drop $> 2$pp, Rep.\ drop $> 5\%$, or $\mathrm{WR} < 45$) are shown in \degraded{red}.
    }
    \label{tab:exp:headline}
\end{table}%

Our experiments are designed to test whether the learned token weights are accurate, useful for end-to-end unlearning, and whether the alternating token-weighted unlearning (\method{ATWU}) framework generalizes to loss functions beyond saturated negative cross-entropy. First, we compare \method{ATWU} against state-of-the-art unlearning methods across multiple models and datasets. Second, we evaluate the quality of the forget-specificity scores learned by the \method{ATWU} scorer, comparing them against alternative token-weighting approaches. Third, we demonstrate the versatility of our framework by testing whether \method{ATWU} consistently improves unlearning quality when used with other common forget losses.

\paragraph{Benchmarks, models, and training.}
We evaluate \method{ATWU} on two LLM unlearning benchmarks. \dataset{TOFU}~\citep{maini2024tofu} is a synthetic question--answering benchmark built from fictitious-author biographies. We report main results on \texttt{forget10} with \model{Llama-3.1-8B-Instruct}~\citep{grattafiori2024llama}, using the checkpoint released by \citet{dorna2026openunlearning}. Results on \texttt{forget01} and \texttt{forget05}, as well as all three splits with
\model{Llama-3.2-1B-Instruct}, are qualitatively consistent and deferred to Appendix~\ref{app:subsec:exp:unlearning} for brevity. \dataset{RWKU}~\citep{jin2024rwku} targets real-world public-figure knowledge in a pretrained LLM. Following \citet{singh2025unlearninglasts}, we perform batch unlearning on a fixed canonical batch of ten subjects using \model{Phi-3-Mini-4k-Instruct}~\citep{abdin2024phi3technicalreporthighly}. Hardware, software, and optimization details used across all settings are provided in Appendix~\ref{app:sec:exp-details}.

\paragraph{Hyperparameter tuning.}
For each method, we tune method-specific hyperparameters, including learning rates and loss coefficients, via Bayesian optimization~\citep{akiba2019optuna}, seeded with each method's recommended configuration and guided by a cheap surrogate metric (\ESD for \dataset{TOFU}, \NDelta for \dataset{RWKU}). Crucially, we find that strong baselines are often obscured by limited tuning budgets. With adequate per-method tuning, several prior methods perform markedly better than previously reported. \method{RMU} serves as a prime example: it becomes highly competitive on \dataset{TOFU} and surpasses several recently proposed methods—a finding that contradicts earlier literature. We view this outcome not as a critique of prior work, but as a broader motivation for rigorous, uniform tuning protocols in unlearning evaluations, a point we expand upon in Appendix~\ref{app:tuning-importance}.

\paragraph{Evaluation.}
We use the paraphrase- and judge-based evaluation protocol of \citet{singh2025unlearninglasts}, with \model{GPT-5.4-mini} as the judge. We report three baseline-relative summary metrics: forget quality (\FQ), which measures the reduction in worst-case forget-set judge score, retain degradation (\RD), which measures the loss in retain-set judge score, and unlearning quality $\UQ=[\FQ-\RD]_{+}$, which summarizes the net forget--retain trade-off. Unlike metrics like token-overlap with the original answers or likelihood-based surrogates, this protocol penalizes methods that suppress only the original surface form while leaving semantically equivalent completions recoverable. We complement these summary metrics with three utility-preservation probes: MMLU accuracy, generation repetitiveness (Rep.), and pairwise win rate against the original checkpoint (\WR). Detailed definitions and discussions on the surrogate, judge-based and utility metrics are provided in Appendix~\ref{app:sec:metrics}. Full metric panels are reported in Appendix~\ref{app:sec:exp-details}.

\subsection{Unlearning Quality}
\label{subsec:exp:main}

\paragraph{Main results.}
\Cref{tab:exp:headline} reports the main
$\FQ/\RD/\UQ$ and utility results on the
two largest experimental configurations:
\dataset{TOFU}~\texttt{forget10} with
\model{Llama-3.1-8B-Instruct} and the canonical \dataset{RWKU}
ten-subject batch with \model{Phi-3-Mini-4k-Instruct}.
\method{ATWU} achieves the best \UQ on both benchmarks:
$91.7$ on \dataset{TOFU}, exceeding the runner-up
\method{RMU} by $5.8$ percentage points, and $58.7$ on
\dataset{RWKU}, exceeding \method{WGA} by $6.8$ percentage
points.
Among the strongest unlearning methods, \method{ATWU} is also the only one that avoids a major degradation on the utility axes we measure. It attains the highest MMLU score on both benchmarks, the strongest repetitiveness score on \dataset{TOFU}, and remains competitive in pairwise win rate. Results are consistent across the smaller \texttt{forget01}
and \texttt{forget05} splits and across model scales, as shown
in \cref{tab:tofu-full-1b,tab:tofu-full-8b}.

\begin{wraptable}[14]{r}{0.51\textwidth}
    \vspace{-1em}
    \centering\small
    \setlength{\tabcolsep}{3pt}
    \renewcommand{\arraystretch}{1.1}
    \resizebox{\linewidth}{!}{%
    \begin{tabular}{@{}c cc ccc@{}}
        \toprule
        \multirow{2}{*}[-0.6ex]{\textbf{Method}}
        & \multicolumn{2}{c}{\textbf{Relative}}
        & \multicolumn{3}{c}{\textbf{Utility}} \\
        \cmidrule(lr){2-3} \cmidrule(lr){4-6}
          & $\UQ\,\uparrow$ & $\Delta$$\UQ\,\uparrow$
          & MMLU & Rep. & \WR \\
        \midrule
        \method{Original}            & $0.0$              & ---     & $70.1$ & $529$ & $50.0$ \\
        \midrule
        \method{ATWU}$_\mathrm{DPO}$    & $6.4$               & $+6.4$  & $69.5$ & $512$ & \degraded{$36.0$} \\
        \method{ATWU}$_\mathrm{NPO}$    & $18.9$              & $+10.4$ & \degraded{$67.7$} & $545$ & \degraded{$35.5$} \\
        \method{ATWU}$_\mathrm{SimNPO}$ & $\secondbest{37.8}$ & $+15.7$ & $69.9$ & $531$ & \degraded{$43.5$} \\
        \midrule
        \method{ATWU}                    & $\best{58.7}$       & $+6.8$ & $70.3$ & $519$ & $46.0$ \\
        \bottomrule
    \end{tabular}%
    }
    \vspace{0.3em}
    \caption{\dataset{RWKU} ten-subject batch results of
    \textsc{ATWU} with various forget losses. $\Delta$\UQ reports the gain
    over each method's vanilla counterpart in
    \cref{tab:exp:headline}.}
    \label{tab:exp:rwku-atwu-augmented}
\end{wraptable}
\paragraph{\method{ATWU} with other forget losses.}
So far, we have instantiated \method{ATWU} using a saturated negative cross-entropy loss. However, our joint-learning framework is fundamentally loss-agnostic. Any forget loss that decomposes autoregressively over tokens (\cref{eq:obj-weighted}) can be seamlessly plugged into the formulation. Moreover, this flexibility extends to forget objectives constructed from an autoregressive building block---such as the token-wise NLL of the forget sequence, or its log-ratio against a frozen reference---even if they do not perfectly match the exact formulation in \cref{eq:obj-weighted}. In such cases, we simply replace the autoregressive block with its scorer-reweighted analog, leaving the rest of the loss untouched. This adaptation applies cleanly to \method{DPO}, \method{NPO}, and \method{SimNPO}. Crucially, the training procedure remains identical: the model and the token scorer are still optimized in an alternating manner from scratch during unlearning. \Cref{fig:qualitative:rwku-uplift,tab:exp:rwku-atwu-augmented} demonstrate the unlearning quality of \method{ATWU} instantiated with \method{DPO}, \method{NPO}, and \method{SimNPO}, alongside our primary \method{SatGA}$^+$ formulation for comparison. \method{ATWU} improves \UQ over the unweighted version of every loss by $6.4$ to $15.7$ percentage points---evidence that the performance gains from \method{ATWU} are not restricted to a specific choice of loss function, even though the underlying base loss dictates the overall performance ceiling. Further details are provided in Appendix~\ref{app:subsec:additional:augment}.


\subsection{Comparison with other Token Weighting Methods}
\label{subsec:exp:tokenimp}

\paragraph{Score quality.}
We first evaluate the learned scores directly. \Cref{fig:exp:auroc} ranks tokens within each \dataset{TOFU} forget sample by each method's score and computes AUROC against the ground-truth labels from \citet{zhou2026not}. \method{Saturation} is the only method that performs below the random baseline of $50$ ($33 \pm 17$). The auxiliary-model approaches \method{LLM Diff}, \method{N-Gram Diff}, and \method{LogProb Mask} do better but stall in the $54$–$63$ range. The strongest baselines---\method{Importance}, \method{Entropy}, and \method{Noun Mask}---cluster around $67$–$68$. 
Our \method{ATWU} learned scores stand clearly above all of them at $75 \pm 9$, combining the highest mean with the lowest variance among the compared methods. This indicates that \method{ATWU}'s gains come not only from how the token weights are used, but also from a substantially stronger token-level forget-specificity signal than that of competing token-weighting mechanisms.

\begin{wrapfigure}[25]{r}{0.51\textwidth}
    \centering
    \vspace{-1.6em}
    \includegraphics[width=\linewidth]%
        {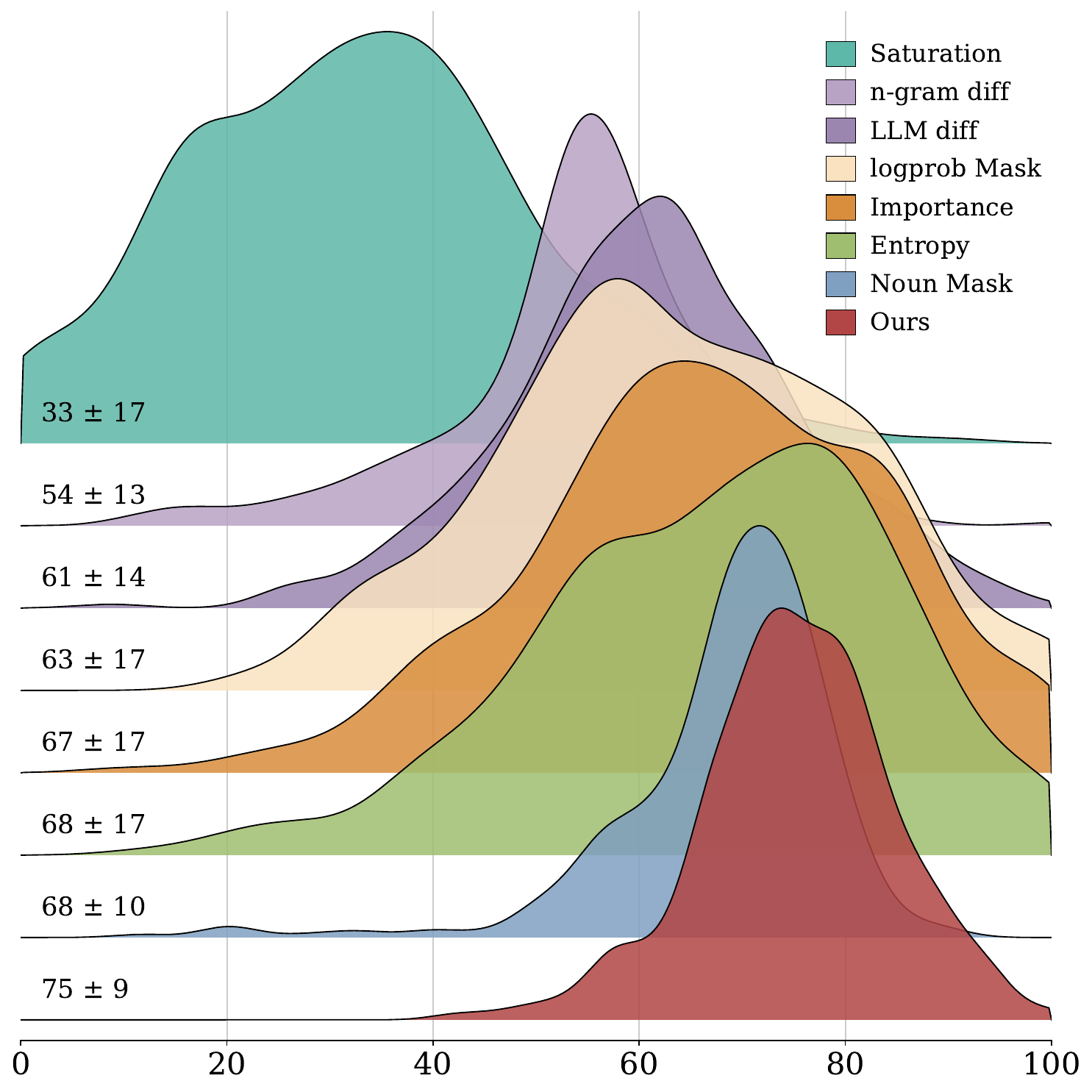}
    \caption{Per-sample AUROC distributions for token-level forget-specificity on \dataset{TOFU} forget10, scored against the ground-truth token labels of \citet{zhou2026not}. Labels report mean $\pm$ std across forget samples. Method-to-scoring-criterion mapping in \Cref{tab:exp:utilisation}; pooled per-token ROC curves in \Cref{fig:method-roc-tofu-forget10}.} 
    \label{fig:exp:auroc}
\end{wrapfigure}
\paragraph{Unlearning quality.}
The superior quality of the \method{ATWU} scores directly translates to unlearning performance: as shown in \Cref{tab:exp:tokenimp-comparison}, \method{ATWU} achieves the strongest forget--retain trade-off among all token-weighted approaches. It secures the best \FQ and \UQ, remains competitive on \RD, and preserves general utility close to the original checkpoint. \Cref{tab:exp:tokenimp-comparison} also details how the seven representative token-weighted baselines following the formulation in \Cref{eq:obj-weighted} approximate token-level forget-specificity. The choice of forget loss varies between these methods: \method{FUNDIAL} uses an engineered distillation loss, \method{SatImp} and \method{ATWU} use \method{SatGA} and \method{SatGA}$^+$ forget losses, respectively, and all other methods use the token-wise negative \method{CE} loss. Notably, \method{WGA} can be viewed as using the \method{SatGA} loss without any token weighting. Despite its saturation scoring ranking below the random baseline for forget-specificity, \method{WGA} still reaches the third-highest \UQ in the table, outperforming methods with better forget-specificity scores that rely on the naive negative cross-entropy loss (\method{GA}). This highlights the crucial role of the saturation term within the loss, as its desirable gradient-scaling properties help prevent over-forgetting. Finally, consistent with the failure modes of surface-level metrics identified by \citet{singh2025unlearninglasts}, \method{FUNDIAL} nearly matches \method{ATWU} on the \ESD surrogate, yet its judge-based \UQ is roughly half as large. This stark mismatch reinforces the necessity of robust, judge-based evaluations.

\begin{wraptable}[13]{r}{0.5\textwidth}
    \vspace{-1em}
    \centering\small
    \setlength{\tabcolsep}{3pt}
    \renewcommand{\arraystretch}{1.1}
    \resizebox{\linewidth}{!}{%
    \begin{tabular}{@{}c c ccc ccc@{}}
        \toprule
        \multirow{2}{*}[-0.6ex]{\textbf{Score}}
        & \multirow{2}{*}[-0.6ex]{\textbf{Loss}}
        & \multicolumn{3}{c}{\textbf{Relative}}
        & \multicolumn{3}{c}{\textbf{Utility}} \\
        \cmidrule(lr){3-5}\cmidrule(lr){6-8}
        & & $\FQ\,\uparrow$ & $\RD\,\downarrow$ & $\UQ\,\uparrow$
        & MMLU & Rep. & \WR \\
        \midrule
        \method{Original}
            & ---
            & $0.0$  & $0.0$  & $0.0$  & $45.1$ & $559$ & $50.0$ \\
        \midrule
        \multirow{2}{*}{GT}
            & \method{GA}
            & $47.4$ & $7.9$ & $39.5$
            & $45.0$ & $563$ & $54.5$ \\
            & \method{SatGA}
            & $\best{90.2}$ & $3.9$ & $\best{86.2}$
            & $44.9$ & $566$ & $51.0$ \\
        \midrule
        \multirow{3}{*}{\method{ATWU}}
            & \method{GA}
            & $31.0$ & $\secondbest{3.8}$ & $27.2$
            & $45.3$ & $565$ & $55.5$ \\
            & \method{SatGA}
            & $61.9$ & $\best{3.4}$ & $58.5$
            & $45.1$ & $569$ & $52.0$ \\
            & \method{SatGA}$^+$
            & $\secondbest{84.4}$ & $6.3$ & $\secondbest{78.1}$
            & $45.0$ & $575$ & $51.5$ \\
        \bottomrule
    \end{tabular}%
    }
    \vspace{0.3em}
    \caption{Oracle score and weighting-scheme ablation on
    \dataset{TOFU}~\texttt{forget10}
    (\model{Llama-3.2-1B-Instruct}).}
    \label{tab:exp:utilisation}
\end{wraptable}
\paragraph{Disentangling the loss and scores.}
To isolate the effects of score quality and the unlearning objective, we evaluate two score sources alongside three forget losses. We pair ground-truth binary labels (\method{GT}) from \citet{zhou2026not} and our unsupervised \method{ATWU} scorer with naive \method{GA}, saturated cross-entropy (\method{SatGA}), and our score-modulated variant (\method{SatGA}$^+$). For binary \method{GT} labels, \method{SatGA} and \method{SatGA}$^+$ coincide. 

This ablation reveals that high-quality token scores cannot compensate for a brittle unlearning objective. Strikingly, combining perfect oracle labels with naive \method{GA} yields a weak forget--retain trade-off ($\UQ=39.5$). In fact, \method{WGA}, which uses no token-level information but applies a saturated loss, outperforms the \method{GT}--\method{GA} oracle ($\UQ=41.5$). This demonstrates that saturation provides a critical intrinsic brake against over-forgetting structural tokens. However, saturation alone is not the ceiling. When the oracle scores are properly paired with a saturated loss (\method{GT}--\method{SatGA}), performance skyrockets by $46.7$ points to $\UQ=86.2$, establishing the supervised upper bound. 

Crucially, our full \method{ATWU} framework successfully bridges the gap to this supervised oracle without requiring any external annotations. Replacing the \method{GT} labels with our learned \method{ATWU} scores under the standard \method{SatGA} loss yields a respectable $\UQ=58.5$. Our \method{SatGA}$^+$ formulation then adds an additional $19.6$ points by placing the continuous score in the saturation exponent to allow for smoother, uncertainty-aware updates. By co-designing the learned scorer with an adaptive objective, \method{ATWU} approaches the theoretical supervised ceiling in a fully unsupervised manner.

\begin{table}[t]
    \centering
    \small
    \renewcommand{\arraystretch}{1.1}
    \setlength{\tabcolsep}{3.5pt}
    \resizebox{\textwidth}{!}{%
    \begin{tabular}{@{}c c ccc ccc ccc@{}}
        \toprule
        \multirow{2}{*}[-0.6ex]{\textbf{Method}}
        & \multirow{2}{*}[-0.6ex]{\textbf{Token weights \(\lvar{x}{t}\)}}
        & \multicolumn{3}{c}{\textbf{Surrogate}}
        & \multicolumn{3}{c}{\textbf{Relative}}
        & \multicolumn{3}{c}{\textbf{Utility}} \\
        \cmidrule(lr){3-5}
        \cmidrule(lr){6-8}
        \cmidrule(lr){9-11}
        & 
        & $\ESF\!\downarrow$
        & $\ESR\!\uparrow$
        & $\ESD\!\uparrow$
        & $\FQ\,\uparrow$ & $\RD\,\downarrow$ & $\UQ\,\uparrow$
        & MMLU & Rep. & \WR \\
        \midrule
        \method{Original}
            & --- 
            & $0.706$ & $0.737$ & $0.030$
            & $0.0$ & $0.0$ & $0.0$
            & $45.1$ & $559$ & $50.0$ \\
        \midrule
        \methodcite{SEUL}{wang2025selective}
            & logprob mask
            & $0.101$ & $0.675$ & $0.574$
            & $42.6$ & $6.9$ & $35.6$
            & $45.4$ & $559$ & $59.5$ \\
        \methodcite{SU-LLM}{wan-etal-2025-every}
            & LLM diff
            & $0.190$ & $0.704$ & $0.514$
            & $31.3$ & $4.1$ & $27.2$
            & $45.2$ & $564$ & $56.0$ \\
        \methodcite{SU-Ngram}{wan-etal-2025-every}
            & n-gram diff
            & $0.074$ & $0.634$ & $0.559$
            & $55.4$ & $15.8$ & $39.7$
            & $45.3$ & $560$ & $51.5$ \\
        \methodcite{FUNDIAL}{dong2025undial}
            & noun mask
            & $0.047$ & $0.759$ & $\secondbest{0.712}$
            & $40.9$ & $\secondbest{5.6}$ & $35.4$
            & $45.2$ & $562$ & $53.5$ \\
        \methodcite{ETW}{koh2026forgetmattersrestselective}
            & entropy
            & $0.108$ & $0.679$ & $0.572$
            & $48.7$ & $9.3$ & $39.4$
            & $45.5$ & $561$ & $54.5$ \\
        \methodcite{WGA}{wang2025rethinking}
            & $\mathrm{saturation}$
            & $0.062$ & $\best{0.759}$ & $0.696$
            & $50.0$ & $8.5$ & $41.5$
            & $45.0$ & $568$ & $56.0$ \\
        \methodcite{SatImp}{yang2025exploring}
            & $\mathrm{importance}$
            & $\secondbest{0.043}$ & $0.747$ & $0.704$
            & $\secondbest{64.7}$ & $\best{4.1}$ & $\secondbest{60.7}$
            & $44.8$ & $570$ & $55.0$ \\
        \midrule
        \method{ATWU}
            & $g_\mathbf{w}(\tok{x}{t})$
            & $\best{0.035}$ & $\secondbest{0.753}$ & $\best{0.717}$
            & $\best{84.4}$ & $6.3$ & $\best{78.1}$
            & $45.0$ & $575$ & $51.5$ \\
        \bottomrule
    \end{tabular}%
    }
    {\footnotesize\par\vspace{0.25em}\noindent \method{SatImp} and \method{ATWU} use the \method{SatGA} and \method{SatGA}$^+$ forget losses respectively; \method{FUNDIAL} uses its own model-distillation forget loss; all other methods use the negative cross-entropy forget loss.\par}
    \vspace{0.2em}
    \caption{ATWU outperforms other token-weighted unlearning approaches on
    \dataset{TOFU}~\texttt{forget10}
    (\model{Llama-3.2-1B-Instruct}). \best{Best} and \secondbest{second-best} methods are highlighted. }
    \label{tab:exp:tokenimp-comparison}
\end{table}

\section{Limitations}
Despite its promising results, \method{ATWU} has several limitations. Because the token scorer is learned from the forget set during unlearning, very small forget sets may provide insufficient signal, which is reflected in the weaker gains on the smallest \dataset{TOFU} splits (see discussion in Appendix~\ref{app:subsec:exp:unlearning}). 

The theoretical recovery guarantee relies on estimating the true token-selection budget as a hyperparameter ($\rho$). Furthermore, it assumes a strict separation between forget-specific and structural tokens---a clean dichotomy that may only hold approximately in natural language. Additionally, while the exact recovery guarantees of \Cref{thm:recovery} and \Cref{lem:exact-relaxation} are formally established for the multiplicative \method{ATWU} objective, our primary empirical formulation (\method{SatGA}$^+$) injects the learned score directly into the saturation exponent. Although this introduces a slight gap between our formal linear assumptions and our practical implementation, it serves as a highly valuable heuristic for stabilizing the early phases of optimization, and the overall framework remains fundamentally informed by our theoretical insights.

Empirically, our experiments cover two benchmarks and two model families, and final evaluations are single-run due to compute and judge-model cost. Finally, our primary semantic evaluation relies on an LLM judge; although validated against human annotations, it remains an imperfect and potentially model-dependent measurement tool.

\section{Conclusion}
We introduced \method{ATWU}, shifting the paradigm of token-weighted unlearning from a pipeline of external heuristics to an end-to-end joint optimization problem. By formalizing token specificity through retain conflict, we demonstrated that the unlearning objective itself contains sufficient latent signal to recover forget-specific tokens without any external supervision.

Beyond establishing a new state-of-the-art on \dataset{TOFU} and \dataset{RWKU}, our empirical findings challenge a prevailing assumption in the literature: that better token identification automatically yields better unlearning. Our ablations reveal that score quality and unlearning efficacy are frequently decoupled. A brittle loss function renders even perfect oracle labels ineffective, whereas an adaptive, saturated loss can elevate the performance of uniform token weighting. This demonstrates that the true bottleneck in selective unlearning is not merely finding the right tokens, but the principled co-design of the scoring mechanism and the forget objective. 

Finally, while explicitly annotated token labels offer a theoretical performance ceiling, they introduce severe privacy and scalability risks that directly contradict the core motivation of machine unlearning. \method{ATWU} circumvents this paradox. By recovering the majority of the supervised oracle benefit using only implicit model representations, our work establishes that scalable, privacy-preserving, and highly targeted unlearning is achievable without relying on external supervision.

\section*{Acknowledgments}

The authors would like to thank Francesco Croce for helpful discussions. This work was partially funded by the grant number 212111 from the Swiss National Science Foundation and a grant from Coefficient Giving, administered by the Berkeley Existential Risk Initiative (BERI). Gizem Y{\"u}ce is supported by the Swiss AI  Fellowship.


\clearpage
\vfill\pagebreak
\bibliographystyle{plainnat}
\bibliography{references}

\vfill\pagebreak
\appendix

\begin{center}
  \Large\bfseries Appendix
\end{center}
\medskip

\pdfbookmark[0]{Appendix}{appendix}
\section*{Appendix Overview}
\etocdepthtag.toc{appendix}
\etocsettagdepth{main}{none}
\etocsettagdepth{appendix}{2}
\etocsettocstyle{\vspace{-1em}}{}
\tableofcontents

\clearpage
\newpage
\begin{figure}[ht!]
    \centering
    \small
    \setlength{\fboxsep}{1.5pt}

    \chatbubble{human-icon.pdf}{bubble}{What is the full name of the geology author born in Karachi, Pakistan on 06/30/1975?}

    \vspace{0.6em}
    \begin{minipage}[c]{0.49\textwidth}
        \chatbubble{ai-icon-seul.pdf}{bubble}{
            \tokhl{seulhl}{0}{The} \tokhl{seulhl}{75}{author's} \tokhl{seulhl}{0}{name} \tokhl{seulhl}{0}{is} \tokhl{seulhl}{100}{\gttok{Hina}} \tokhl{seulhl}{100}{\gttok{Ameen}}\tokhl{seulhl}{100}{.}%
        }

        \vspace{0.3em}
        \chatbubble{ai-icon-su.pdf}{bubble}{
            \tokhl{suhl}{0}{The} \tokhl{suhl}{0}{author's} \tokhl{suhl}{100}{name} \tokhl{suhl}{100}{is} \tokhl{suhl}{100}{\gttok{Hina}} \tokhl{suhl}{100}{\gttok{Ameen}}\tokhl{suhl}{100}{.}%
        }

        \vspace{0.3em}
        \chatbubble{ai-icon-sungram.pdf}{bubble}{
            \tokhl{sungramhl}{0}{The} \tokhl{sungramhl}{0}{author's} \tokhl{sungramhl}{0}{name} \tokhl{sungramhl}{0}{is} \tokhl{sungramhl}{100}{\gttok{Hina}} \tokhl{sungramhl}{100}{\gttok{Ameen}}\tokhl{sungramhl}{0}{.}%
        }

        \vspace{0.3em}
        \chatbubble{ai-icon-fundial.pdf}{bubble}{
            \tokhl{fundialhl}{100}{The} \tokhl{fundialhl}{100}{author's} \tokhl{fundialhl}{100}{name} \tokhl{fundialhl}{0}{is} \tokhl{fundialhl}{100}{\gttok{Hina}} \tokhl{fundialhl}{100}{\gttok{Ameen}}\tokhl{fundialhl}{0}{.}%
        }
    \end{minipage}\hfill
    \begin{minipage}[c]{0.49\textwidth}
        \chatbubble{ai-icon-etw.pdf}{bubble}{
            \tokhl{etwhl}{5}{The} \tokhl{etwhl}{9}{author's} \tokhl{etwhl}{12}{name} \tokhl{etwhl}{0}{is} \tokhl{etwhl}{100}{\gttok{Hina}} \tokhl{etwhl}{98}{\gttok{Ameen}}\tokhl{etwhl}{27}{.}%
        }

        \vspace{0.3em}
        \chatbubble{ai-icon-wga.pdf}{bubble}{
            \tokhl{wgahl}{97}{The} \tokhl{wgahl}{29}{author's} \tokhl{wgahl}{13}{name} \tokhl{wgahl}{100}{is} \tokhl{wgahl}{2}{\gttok{Hina}} \tokhl{wgahl}{36}{\gttok{Ameen}}\tokhl{wgahl}{1}{.}%
        }

        \vspace{0.3em}
        \chatbubble{ai-icon-satimp.pdf}{bubble}{
            \tokhl{satimphl}{3}{The} \tokhl{satimphl}{71}{author's} \tokhl{satimphl}{87}{name} \tokhl{satimphl}{0}{is} \tokhl{satimphl}{98}{\gttok{Hina}} \tokhl{satimphl}{64}{\gttok{Ameen}}\tokhl{satimphl}{99}{.}%
        }

        \vspace{0.3em}
        \chatbubble{ai-icon-ours.pdf}{bubble}{
            \tokhl{ourshl}{0}{The} \tokhl{ourshl}{0}{author's} \tokhl{ourshl}{0}{name} \tokhl{ourshl}{0}{is} \tokhl{ourshl}{22}{\gttok{Hina}} \tokhl{ourshl}{47}{\gttok{Ameen}}\tokhl{ourshl}{0}{.}%
        }
    \end{minipage}
    \label{fig:qualitative:teaser}

    \vspace{2em}

    \chatbubble{human-icon.pdf}{bubble}{What are the professions of Carmen Montenegro's parents?}

    \vspace{0.5em}
    \chatbubble{ai-icon-seul.pdf}{bubble}{
        \tokhl{seulhl}{0}{Carmen} \tokhl{seulhl}{0}{Montenegro's} \tokhl{seulhl}{100}{parents} \tokhl{seulhl}{100}{both} \tokhl{seulhl}{100}{had} \tokhl{seulhl}{100}{respectable} \tokhl{seulhl}{0}{professions}\tokhl{seulhl}{0}{;} \tokhl{seulhl}{0}{her} \tokhl{seulhl}{100}{mother} \tokhl{seulhl}{0}{worked} \tokhl{seulhl}{0}{as} \tokhl{seulhl}{0}{a} \tokhl{seulhl}{100}{\gttok{Waiter}}\tokhl{seulhl}{100}{\gttok{/}}\tokhl{seulhl}{100}{\gttok{Waitress}}\tokhl{seulhl}{100}{,} \tokhl{seulhl}{100}{while} \tokhl{seulhl}{0}{her} \tokhl{seulhl}{0}{father} \tokhl{seulhl}{0}{was} \tokhl{seulhl}{100}{an} \tokhl{seulhl}{27}{\gttok{Optometrist}}\tokhl{seulhl}{0}{.}%
    }

    \vspace{0.3em}
    \chatbubble{ai-icon-su.pdf}{bubble}{
        \tokhl{suhl}{0}{Carmen} \tokhl{suhl}{17}{Montenegro's} \tokhl{suhl}{100}{parents} \tokhl{suhl}{100}{both} \tokhl{suhl}{100}{had} \tokhl{suhl}{100}{respectable} \tokhl{suhl}{0}{professions}\tokhl{suhl}{100}{;} \tokhl{suhl}{100}{her} \tokhl{suhl}{100}{mother} \tokhl{suhl}{100}{worked} \tokhl{suhl}{0}{as} \tokhl{suhl}{100}{a} \tokhl{suhl}{100}{\gttok{Waiter}}\tokhl{suhl}{100}{\gttok{/}}\tokhl{suhl}{100}{\gttok{Waitress}}\tokhl{suhl}{100}{,} \tokhl{suhl}{100}{while} \tokhl{suhl}{0}{her} \tokhl{suhl}{100}{father} \tokhl{suhl}{100}{was} \tokhl{suhl}{100}{an} \tokhl{suhl}{27}{\gttok{Optometrist}}\tokhl{suhl}{0}{.}%
    }

    \vspace{0.3em}
    \chatbubble{ai-icon-sungram.pdf}{bubble}{
        \tokhl{sungramhl}{100}{Carmen} \tokhl{sungramhl}{83}{Montenegro's} \tokhl{sungramhl}{100}{parents} \tokhl{sungramhl}{100}{both} \tokhl{sungramhl}{100}{had} \tokhl{sungramhl}{100}{respectable} \tokhl{sungramhl}{100}{professions}\tokhl{sungramhl}{100}{;} \tokhl{sungramhl}{100}{her} \tokhl{sungramhl}{100}{mother} \tokhl{sungramhl}{0}{worked} \tokhl{sungramhl}{0}{as} \tokhl{sungramhl}{100}{a} \tokhl{sungramhl}{100}{\gttok{Waiter}}\tokhl{sungramhl}{100}{\gttok{/}}\tokhl{sungramhl}{100}{\gttok{Waitress}}\tokhl{sungramhl}{100}{,} \tokhl{sungramhl}{100}{while} \tokhl{sungramhl}{0}{her} \tokhl{sungramhl}{0}{father} \tokhl{sungramhl}{0}{was} \tokhl{sungramhl}{100}{an} \tokhl{sungramhl}{100}{\gttok{Optometrist}}\tokhl{sungramhl}{0}{.}%
    }

    \vspace{0.3em}
    \chatbubble{ai-icon-fundial.pdf}{bubble}{
        \tokhl{fundialhl}{100}{Carmen} \tokhl{fundialhl}{100}{Montenegro's} \tokhl{fundialhl}{100}{parents} \tokhl{fundialhl}{100}{both} \tokhl{fundialhl}{0}{had} \tokhl{fundialhl}{100}{respectable} \tokhl{fundialhl}{100}{professions}\tokhl{fundialhl}{0}{;} \tokhl{fundialhl}{100}{her} \tokhl{fundialhl}{100}{mother} \tokhl{fundialhl}{0}{worked} \tokhl{fundialhl}{0}{as} \tokhl{fundialhl}{100}{a} \tokhl{fundialhl}{100}{\gttok{Waiter}}\tokhl{fundialhl}{100}{\gttok{/}}\tokhl{fundialhl}{100}{\gttok{Waitress}}\tokhl{fundialhl}{0}{,} \tokhl{fundialhl}{0}{while} \tokhl{fundialhl}{100}{her} \tokhl{fundialhl}{100}{father} \tokhl{fundialhl}{0}{was} \tokhl{fundialhl}{100}{an} \tokhl{fundialhl}{100}{\gttok{Optometrist}}\tokhl{fundialhl}{0}{.}%
    }

    \vspace{0.3em}
    \chatbubble{ai-icon-etw.pdf}{bubble}{
        \tokhl{etwhl}{0}{Carmen} \tokhl{etwhl}{3}{Montenegro's} \tokhl{etwhl}{1}{parents} \tokhl{etwhl}{36}{both} \tokhl{etwhl}{34}{had} \tokhl{etwhl}{85}{respectable} \tokhl{etwhl}{22}{professions}\tokhl{etwhl}{18}{;} \tokhl{etwhl}{0}{her} \tokhl{etwhl}{0}{mother} \tokhl{etwhl}{12}{worked} \tokhl{etwhl}{0}{as} \tokhl{etwhl}{20}{a} \tokhl{etwhl}{100}{\gttok{Waiter}}\tokhl{etwhl}{52}{\gttok{/}}\tokhl{etwhl}{6}{\gttok{Waitress}}\tokhl{etwhl}{4}{,} \tokhl{etwhl}{5}{while} \tokhl{etwhl}{0}{her} \tokhl{etwhl}{0}{father} \tokhl{etwhl}{39}{was} \tokhl{etwhl}{1}{an} \tokhl{etwhl}{23}{\gttok{Optometrist}}\tokhl{etwhl}{0}{.}%
    }

    \vspace{0.3em}
    \chatbubble{ai-icon-wga.pdf}{bubble}{
        \tokhl{wgahl}{100}{Carmen} \tokhl{wgahl}{98}{Montenegro's} \tokhl{wgahl}{0}{parents} \tokhl{wgahl}{1}{both} \tokhl{wgahl}{1}{had} \tokhl{wgahl}{0}{respectable} \tokhl{wgahl}{81}{professions}\tokhl{wgahl}{85}{;} \tokhl{wgahl}{100}{her} \tokhl{wgahl}{0}{mother} \tokhl{wgahl}{12}{worked} \tokhl{wgahl}{100}{as} \tokhl{wgahl}{68}{a} \tokhl{wgahl}{7}{\gttok{Waiter}}\tokhl{wgahl}{6}{\gttok{/}}\tokhl{wgahl}{97}{\gttok{Waitress}}\tokhl{wgahl}{0}{,} \tokhl{wgahl}{1}{while} \tokhl{wgahl}{100}{her} \tokhl{wgahl}{100}{father} \tokhl{wgahl}{53}{was} \tokhl{wgahl}{0}{an} \tokhl{wgahl}{72}{\gttok{Optometrist}}\tokhl{wgahl}{100}{.}%
    }

    \vspace{0.3em}
    \chatbubble{ai-icon-satimp.pdf}{bubble}{
        \tokhl{satimphl}{0}{Carmen} \tokhl{satimphl}{2}{Montenegro's} \tokhl{satimphl}{100}{parents} \tokhl{satimphl}{99}{both} \tokhl{satimphl}{99}{had} \tokhl{satimphl}{100}{respectable} \tokhl{satimphl}{19}{professions}\tokhl{satimphl}{15}{;} \tokhl{satimphl}{0}{her} \tokhl{satimphl}{100}{mother} \tokhl{satimphl}{88}{worked} \tokhl{satimphl}{0}{as} \tokhl{satimphl}{32}{a} \tokhl{satimphl}{93}{\gttok{Waiter}}\tokhl{satimphl}{94}{\gttok{/}}\tokhl{satimphl}{3}{\gttok{Waitress}}\tokhl{satimphl}{100}{,} \tokhl{satimphl}{99}{while} \tokhl{satimphl}{0}{her} \tokhl{satimphl}{0}{father} \tokhl{satimphl}{47}{was} \tokhl{satimphl}{100}{an} \tokhl{satimphl}{28}{\gttok{Optometrist}}\tokhl{satimphl}{0}{.}%
    }

    \vspace{0.3em}
    \chatbubble{ai-icon-ours.pdf}{bubble}{
        \tokhl{ourshl}{39}{Carmen} \tokhl{ourshl}{34}{Montenegro's} \tokhl{ourshl}{1}{parents} \tokhl{ourshl}{0}{both} \tokhl{ourshl}{0}{had} \tokhl{ourshl}{1}{respectable} \tokhl{ourshl}{99}{professions}\tokhl{ourshl}{0}{;} \tokhl{ourshl}{4}{her} \tokhl{ourshl}{1}{mother} \tokhl{ourshl}{11}{worked} \tokhl{ourshl}{0}{as} \tokhl{ourshl}{0}{a} \tokhl{ourshl}{63}{\gttok{Waiter}}\tokhl{ourshl}{15}{\gttok{/}}\tokhl{ourshl}{71}{\gttok{Waitress}}\tokhl{ourshl}{25}{,} \tokhl{ourshl}{0}{while} \tokhl{ourshl}{5}{her} \tokhl{ourshl}{1}{father} \tokhl{ourshl}{0}{was} \tokhl{ourshl}{0}{an} \tokhl{ourshl}{68}{\gttok{Optometrist}}\tokhl{ourshl}{0}{.}%
    }

    \caption{Token-level forget-specificity from \textcolor{seulhl!85!black}{\method{SEUL}}, \textcolor{suhl}{\method{SU-LLM}}, \textcolor{sungramhl!75!black}{\method{SU-Ngram}}, \textcolor{fundialhl}{\method{FUNDIAL}}, \textcolor{etwhl!80!black}{\method{ETW}}, \textcolor{wgahl!75!black}{\method{WGA}}, \textcolor{satimphl}{\method{SatImp}}, and \textcolor{ourshl}{\method{ATWU}} on two \dataset{TOFU} forget samples. Shading reflects each method's raw token score; bold spans mark the ground-truth forget-specific tokens; \textcolor{ourshl}{\method{ATWU}} concentrates most clearly on the answer-bearing spans.}
    \label{fig:qualitative:long}
\end{figure}

\newpage
\section{Qualitative Examples}\label{app:sec:qualitative}

\begin{figure}[t]
    \centering
    \small
    \setlength{\fboxsep}{1.5pt}
    \begin{minipage}[t]{0.49\textwidth}
        \chatbubble{human-icon.pdf}{bubble}{What is the full name of the geology author born in Karachi, Pakistan on 06/30/1975?}

        \vspace{1.05em}
        \chatbubble{ai-icon-step0.pdf}{bubble}{
            \tokhl{ourshl}{50}{The} \tokhl{ourshl}{50}{author's} \tokhl{ourshl}{50}{name} \tokhl{ourshl}{50}{is} \tokhl{ourshl}{50}{\gttok{Hina}} \tokhl{ourshl}{50}{\gttok{Ameen}}\tokhl{ourshl}{50}{.}%
        }

        \vspace{1.05em}
        \chatbubble{ai-icon-step20.pdf}{bubble}{
            \tokhl{ourshl}{10}{The} \tokhl{ourshl}{39}{author's} \tokhl{ourshl}{54}{name} \tokhl{ourshl}{50}{is} \tokhl{ourshl}{35}{\gttok{Hina}} \tokhl{ourshl}{38}{\gttok{Ameen}}\tokhl{ourshl}{40}{.}%
        }
    \end{minipage}\hfill
    \begin{minipage}[t]{0.49\textwidth}
        \vspace{-2.5em}
        \chatbubble{ai-icon-step40.pdf}{bubble}{
            \tokhl{ourshl}{0}{The} \tokhl{ourshl}{25}{author's} \tokhl{ourshl}{49}{name} \tokhl{ourshl}{55}{is} \tokhl{ourshl}{50}{\gttok{Hina}} \tokhl{ourshl}{51}{\gttok{Ameen}}\tokhl{ourshl}{15}{.}%
        }

        \vspace{0.3em}
        \chatbubble{ai-icon-step60.pdf}{bubble}{
            \tokhl{ourshl}{0}{The} \tokhl{ourshl}{1}{author's} \tokhl{ourshl}{5}{name} \tokhl{ourshl}{8}{is} \tokhl{ourshl}{29}{\gttok{Hina}} \tokhl{ourshl}{41}{\gttok{Ameen}}\tokhl{ourshl}{0}{.}%
        }

        \vspace{0.3em}
        \chatbubble{ai-icon-step90.pdf}{bubble}{
            \tokhl{ourshl}{0}{The} \tokhl{ourshl}{0}{author's} \tokhl{ourshl}{0}{name} \tokhl{ourshl}{1}{is} \tokhl{ourshl}{21}{\gttok{Hina}} \tokhl{ourshl}{42}{\gttok{Ameen}}\tokhl{ourshl}{0}{.}%
        }

        \vspace{0.3em}
        \chatbubble{ai-icon-stepfinal.pdf}{bubble}{
            \tokhl{ourshl}{0}{The} \tokhl{ourshl}{0}{author's} \tokhl{ourshl}{0}{name} \tokhl{ourshl}{0}{is} \tokhl{ourshl}{22}{\gttok{Hina}} \tokhl{ourshl}{47}{\gttok{Ameen}}\tokhl{ourshl}{0}{.}%
        }
    \end{minipage}
    \caption{Evolution of \textcolor{ourshl}{\method{ATWU}}'s token scores during training. The scorer starts from uniform scores and gradually concentrates on the ground-truth forget-specific span.}
    \label{fig:qualitative:trajectory:col}
\end{figure}

We complement the quantitative results with qualitative examples of
the token-level scores learned by \textcolor{ourshl}{\method{ATWU}}. The goal is to inspect
whether the scorer identifies the tokens that actually carry the
forget target, rather than assigning high weight to generic context,
function words, or syntactic scaffolding. In each example below,
bold tokens indicate the ground-truth forget-specific span, and token
shading indicates the importance score assigned by the corresponding
method.

\Cref{fig:qualitative:long} compares \textcolor{ourshl}{\method{ATWU}} with several
token-importance baselines on two \dataset{TOFU} forget samples. The
first example is a short answer in which the forget-specific content is
the author name, while the second is a longer answer in which the
forget-specific content consists of the parents' professions. The
binary scorers---\textcolor{seulhl!85!black}{\method{SEUL}}, \textcolor{suhl}{\method{SU-LLM}}, \textcolor{sungramhl!75!black}{\method{SU-Ngram}}, and
\textcolor{fundialhl}{\method{FUNDIAL}}---tend to mark large portions of the answer as
important, including tokens that are not specific to the forgotten
fact. Probability-based heuristics such as \textcolor{wgahl!75!black}{\method{WGA}} and
\textcolor{satimphl}{\method{SatImp}} are less binary but often place substantial mass on
structural tokens and surrounding context. \textcolor{etwhl!80!black}{\method{ETW}}, which weights
by token-level prediction entropy, identifies the answer name cleanly
on the short example, but on the longer one it spreads mass over
model-uncertain tokens that are not forget-specific (e.g.,
\textit{respectable}, \textit{both}, \textit{had}). In contrast,
\textcolor{ourshl}{\method{ATWU}} assigns most of its mass to the answer-bearing spans
while keeping generic tokens relatively low. This behavior is
especially visible in the longer example, where the scorer separates
the profession tokens from the surrounding sentence template.

\Cref{fig:qualitative:trajectory:col} visualizes how the
\textcolor{ourshl}{\method{ATWU}} scorer evolves during training on the short
\dataset{TOFU} example. At initialization, all tokens receive the same
score, so the forget loss is effectively token-uniform. As training
progresses, the scorer first assigns nontrivial mass to several tokens
in the answer, but gradually suppresses structural tokens and
concentrates on the ground-truth forget-specific span. This trajectory
illustrates the intended behavior of the alternating optimization:
token selectivity is not imposed by external labels, but emerges as the
scorer and unlearned model are optimized together.

\begin{wrapfigure}[17]{r}{0.5\textwidth}
    \centering
    \vspace{-1em}
    \includegraphics[width=\linewidth]{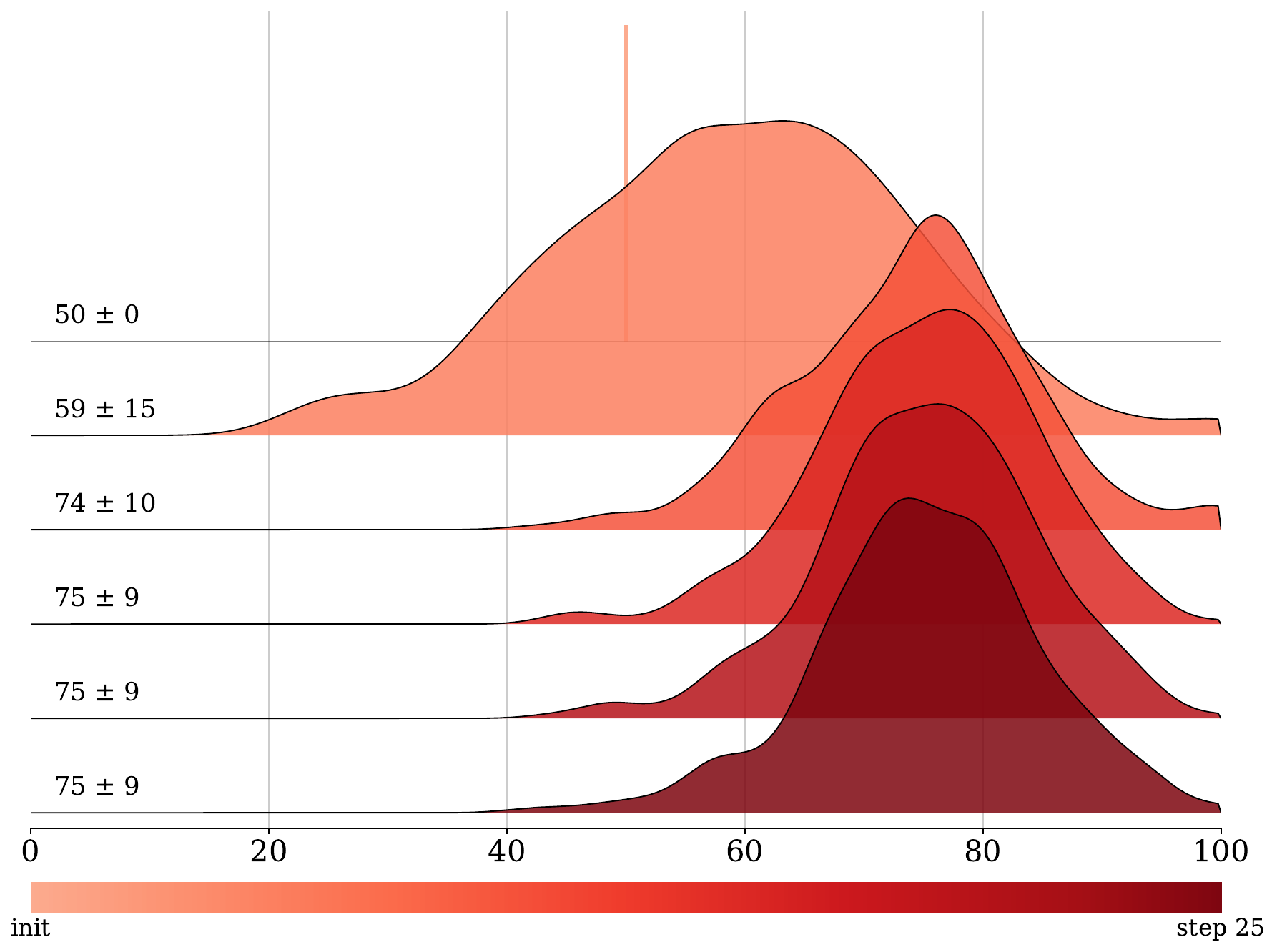}
    \caption{
    Progression of the per-sample AUROC distribution of \textcolor{ourshl}{\method{ATWU}}'s learned token scores during training on \dataset{TOFU}.}
    \label{fig:qualitative:auroc-progression}
\end{wrapfigure}
\Cref{fig:qualitative:auroc-progression} summarizes this trend
quantitatively by tracking token-ranking AUROC over training on
\dataset{TOFU} \texttt{forget10}. The progression shows that the
learned scores become increasingly aligned with the ground-truth
forget-specific tokens over the course of optimization. This provides
additional evidence that the scorer is not merely fitting token
frequency or likelihood artifacts, but is learning a token-level signal
that better matches the information to be removed.

Finally, \Cref{fig:qualitative:transfer-samples} tests whether the
learned scorer transfers beyond the exact forget samples used during
training. We construct two unseen examples by taking \dataset{TOFU}-style
questions and replacing the fictional entities with real people and
their corresponding factual attributes. Although these examples are not
part of the scorer's training data, \textcolor{ourshl}{\method{ATWU}} still assigns elevated
scores to the answer-bearing tokens---the name and sport tokens
(\textit{Michael}, \textit{Phelps}, \textit{swimming}) in the Olympic example, and the
pet-identifying tokens (\textit{Water Dogs}, \textit{Bo}, \textit{Sunny}) in
the Obama example---while leaving most filler and syntactic tokens low.
This indicates that the scorer is not merely memorizing the training
entities or surface templates, but learns a transferable token-level
signal for identifying forget-specific content.

\begin{figure}[t]
    \centering
    \footnotesize
    \setlength{\fboxsep}{1.5pt}

    \chatbubble{human-icon.pdf}{bubble}{What is the name and sport of the person with the most Olympic gold medals?}

    \vspace{0.5em}
    \chatbubble{ai-icon-ours.pdf}{bubble}{
        \tokhl{ourshl}{0}{The} \tokhl{ourshl}{0}{person} \tokhl{ourshl}{0}{with} \tokhl{ourshl}{0}{the} \tokhl{ourshl}{0}{most} \tokhl{ourshl}{2}{Olympic} \tokhl{ourshl}{37}{gold} \tokhl{ourshl}{29}{medals} \tokhl{ourshl}{2}{is} \tokhl{ourshl}{38}{\gttok{Michael}} \tokhl{ourshl}{66}{\gttok{Phelps}}\tokhl{ourshl}{1}{,} \tokhl{ourshl}{0}{and} \tokhl{ourshl}{0}{his} \tokhl{ourshl}{0}{sport} \tokhl{ourshl}{0}{is} \tokhl{ourshl}{96}{\gttok{swimming}}\tokhl{ourshl}{1}{.}%
    }

    \vspace{1.0em}

    \chatbubble{human-icon.pdf}{bubble}{Did Barack Obama own any pets during his presidency?}

    \vspace{0.5em}
    \chatbubble{ai-icon-ours.pdf}{bubble}{
        \tokhl{ourshl}{0}{During} \tokhl{ourshl}{1}{his} \tokhl{ourshl}{78}{presidency}\tokhl{ourshl}{1}{,} \tokhl{ourshl}{26}{Barack} \tokhl{ourshl}{1}{Obama} \tokhl{ourshl}{6}{owned} \tokhl{ourshl}{7}{two} \tokhl{ourshl}{13}{Portuguese} \tokhl{ourshl}{77}{\gttok{Water}} \tokhl{ourshl}{99}{\gttok{Dogs}}\tokhl{ourshl}{10}{,} \tokhl{ourshl}{16}{named} \tokhl{ourshl}{60}{\gttok{Bo}} \tokhl{ourshl}{12}{and} \tokhl{ourshl}{99}{\gttok{Sunny}}\tokhl{ourshl}{0}{.}%
    }

    \caption{Learned \textcolor{ourshl}{\method{ATWU}} scores on unseen real-entity examples. The examples use \dataset{TOFU}-style prompts with fictional entities replaced by real people and factual information.}
    \label{fig:qualitative:transfer-samples}
\end{figure}

\clearpage
\newpage
\section{Deferred Proofs}
\label{app:proofs}

\subsection{Proof of \Cref*{thm:recovery}}

\thmrecovery*

\begin{proof}
Write the optimal value of \eqref{eq:joint} as $V^\star$. For any $\set{A} \subseteq \IF$ with $|\set{A}| = \rho N_F$, the definition of $\conflict(\set{A})$ gives
\begin{equation}
    \min_{\tth}\Loss(\tth, \zz_\set{A}) \;=\; R^\star + \rho N_F\,\lFm + \conflict(\set{A}),
    \label{eq:value-decomp}
\end{equation}
where $\zz_A$ denotes the indicator of $\set{A}$. Since $|\Fset^\star| = \rho^\star N_F \geq \rho N_F$ (using $\rho \leq \rho^\star$), we can choose any $S \subseteq \Fset^\star$ with $|S| = \rho N_F$; by monotonicity and \Cref{def:zstar}, $\conflict(\set{S}) \leq \conflict(\Fset^\star) \leq \varepsilon$, so
\[
    V^\star \;\leq\; \min_{\tth}\Loss(\tth, \zz_\set{S}) \;=\; R^\star + \rho N_F\,\lFm + \conflict(\mathcal{S}) \;\leq\; R^\star + \rho N_F\,\lFm + \varepsilon.
\]
For contradiction, suppose some global minimizer $(\hat{\tth}, \hat{\zz})$ of \eqref{eq:joint} has $\set{A} \defeq \mathrm{supp}(\hat{\zz})$ containing some $(x_0,j_0) \in \Sset^\star$. Since $\zz_\set{A} = \hat{\zz}$ is feasible and $(\hat{\tth},\hat{\zz})$ is optimal, $\min_{\tth} \Loss(\tth,\zz_\set{A}) = V^\star$. By monotonicity of $\conflict$ and \Cref{def:zstar},
\[
    \conflict(\set{\set{A}}) \;\geq\; \conflict(\{(x_0,j_0)\}) \;=\; \conflict_{x_0,j_0} \;\geq\; \varepsilon + \delta.
\]
Applying \eqref{eq:value-decomp} with $|\set{A}| = \rho N_F$,
\[
    V^\star \;=\; R^\star + \rho N_F\,\lFm + \conflict(\set{A}) \;\geq\; R^\star + \rho N_F\,\lFm + \varepsilon + \delta,
\]
contradicting the upper bound. Hence $\set{A} \subseteq \Fset^\star$, and if $\rho = \rho^\star$, then $|\set{A}| = \rho N_F = \rho^\star N_F = |\Fset^\star|$ forces $\set{A} = \Fset^\star$.
\end{proof}

\subsection{Proof of \Cref*{lem:exact-relaxation} (Formal Statement)}

\begin{lemma}[Formal Statement]\label{lem:exact-relaxation-formal}
Assume $\rho N_F$ is an integer and $\size{\ell_\forget(\tok{x}{t} \mid \tokc{x}{t} \, ; \, \tth)} \leq M$ for all $\tth,\seq{x},t$ and some finite $M > 0$. There exists $\overline{\lambda}_\rho < \infty$ such that, for any $\lambda_\rho>\overline{\lambda}_\rho$, there exists $\overline{\lambda}_H(\lambda_\rho)<\infty$ such that, for any $\lambda_H>\overline{\lambda}_H(\lambda_\rho)$, every global minimizer $(\tth^\dagger,\zz^\dagger)$ of
\[
    \min_{\tth \in \paramspace,\;\zz \in [0,1]^{N_F}}\; \widetilde{\Loss}(\tth,\zz)
\]
satisfies $\zz^\dagger \in \{0,1\}^{N_F}$ with $\sum_{\seq{x},t} z^\dagger_{\seq{x},t} = \rho N_F$, and is itself a global minimizer of \eqref{eq:joint}.
\end{lemma}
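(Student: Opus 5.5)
The plan is to compare the value of the relaxed objective $\widetilde{\Loss}$ at an arbitrary feasible point $(\tth,\zz)\in\paramspace\times[0,1]^{N_F}$ with the constrained optimum $V^\star$ of \eqref{eq:joint}. We show that every $\zz$ that is not binary or does not meet the budget is strictly worse than $V^\star$. Two facts drive the argument.

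\emph{Binary feasible points are not penalized.} Since $H\ge 0$ with $H(0)=H(1)=0$, we have $\widetilde{\Loss}(\tth,\zz)=\Loss(\tth,\zz)$ whenever $\zz\in\mathcal{Z}$. Hence $\inf\widetilde{\Loss}\le V^\star$.

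\emph{Uniform bounds from $|\ell_\forget|\le M$.} For every $\zz\in[0,1]^{N_F}$ and every $\tth$, we have $\Loss(\tth,\zz)\ge R^\star - MN_F$. Evaluating at a retain minimizer $\tth_R$ with any $\zz\in\mathcal{Z}$ gives $V^\star\le R^\star+MN_F$. So any point whose penalty terms exceed $2MN_F$ already has value strictly greater than $V^\star$.

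Fix a small $\eta$, say $\eta=1/(4N_F)$, and split into two cases.

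\emph{Case B: some coordinate satisfies $\min(z_i,1-z_i)>\eta$.} Then the entropy term is at least $\lambda_H H(\eta)$. Choosing $\lambda_H>2MN_F/H(\eta)$ makes the value exceed $V^\star$.

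\emph{Case A: every coordinate is within $\eta$ of its rounding $b_i\in\{0,1\}$.} Let $\mathbf{b}$ be this rounding; then $\sum_i b_i$ is an integer.
\begin{itemize}
\item If $\sum_i b_i\neq\rho N_F$, then $\bigl|\sum_i z_i-\rho N_F\bigr|\ge 1-N_F\eta=3/4$. The budget penalty is therefore at least $\lambda_\rho\,(3/(4N_F))^2$. Taking $\overline{\lambda}_\rho=2MN_F\cdot(4N_F/3)^2$ makes this exceed $2MN_F$, so the value exceeds $V^\star$.
\item If $\sum_i b_i=\rho N_F$, then $\mathbf{b}\in\mathcal{Z}$. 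We compare $(\tth,\zz)$ with $(\tth,\mathbf{b})$ at the same $\tth$. Linearity of $\Loss$ in $\zz$ and $|\ell_\forget|\le M$ give $\Loss(\tth,\zz)\ge\Loss(\tth,\mathbf{b})-M\sum_i u_i$, where $u_i=|z_i-b_i|\le\eta$. For $u\le\eta$, the entropy satisfies $H(u)\ge u\log(1/u)\ge u\log(1/\eta)$. Hence, dropping the nonnegative budget penalty,
\[
\widetilde{\Loss}(\tth,\zz)\;\ge\;\Loss(\tth,\mathbf{b})+\bigl(\lambda_H\log(1/\eta)-M\bigr)\textstyle\sum_i u_i .
\]
Once $\lambda_H\log(1/\eta)>M$, this is strictly larger than $\Loss(\tth,\mathbf{b})\ge V^\star$ unless $\zz=\mathbf{b}$.
\end{itemize}

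Collecting the thresholds, take $\overline{\lambda}_H=\max\{2MN_F/H(\eta),\,M/\log(1/\eta)\}$. This bound does not in fact depend on $\lambda_\rho$, which is consistent with the quantifier order in the statement. Every global minimizer $(\tth^\dagger,\zz^\dagger)$ must then have $\zz^\dagger\in\mathcal{Z}$. Consequently $\widetilde{\Loss}(\tth^\dagger,\zz^\dagger)=\Loss(\tth^\dagger,\zz^\dagger)$, and since $\inf\widetilde{\Loss}\le V^\star\le\Loss(\tth^\dagger,\zz^\dagger)$, this value equals $V^\star$. So $(\tth^\dagger,\zz^\dagger)$ is a global minimizer of \eqref{eq:joint}.

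The main obstacle is the near-binary fractional sub-case of Case A. There the crude $2MN_F$ gap argument is useless, because both penalties can be arbitrarily small, and the proof relies on the infinite slope of $H$ at the endpoints ($u\log(1/u)$ dominates $Mu$) to beat the linear loss gain. Two points of bookkeeping need care. First, the comparison must be made at the same $\tth$, so that no minimization over $\tth$ interferes. Second, the bound $V^\star\le R^\star+MN_F$ needs $R^\star$ to be attained, which the definition $R^\star=\min_{\tth}R(\tth)$ provides. The existence of global minimizers is taken as given by the statement.
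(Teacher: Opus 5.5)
Your argument is correct, and it takes a genuinely different route from the paper's. The paper works locally at a putative minimizer. It rounds a single fractional coordinate to its nearest endpoint and bounds the resulting change in $\mathcal{L}+\lambda_\rho(\cdot)^2$ by the Lipschitz constant $M+2\lambda_\rho/N_F$. It then beats that change with the entropy drop $\lambda_H H(\eta)\ge\lambda_H\,\eta\log 2$. Rounding one coordinate can break a budget that the fractional point satisfied, so this forces $\overline{\lambda}_H(\lambda_\rho)=(M+2\lambda_\rho/N_F)/\log 2$ to grow with $\lambda_\rho$. The paper next rules out binary points with the wrong budget via the penalty floor $\lambda_\rho/N_F^2$, using $R\ge 0$ and $\ell_F\ge -M$ to set $\overline{\lambda}_\rho=N_F^2(V^\star+MN_F)$.

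You instead compare every point globally with $V^\star$ through the a priori bounds $R^\star-MN_F\le\mathcal{L}$ and $V^\star\le R^\star+MN_F$. You only round, and then all coordinates at once, in the regime where the rounding is already budget-feasible, so the budget penalty can simply be dropped. The remaining points are eliminated by the size of a single penalty.

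Your route buys three things:
\begin{itemize}
\item decoupled thresholds: your $\overline{\lambda}_H$ does not depend on $\lambda_\rho$, so the quantifiers may be exchanged;
\item constants that depend only on $M$ and $N_F$ rather than on the problem-dependent $V^\star$;
\item no reliance on the nonnegativity $R\ge 0$, which is not among the lemma's hypotheses.
\end{itemize}
The paper's proof, in turn, is shorter and needs neither the tolerance $\eta=1/(4N_F)$ nor the case split.

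One small correction to your closing remark: Case A2 does not actually rely on the infinite slope of $H$ at the endpoints. Since $\lambda_H$ may be taken large, the linear bound $H(u)\ge u\log 2$ on $[0,\tfrac12]$ (the one the paper uses) already gives the strict inequality once $\lambda_H\log 2>M$. The $u\log(1/u)$ bound only sharpens the constant.
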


\begin{proof}
Let $V_c^\star \defeq \inf_{\tth,\,\zz \in \mathcal{Z}} \Loss(\tth,\zz)$. We argue in two steps: $\lambda_H$ forces $\zz$ to be binary, and $\lambda_\rho$ then forces the budget to be tight.

\textbf{Step 1 (binarity).}
Let $(\tth',\zz')$ minimize $\widetilde{\Loss}$ over $\paramspace \times [0,1]^{N_F}$, and suppose some coordinate $z'_{\seq{y},k} \in (0,1)$. Fix all other coordinates and round $z'_{\seq{y},k}$ to its closest endpoint $b \in \{0,1\}$, denoting the result by $\zz''$. Write $\eta \defeq \min(z'_{\seq{y},k},1 - z'_{\seq{y},k}) \in (0,\tfrac12]$.

The smooth part $\Loss+\lambda_\rho(\cdot)^2$ is Lipschitz in $z_{\seq{y},k}$ with constant at most $M+2\lambda_\rho/N_F$, so the absolute change in the smooth part under rounding is at most $(M+2\lambda_\rho/N_F)\eta$. The entropy contribution satisfies $H(z'_{\seq{y},k})\geq c_0\eta$ for $c_0\defeq\log 2$, and it becomes zero after rounding. Thus, if
\[
    \lambda_H > \overline{\lambda}_H(\lambda_\rho)
    \defeq
    \tfrac{1}{c_0}\bigl(M+2\lambda_\rho/N_F\bigr),
\]
then $\widetilde{\Loss}(\tth',\zz'')<\widetilde{\Loss}(\tth',\zz')$, contradicting optimality. Hence $\zz'\in\{0,1\}^{N_F}$.

\textbf{Step 2 (budget).}
With $\zz' \in \{0,1\}^{N_F}$, the entropy term vanishes. Suppose $\sum_{\seq{x},t} z'_{\seq{x},t} \neq \rho N_F$; since both quantities are integers, $|\sum z'_{\seq{x},t} - \rho N_F| \geq 1$, so the budget term is at least $\lambda_\rho/N_F^2$. For any feasible $(\tth^\star,\zstar) \in \paramspace \times \mathcal{Z}$, both regularizer terms vanish at $(\tth^\star, \zstar)$, so
\[
    \Loss(\tth',\zz') + \tfrac{\lambda_\rho}{N_F^2}
    \;\leq\; \widetilde{\Loss}(\tth',\zz')
    \;\leq\; \widetilde{\Loss}(\tth^\star,\zstar)
    \;=\; \Loss(\tth^\star,\zstar).
\]
Taking the infimum over $(\tth^\star, \zstar) \in \paramspace \times \mathcal{Z}$ yields $\Loss(\tth',\zz') + \tfrac{\lambda_\rho}{N_F^2} \leq V_c^\star$. Since $R(\tth)\geq0$ and $\lF\geq-M$, $\Loss(\tth,\zz)\geq -MN_F$ on $\paramspace\times[0,1]^{N_F}$. Setting $\overline{\lambda}_\rho\defeq N_F^2(V_c^\star+MN_F)$, any $\lambda_\rho>\overline{\lambda}_\rho$ would imply $\Loss(\tth',\zz')<-MN_F$, contradicting this lower bound. Hence $\sum z'_{\seq{x},t}=\rho N_F$.

\textbf{Step 3 (optimality for the constrained problem).}
For $(\tth',\zz') \in \paramspace \times \mathcal{Z}$, $\widetilde{\Loss}(\tth',\zz')=\Loss(\tth',\zz')$, and the same holds for any feasible competitor. Hence $\Loss(\tth',\zz')\leq\Loss(\tth^\star,\zstar)$ for all $(\tth^\star,\zstar)\in\paramspace\times\mathcal{Z}$, i.e.\ $(\tth',\zz')$ is a global minimizer of \eqref{eq:joint}.
\end{proof}

\clearpage
\newpage
\section{Unlearning Metrics}\label{app:sec:metrics}

Evaluating an unlearned model requires measuring three orthogonal properties
simultaneously: (i) how thoroughly targeted knowledge has been removed,
(ii) how robust that removal is under mild perturbations of the prompt, and
(iii) how much of the model's retain-set and general utility survives the
intervention. No single score captures all three, so we report a panel of
complementary metrics organized by purpose.
\Cref{app:sec:metrics:es,app:sec:metrics:jx} describe the two primary
evaluation protocols we inherit from prior work: a token-level extraction
score probing verbatim memorization, and a judge-based family that
stress-tests forgetting against paraphrase and in-context-relearning
attacks. \Cref{app:sec:metrics:util} describes auxiliary utility probes
that monitor general post-unlearning usability.
For the \dataset{RWKU} benchmark we additionally employ the native
ROUGE-L-based evaluation panel of \citet{jin2024rwku}, described in
\cref{app:sec:metrics:rwku}, as a cheap tuning surrogate during our
Stage-1 hyperparameter search. Finally,
\Cref{app:sec:metrics:derived} introduces three baseline-relative summary
scores that normalize the preceding metrics against the original
checkpoint so that runs are directly comparable across models and
forget sets. These are the numbers we use to rank methods at a glance.

\subsection{Token-level Extraction Metrics}\label{app:sec:metrics:es}

\paragraph{Extraction Strength ($\mathrm{ES}$).}
Following \citet{wang2025rethinking} (and the \dataset{TOFU} evaluation
of \citet{maini2024tofu}), we measure how strongly a response is parameterized
in the model's weights using the \emph{Extraction Strength}. For a
question--answer pair $(\mathrm{q}, \mathrm{a})$ with answer tokens
$\mathrm{a} = (\mathrm{a}_1, \dots, \mathrm{a}_T)$, let
$M(\cdot)$ denote greedy decoding. Define the shortest extractable prefix
\begin{equation}
    k^{\star}(\mathrm{q},\mathrm{a}) = \min \, \bigl\{ \, k \in \{0, 1, \dots, T\} \; : \; M\bigl(\mathrm{q} \oplus \mathrm{a}_{1:k}\bigr) = \mathrm{a}_{k+1:T} \, \bigr\}.
\end{equation}
Whenever no strict prefix $k<T$ reconstructs the suffix, the minimum
collapses to $k^{\star}(\mathrm{q},\mathrm{a})=T$, corresponding to the
worst case. The
per-sample extraction strength is then
\begin{equation}
    \mathrm{ES}(\mathrm{q},\mathrm{a}) = 1 - \frac{k^{\star}(\mathrm{q}, \mathrm{a})}{T} \, ,
    \qquad \mathrm{ES}(\mathrm{q}, \mathrm{a}) \in [0,1].
\end{equation}
$\mathrm{ES} = 1$ indicates the answer is reproduced verbatim from the question
alone (the model has fully memorized the response); $\mathrm{ES} = 0$ indicates
even the full ground-truth answer prefix fails to elicit the remaining tokens.
We report averages over the forget and retain splits,
\begin{equation}
    \ESF = \frac{1}{|\mathcal{F}|} \sum_{(\mathrm{q}, \mathrm{a}) \in \mathcal{F}} \mathrm{ES}(\mathrm{q}, \mathrm{a}),
    \qquad
    \ESR = \frac{1}{|\mathcal{R}|} \sum_{(\mathrm{q}, \mathrm{a}) \in \mathcal{R}} \mathrm{ES}(\mathrm{q}, \mathrm{a}),
\end{equation}
and summarize the unlearning/retention trade-off with
\begin{equation}
    \ESD = \ESR - \ESF.
\end{equation}
\textbf{Motivation.} Unlike likelihood-based scores, $\mathrm{ES}$ probes the
actual generative behavior of the unlearned model, which is what an
adversary would exploit. A method that merely suppresses the probability of
the forget answer without disrupting its greedy decoding path is credited
by perplexity-style metrics but penalized by \ESF.
\ESD provides a single scalar capturing the gap between
targeted removal and preserved recall, which we use as the objective during
hyperparameter search.

\textbf{Shortcomings.} $\mathrm{ES}$ is defined through an \emph{exact} suffix
match, which makes it brittle to trivial deviations. Consider a generated
response that agrees with $\mathrm{a} = (\mathrm{a}_1,\dots,\mathrm{a}_T)$ on every token except the
last: $M(\mathrm{q}) = (\mathrm{a}_1,\dots,\mathrm{a}_{T-1}, x)$ with $x\neq \mathrm{a}_T$. Under greedy
decoding the same mismatch persists for every teacher-forced prefix; for
any $k<T$, $M(\mathrm{q}\oplus \mathrm{a}_{1:k})$ still deterministically produces
$(\mathrm{a}_{k+1},\dots,\mathrm{a}_{T-1},x)$, so $\mathrm{a}_{k+1:T}$ is never recovered and
$k^{\star}(\mathrm{q},\mathrm{a})=T$, yielding $\mathrm{ES}(\mathrm{q},\mathrm{a})=0$. Semantically the model
has reproduced almost the entire answer verbatim, yet the score is
indistinguishable from a response that disagrees from the very first
token. The same pathology afflicts any mismatch in a single late position
(e.g.\ a swapped named entity near the end of the answer), or cases where
the model emits a stylistic token (e.g. punctuation, casing, whitespace) that
differs from the reference. $\mathrm{ES}$ is therefore best read as a
conservative \emph{lower bound} on memorization: a low
\ESF does not certify forgetting, which is why we
pair it with the judge-based scores of \Cref{app:sec:metrics:jx}.

\subsection{Robustness-Aware Forget and Retain Quality}\label{app:sec:metrics:jx}

The remaining metrics in this section are adopted from the evaluation
protocol of \citet{singh2025unlearninglasts},
which replaces brittle ROUGE-style matching with an LLM-as-judge score $\mathcal{J}$
and stress-tests each unlearned model with paraphrased and in-context-primed
queries. Given a query $\mathrm{q}$ and reference answer $\mathrm{a}$, the judge returns
$\mathcal{J}(\mathrm{q}, \mathrm{a}, \hat{\mathrm{a}} \; ; \; \mathrm{p})\in[0,1]$ grading a candidate response $\hat{\mathrm{a}}$; the scores
below are averages or worst-cases of $\mathcal{J}$ under different query
transformations. The symbol $\mathrm{p}$ denotes the fixed system prompt we
supply to the judge LLM: it defines the grading rubric and the expected
output format, and is held identical across all of
$\mathcal{J}_{\mathrm{P}}, \mathcal{J}_{\mathrm{ICR}}, \mathcal{J}_{\mathrm{W}},$
and $\mathcal{J}_{\mathrm{AVG}}$ so that the resulting scores are
commensurable. The exact rubric wording and format instructions we use, as well as the
prompts used to generate paraphrases, are included in the supplementary
material.

For any query $\mathrm{q}$ we generate $N_P$ paraphrases
$\{\mathrm{q}^{(i)}\}_{i=1}^{N_P}$ that preserve the semantic content of
$\mathrm{q}$, and denote the model's greedy response to each by
$\hat{\mathrm{a}}^{(i)} = M(\mathrm{q}^{(i)})$. The four metrics below
reuse this notation.

\paragraph{Paraphrase score ($\mathcal{J}_{\mathrm{P}}$).}
On the forget set we report the worst-case judge score across the $N_P$
paraphrases,
\begin{equation}
    \mathcal{J}_{\mathrm{P}}(\mathrm{q},\mathrm{a}) \;=\; \max_{i \in [N_P]} \mathcal{J}\bigl(\mathrm{q}^{(i)}, \mathrm{a}, \hat{\mathrm{a}}^{(i)} \; ; \; \mathrm{p}\bigr),
    \qquad (\mathrm{q},\mathrm{a}) \in \mathcal{F}.
\end{equation}
A low $\mathcal{J}_{\mathrm{P}}$ certifies that \emph{no} paraphrase of
$\mathrm{q}$ recovered the forgotten content, i.e.\ forgetting generalizes
beyond the exact training phrasing. We depart here from \citet{singh2025unlearninglasts}, who define
$\mathcal{J}_{\mathrm{P}}$ as the \emph{mean} over paraphrases and do
not use it further in their evaluation protocol. Defining it as the
worst-case instead makes it directly useful as a building block of our
final forget-set score.

\paragraph{In-context-relearning score ($\mathcal{J}_{\mathrm{ICR}}$).}
To probe whether forgotten information resurfaces when the model is
primed with related content, for each paraphrase $\mathrm{q}^{(i)}$ we
independently sample a small random subset $\mathrm{d}^{(i)}\subset\mathcal{R}$
of retain-set Q/A pairs and prepend them as in-context demonstrations.
Let $\tilde{\mathrm{a}}^{(i)} = M(\mathrm{d}^{(i)}\oplus\mathrm{q}^{(i)})$
denote the resulting generation. The ICR score is the worst case across
paraphrases, 
\begin{equation}
    \mathcal{J}_{\mathrm{ICR}}(\mathrm{q},\mathrm{a}) \;=\; \max_{i \in [N_P]} \mathcal{J}\bigl(\mathrm{q}^{(i)}, \mathrm{a}, \tilde{\mathrm{a}}^{(i)} \; ; \; \mathrm{p}\bigr).
\end{equation}
Structurally, $\mathcal{J}_{\mathrm{ICR}}$ mirrors $\mathcal{J}_{\mathrm{P}}$,
with both taking a sample-wise $\max$ over the same paraphrase set
$\{\mathrm{q}^{(i)}\}_{i=1}^{N_P}$. The two differ only in the response
being scored; $\mathcal{J}_{\mathrm{ICR}}$ uses the primed generation
$\tilde{\mathrm{a}}^{(i)}$, whereas $\mathcal{J}_{\mathrm{P}}$ uses the
zero-shot response $\hat{\mathrm{a}}^{(i)}$. A genuinely forgotten fact should remain
unrecoverable even after priming, so $\mathcal{J}_{\mathrm{ICR}}$ captures
residual memorization that plain paraphrase attacks alone would miss.

\paragraph{Worst-case forget quality ($\mathcal{J}_{\mathrm{W}}$).}
Since $\mathcal{J}_{\mathrm{P}}$ and $\mathcal{J}_{\mathrm{ICR}}$ are
already sample-wise worst cases, the primary robustness score reduces to
their maximum,
\begin{equation}\label{eq:jw-def}
    \mathcal{J}_{\mathrm{W}}(\mathrm{q},\mathrm{a}) \;=\; \max\bigl(\,\mathcal{J}_{\mathrm{P}}(\mathrm{q},\mathrm{a}),\; \mathcal{J}_{\mathrm{ICR}}(\mathrm{q},\mathrm{a})\bigr).
\end{equation}
This is the main metric we track on $\mathcal{F}$: a low
$\mathcal{J}_{\mathrm{W}}$ certifies that \emph{no} light-weight attack
among those considered elicited the forgotten content.

\paragraph{Average retain quality ($\mathcal{J}_{\mathrm{AVG}}$).}
On the retain set, worst-case evaluation is overly pessimistic because
paraphrases can legitimately push the model toward equally valid
rephrasings of the answer. We therefore report the mean judge score
across the $N_P$ paraphrases,
\begin{equation}
    \mathcal{J}_{\mathrm{AVG}}(\mathrm{q},\mathrm{a}) \;=\; \frac{1}{N_P}\sum_{i=1}^{N_P} \mathcal{J}\bigl(\mathrm{q}^{(i)}, \mathrm{a}, \hat{\mathrm{a}}^{(i)} \; ; \; \mathrm{p}\bigr),
    \qquad (\mathrm{q},\mathrm{a}) \in \mathcal{R}.
\end{equation}
Note that the functional form matches $\mathcal{J}_{\mathrm{P}}$; the
distinction is the split on which the score is aggregated. A high
$\mathcal{J}_{\mathrm{AVG}}$ indicates that the model still reliably
answers questions it was meant to keep.

\subsection{Utility Metrics}\label{app:sec:metrics:util}

Beyond the targeted forget/retain splits we monitor three signals of
\emph{general} post-unlearning usability, again following
\citet{singh2025unlearninglasts}. We treat these signals as
\emph{utility-preservation probes} rather than optimization targets:
the relevant failure mode is degradation relative to the original
checkpoint, namely a drop in MMLU, a drop in Rep., or
$\WR \ll 50.0$. Conversely, parity or occasional improvements are
not penalized, since they may reflect benign side effects of the
unlearning procedure or measurement noise rather than harmful changes.

\paragraph{MMLU.} We evaluate multiple-choice accuracy on MMLU
\citep{hendrycks2021mmlu} using argmax selection over the four options
rather than open-ended generation. MMLU tracks the broad world knowledge of
the model, which collateral forgetting can silently erode. We compute
MMLU accuracy using the \texttt{lm-evaluation-harness}
of~\citet{eval-harness}. We therefore interpret MMLU asymmetrically:
scores at or above the original checkpoint are taken to preserve this
axis of utility.

\paragraph{Repetitiveness (\textbf{Rep.}).} Aggressive unlearning loss
functions often cause the model to collapse into degenerate, low-entropy
outputs. Following the \emph{Fluency} metric of \citet{jin2024rwku}, we
measure repetitiveness from the bi- and tri-gram frequencies of generations
produced on the AlpacaEval instruction set. Lower values indicate more
frequently repeated n-grams, and thus flag pathological generation
behavior that task metrics alone would not reveal. We likewise interpret
Rep.\ asymmetrically, flagging only meaningful drops relative to the
original checkpoint as degradation.

\paragraph{Win Rate (\textbf{\WR}).} Repetitiveness alone does not capture
whether responses are actually useful, so we additionally compare the
instruction-following quality of the unlearned and original models using
an LLM-as-judge in the style of AlpacaEval. For each prompt the judge is
shown the question together with both responses in a single call and is
asked to rate each on a $1$--$10$ scale along with a short justification;
wins, losses, and ties are then determined by comparing the two scores.
For every prompt we record whether the unlearned model received a higher
score ($U_{\mathrm{Wins}}$), lower score ($U_{\mathrm{Losses}}$), or a tie
($U_{\mathrm{Ties}}$) relative to the original model, and compute
\begin{equation}
    \WR \;=\; 100 \cdot \frac{U_{\mathrm{Wins}} + 0.5\,U_{\mathrm{Ties}}}{U_{\mathrm{Wins}} + U_{\mathrm{Losses}} + U_{\mathrm{Ties}}}.
\end{equation}
We note that the judge is not informed which response originates from
the unlearned model and which from the original: the prompt labels
them only as ``Assistant~1'' and ``Assistant~2'', and nowhere in the
system prompt, user prompt, or scoring criteria do we reference
unlearning, forgetting, or model identity. Consequently the judge has
no semantic cue that could bias it toward either model. The exact judge
prompt used for this evaluation is included in the supplementary material.
By construction, the original model has $\WR=0.5$ against itself.
$\WR \ll 0.5$ is the clear failure mode: it indicates that
unlearning has degraded general response quality relative to the
original checkpoint. $\WR\approx 0.5$ is the expected and
desired outcome, signaling that quality has been preserved. We depart
here from \citet{singh2025unlearninglasts}, who argue that
\WR should be \emph{as close as possible} to $0.5$ and treat
deviations above $0.5$ as undesirable. In our setting, we do not
penalize $\WR>0.5$: such values may reflect genuine quality
improvements, regularization side-effects of the unlearning objective,
or judge noise, none of which constitutes the utility-degradation
failure mode we aim to detect.

\subsection{Native \dataset{RWKU} Benchmark Metrics}\label{app:sec:metrics:rwku}

Alongside the benchmark, \citet{jin2024rwku} ship an evaluation panel
that we use for two purposes in this paper: as the tuning surrogate for
the \dataset{RWKU} Stage-1 search (\cref{app:subsec:exp:tuning}), and as the basis
of the detailed per-probe \dataset{RWKU} tables reported later in the appendix.
Responses are not collected on a single probe set but across three
complementary probe types, applied both to the forget targets and to a
separately-curated set of adjacent facts that must be preserved.

The first probe type, \textbf{FB}, consists of fill-in-the-blank cloze
items that test whether the target knowledge can still be elicited from
a partial prompt. The second, \textbf{QA}, is open-form question
answering, where the model is asked directly about the target fact. The
third, \textbf{AA}, bundles a family of adversarial reformulations of the
same queries (prefix injection, role-playing, reverse querying,
affirmative suffixes, cross-lingual variants, and several other
jailbreak-style attacks) and reports the worst-case response across
them. Each probe's response is scored by the ROUGE-L recall against the
reference answer, and the aggregate \textbf{All} is the mean over the
probe types applied to that split.

Probes are applied to two disjoint splits of the benchmark. The
\textbf{Forget Set} ($\mathcal{F}$) contains probes directed at the
targeted knowledge across all three probe types FB, QA, and AA; lower
ROUGE-L (↓) indicates that the model no longer reproduces the
forgotten content. The \textbf{Neighbor Set} ($\mathcal{N}$) contains
probes directed at related but non-targeted knowledge that the
benchmark designates as \emph{must-preserve}; following the benchmark,
only the FB and QA probe types are reported on the Neighbor Set, since
adversarial attacks are a forget-side stress test rather than a
utility measurement. Higher ROUGE-L (↑) indicates that retain-adjacent
facts survive the intervention. The Neighbor Set therefore plays on
\dataset{RWKU} the role that the retain split $\mathcal{R}$ plays on \dataset{TOFU}.

For hyperparameter tuning we summarize the two panels with a single
scalar,
\begin{equation}
    N_\Delta \;=\; \mathrm{ROUGE\text{-}L}_{\mathcal{N},\mathrm{All}} \;-\; \mathrm{ROUGE\text{-}L}_{\mathcal{F},\mathrm{All}},
\end{equation}
the signed gap between the two ``All'' aggregates. Higher \NDelta
means better forgetting without collateral damage to neighbouring
knowledge, and we use it as the selection criterion at Stage~1 on \dataset{RWKU}.

We emphasize that the ROUGE-L-based scores above are a weaker signal
than the judge-based metrics $\mathcal{J}_{\mathrm{P}},
\mathcal{J}_{\mathrm{ICR}}, \mathcal{J}_{\mathrm{W}},
\mathcal{J}_{\mathrm{AVG}}$ of \cref{app:sec:metrics:jx}: as discussed
by \citet{singh2025unlearninglasts}, ROUGE penalizes semantically
correct responses that are phrased differently from the reference and
conversely credits generic or repetitive outputs that happen to share
surface $n$-grams, so it systematically mislabels unlearning success and
failure and can reorder methods with respect to a judge. We therefore
treat these native \dataset{RWKU} scores as a legacy reporting panel, useful as
a cheap tuning surrogate and for comparability with prior work that
reports them, but consider the judge-based metrics our primary
evaluation on \dataset{RWKU}.

\subsection{Baseline-Relative Summary Metrics}\label{app:sec:metrics:derived}

The raw scores $\mathcal{J}_{\mathrm{W}}$ and $\mathcal{J}_{\mathrm{AVG}}$
are not directly comparable across models or forget splits, because the
\emph{operating point} of the original checkpoint varies: a final
$\mathcal{J}_{\mathrm{W}}=0.2$ is excellent when the original model scored
$0.9$ on the same forget set, but mediocre when the original already sat at
$0.4$. We therefore summarize each run by three baseline-relative scores,
reported as \emph{fractional} drops with respect to the original
checkpoint. Writing $\mathcal{J}_{\mathrm{W}}^{\mathrm{orig}}$ and
$\mathcal{J}_{\mathrm{AVG}}^{\mathrm{orig}}$ for the original model's
scores on the same splits, and using $[x]_{+}=\max(0,x)$ throughout, we
clip each drop below at zero so that accidental increases (typically judge
noise) are not miscounted as progress. This normalization puts all three
metrics on the same $[0,1]$ scale regardless of the original operating
point, and for readability we report them as percentages in $[0,100]$
throughout the paper.

\paragraph{Forget Quality (\textbf{\FQ}).} \FQ measures the
fractional reduction in the worst-case forget-set judge score attributable
to unlearning,
\begin{equation}
    \FQ \;=\; \left[\,1 - \frac{\mathcal{J}_{\mathrm{W}}}{\mathcal{J}_{\mathrm{W}}^{\mathrm{orig}}}\,\right]_{+}.
\end{equation}
It ranges from $0$ (no forgetting achieved, $\mathcal{J}_{\mathrm{W}} \geq
\mathcal{J}_{\mathrm{W}}^{\mathrm{orig}}$) up to its ideal value of $1$,
reached only by a method that drives the worst-case judge score on the
forget set to zero.

\paragraph{Retain Degradation (\textbf{\RD}).} \RD measures the
collateral cost of unlearning as the fractional loss in the average
retain-set judge score,
\begin{equation}
    \RD \;=\; \left[\,1 - \frac{\mathcal{J}_{\mathrm{AVG}}}{\mathcal{J}_{\mathrm{AVG}}^{\mathrm{orig}}}\,\right]_{+}.
\end{equation}
The ideal value is $\RD=0$, indicating that the retain split is
answered as well as before unlearning; $\RD=1$ would correspond
to a complete collapse of retain-side quality.

\paragraph{Unlearning Quality (\textbf{\UQ}).} \UQ combines the
two by subtracting retain-side damage from forget-side gain,
\begin{equation}
    \UQ \;=\; \left[\,\FQ - \RD\,\right]_{+}.
\end{equation}
It is the fraction of forget-quality improvement that was \emph{not} paid
for by retain-set degradation, and lives on the same $[0,1]$ scale with
ideal value $1$. A value of $\UQ=0$ flags methods whose retain
damage cancels out any forgetting progress, regardless of how low
$\mathcal{J}_{\mathrm{W}}$ is in isolation.

\paragraph{Motivation.} The triple captures the two competing goals of
unlearning -- removing targeted knowledge and preserving the rest --
together with their net trade-off. \FQ and \RD make
each axis legible on its own, while \UQ collapses the
trade-off into a single scalar that we use to rank methods in our main
tables, since a method that pushes $\mathcal{J}_{\mathrm{W}}$ down by
uniformly blunting the model should not be credited with genuine
unlearning.

\clearpage
\newpage
\section{Experimental Details}\label{app:sec:exp-details}

This section details the setup used for the experiments reported in the
main paper. We describe the benchmarks and base models
(\cref{app:subsec:exp:benchmarks}), the hyperparameter-tuning protocol
(\cref{app:subsec:exp:tuning}), the final unlearning runs and evaluation
protocol (\cref{app:subsec:exp:unlearning}), and the method-specific
hyperparameter search ranges and selected configurations
(\cref{app:subsec:exp:hp-ranges}).

\subsection{Benchmarks and Base Models}\label{app:subsec:exp:benchmarks}

We evaluate on two LLM-unlearning benchmarks.

\paragraph{TOFU \cite{maini2024tofu}.} A synthetic question--answering benchmark in
which the base model has been fine-tuned on a corpus of fictitious
author biographies. We use the three standard forget splits supplied with
the benchmark, \texttt{forget01}, \texttt{forget05}, and
\texttt{forget10}, covering progressively larger fractions of the
fine-tuning corpus. We consider two base model sizes,
\model{Llama-3.2-1B-Instruct}\footnote{\url{https://huggingface.co/meta-llama/Llama-3.2-1B-Instruct}}
and \model{Llama-3.1-8B-Instruct}~\citep{grattafiori2024llama}, and use
the already-finetuned \dataset{TOFU} checkpoints released by the
\texttt{open-unlearning} project~\citep{dorna2026openunlearning},\footnote{\url{https://huggingface.co/collections/open-unlearning/tofu-new-models}.}
whose codebase\footnote{\url{https://github.com/locuslab/open-unlearning}}
our \dataset{TOFU} experiments extend in two ways: we add paraphrase-based
evaluation, and integrate \method{JensUn} alongside our proposed methods.

\paragraph{RWKU \cite{jin2024rwku}.} A real-world knowledge-unlearning
benchmark that targets knowledge about public figures present in a
pretrained LLM. We evaluate on the $3.8$B-parameter
\model{Phi-3-Mini-4k-Instruct}~\citep{abdin2024phi3technicalreporthighly}.
Following \citet{singh2025unlearninglasts}, we perform \emph{10-target batch
unlearning}: rather than unlearning one target at a time, a single
unlearning run jointly removes knowledge about ten public figures. We
fix one specific choice of ten subjects and refer to it as our
\emph{canonical ten-subject \dataset{RWKU} batch} throughout the paper.
Since \dataset{RWKU} does not explicitly provide retain data, during
unlearning we use a \emph{disjoint} set of ten other subjects as a
surrogate retain set. We adapt the official
\dataset{RWKU} codebase\footnote{\url{https://github.com/jinzhuoran/RWKU}}
in two ways: we add retain-set support, which the upstream release
lacks since it only supports forget-only methods, and integrate
\method{GradDiff}, \method{SimNPO}, \method{JensUn}, \method{WGA},
\method{SatImp}, alongside our proposed methods.

\subsection{Hyperparameter Tuning Protocol}\label{app:subsec:exp:tuning}

Each unlearning method exposes a set of loss-specific hyperparameters
(learning rate, loss-weighting coefficients, optional auxiliary
objectives, etc.). We tune these per method and per benchmark using
Bayesian optimization via \texttt{optuna}~\citep{akiba2019optuna},
maximizing a cheap-to-compute surrogate of unlearning quality rather
than the paraphrase- and judge-based evaluation of
\cref{app:sec:metrics:jx,app:sec:metrics:util} that we actually
\emph{report}: running the full pipeline at every trial is infeasible
at our compute and API budget, since the ablations, and a single final-evaluation pass
already incurs roughly USD~\$180 in judge-LLM API costs. We therefore
run the judge-based evaluation only on the final selected checkpoints,
validating the surrogate after the fact. Throughout tuning, the
optimizer, batch size, gradient-accumulation steps, precision, and
learning-rate schedule are held fixed at the final-run values of
\cref{app:subsec:exp:unlearning}; only the method-specific
hyperparameters listed in the per-stage descriptions below are
searched. Tuning proceeds in two stages.

\paragraph{Stage 1: coarse search.} For each method we run an initial
search over all method-specific hyperparameters on the largest forget
split of the benchmark (\texttt{forget10} for \dataset{TOFU}, our canonical
ten-subject batch for \dataset{RWKU}). On \dataset{TOFU} this stage runs on the small base model,
\model{Llama-3.2-1B-Instruct}, for $50$ trials at the full training
budget of \cref{app:subsec:exp:unlearning}, with the first trial
seeded at the hyperparameters recommended by each method's original
paper; trials are selected by
$\ESD = \ESR - \ESF$
(\cref{app:sec:metrics:es}). On \dataset{RWKU} this stage runs on
\model{Phi-3-Mini-4k-Instruct} for $20$ trials of a single epoch at a
fixed learning rate, optimizing only the non-LR hyperparameters and
selecting by the ROUGE-L-based \NDelta surrogate defined in
\cref{app:sec:metrics:rwku}.
The first trial is seeded with each method's final
\dataset{TOFU} 8B configuration---the one used for the corresponding
final unlearning runs in \cref{app:subsec:exp:unlearning}---under the
assumption that well-tuned settings transfer across benchmarks.
For most methods, we keep the learning rate unchanged in this pass.
For \method{GradDiff} and \method{JensUn}, however, we reduce it by a
factor of $10$; the same reduction is also applied to \method{WGA} and
\method{SatImp} as variants of \method{GradDiff}. This is informed by
\citet{singh2025unlearninglasts}, who report that these methods require
markedly smaller learning rates on \dataset{RWKU} than on a QA dataset to
produce stable unlearning, so anchoring at the \dataset{TOFU} 8B value
directly would overshoot. Finally, the motivation for holding the
learning rate fixed at all in this \dataset{RWKU} pass is twofold: a
single-epoch budget gives an unreliable signal about the correct
learning rate, so we defer LR tuning to Stage~2 where the full epoch
budget applies, and reducing the number of jointly tuned parameters
yields tighter estimates of the remaining method-specific
hyperparameters within the same search budget.

\paragraph{Stage 2: learning-rate transfer.} In both benchmarks we
transfer the Stage~1 configuration to the evaluation setting and tune
only the learning rate. We run $5$ trials per method, sweeping the
learning rate on a log-scale grid and holding all other hyperparameters
at their Stage~1 values. On \dataset{TOFU} the transfer is across model scale:
the Stage~1 configuration on the 1B model is ported to
\model{Llama-3.1-8B-Instruct} and the learning rate is retuned, since
re-running the full search at 8B would be prohibitive. On \dataset{RWKU} the
transfer is across training budget: the Stage~1 configuration
(tuned at one epoch) is run at the final-run epoch count so that the
learning rate is selected against the full schedule, still on
\model{Phi-3-Mini-4k-Instruct}. Selection is automatic on both benchmarks;
we use \ESD on \dataset{TOFU} and \NDelta on
\dataset{RWKU}. The only exception is \method{SatImp} on
\dataset{RWKU}, where the top-\NDelta trial had inadequate forget
performance and we therefore selected the second-best trial
(see \cref{tab:hp-rwku-lr-trials}).

\begin{table}[t]
    \centering
    \small
    \captionsetup{justification=centering}
    \renewcommand{\arraystretch}{1.1}
    \setlength{\tabcolsep}{3.5pt}
    \begin{minipage}[t]{0.5\textwidth}
        \centering
        \begin{tabular}{@{}c c ccc}
            \toprule
             & $\mathrm{lr}$ & $\mathrm{R\text{-}L}_{\mathcal{F}}\,\downarrow$ & $\mathrm{R\text{-}L}_{\mathcal{N}}\,\uparrow$ & $N_{\Delta}\,\uparrow$ \\
            \midrule
            \rowcolor{green!15} \multirow{3}{*}{\cellcolor{white}\method{DPO}} & $1.18\!\times\!10^{-5}$ & $0.499$ & $0.590$ & $0.090$ \\
                & $2.70\!\times\!10^{-5}$ & $0.435$ & $0.450$ & $0.015$ \\
                & $1.98\!\times\!10^{-5}$ & $0.491$ & $0.501$ & $0.010$ \\
            \midrule
            \rowcolor{green!15} \multirow{3}{*}{\cellcolor{white}\method{NPO}} & $2.60\!\times\!10^{-5}$ & $0.236$ & $0.541$ & $0.305$ \\
                & $1.18\!\times\!10^{-5}$ & $0.325$ & $0.597$ & $0.272$ \\
                & $1.98\!\times\!10^{-5}$ & $0.257$ & $0.509$ & $0.252$ \\
            \midrule
            \rowcolor{green!15} \multirow{3}{*}{\cellcolor{white}\method{SimNPO}} & $2.08\!\times\!10^{-5}$ & $0.218$ & $0.561$ & $0.343$ \\
                & $1.18\!\times\!10^{-5}$ & $0.254$ & $0.594$ & $0.341$ \\
                & $2.70\!\times\!10^{-5}$ & $0.218$ & $0.541$ & $0.323$ \\
            \midrule
            \rowcolor{green!15} \multirow{3}{*}{\cellcolor{white}\method{JensUn}} & $1.54\!\times\!10^{-6}$ & $0.062$ & $0.434$ & $0.372$ \\
                & $3.00\!\times\!10^{-6}$ & $0.040$ & $0.408$ & $0.368$ \\
                & $3.98\!\times\!10^{-6}$ & $0.038$ & $0.377$ & $0.339$ \\
            \bottomrule
        \end{tabular}
    \end{minipage}\hfill
    \begin{minipage}[t]{0.5\textwidth}
        \centering
        \begin{tabular}{@{}c c ccc}
            \toprule
             & $\mathrm{lr}$ & $\mathrm{R\text{-}L}_{\mathcal{F}}\,\downarrow$ & $\mathrm{R\text{-}L}_{\mathcal{N}}\,\uparrow$ & $N_{\Delta}\,\uparrow$ \\
            \midrule
            \rowcolor{green!15} \multirow{3}{*}{\cellcolor{white}\method{GradDiff}} & $1.18\!\times\!10^{-6}$ & $0.053$ & $0.445$ & $0.392$ \\
                & $1.90\!\times\!10^{-6}$ & $0.045$ & $0.399$ & $0.354$ \\
                & $2.70\!\times\!10^{-6}$ & $0.033$ & $0.346$ & $0.313$ \\
            \midrule
            \rowcolor{green!15} \multirow{3}{*}{\cellcolor{white}\method{WGA}} & $1.54\!\times\!10^{-6}$ & $0.061$ & $0.515$ & $0.454$ \\
                & $1.57\!\times\!10^{-6}$ & $0.063$ & $0.502$ & $0.438$ \\
                & $3.00\!\times\!10^{-6}$ & $0.034$ & $0.427$ & $0.393$ \\
            \midrule
                \cellcolor{white} & $1.98\!\times\!10^{-6}$ & $0.132$ & $0.617$ & $0.485$ \\
                \rowcolor{green!15}
                \cellcolor{white}\method{SatImp}& $4.48\!\times\!10^{-6}$ & $0.069$ & $0.533$ & $0.465$ \\
                & $3.00\!\times\!10^{-6}$ & $0.112$ & $0.566$ & $0.454$ \\
            \midrule
            \rowcolor{green!15} \multirow{3}{*}{\cellcolor{white}\method{ATWU}} & $1.83\!\times\!10^{-6}$ & $0.050$ & $0.562$ & $0.512$ \\
                & $1.74\!\times\!10^{-6}$ & $0.072$ & $0.584$ & $0.511$ \\
                & $1.37\!\times\!10^{-6}$ & $0.100$ & $0.610$ & $0.510$ \\
            \bottomrule
        \end{tabular}
    \end{minipage}
    \vspace{0.5em}
    \caption{Top-3 Stage~2 learning-rate trials per method on \dataset{RWKU}, ranked by \NDelta. Green shading marks the trial used for the final unlearning run: the top-\NDelta trial throughout, except for \method{SatImp}, whose top trial was discarded for inadequate forgetting in favor of the second-best.}
    \label{tab:hp-rwku-lr-trials}
\end{table}

\subsection{Hyperparameter Search Ranges and Results}\label{app:subsec:exp:hp-ranges}

This subsection specifies the concrete search ranges used in the
two-stage tuning procedure of \cref{app:subsec:exp:tuning} and reports
the selected configurations for \dataset{TOFU}
(\cref{tab:hp-tofu-selected}) and \dataset{RWKU}
(\cref{tab:hp-rwku-selected}). Unless stated otherwise, all ranges are
searched on a logarithmic scale. The coefficients $\alpha$ and $\gamma$
denote the retain-loss and forget-loss weights, respectively, and are
shared across methods. For most baselines, the method-specific grids
follow a slightly extended version of those used by
\citet{yang2025exploring}. For our score-based methods, we fix
$\lambda_{\mathrm{ent}}=\lambda_{\mathrm{w}}=1$ and set
$\lambda_{\mathrm{pop}}=15$ throughout. An ablation on the different regularizers is presented in \cref{tab:ablation-regs-tofu-1b-f10} and discussed shortly after. Throughout
this subsection, hyperparameter values returned by \texttt{optuna} are
rounded to two significant digits and performance scores to three
decimal places for readability. The exact hyperparameter values used in
the final runs reported in
\cref{tab:hp-tofu-selected,tab:hp-rwku-selected} are provided in the
supplementary material.

\paragraph{Stage 1: coarse search.}
For \textbf{both} datasets, we tune all non-fixed method-specific hyperparameters.
All methods search over $\alpha, \gamma \in [0.1, 5]$,
and for \dataset{TOFU}, we also search $\mathrm{lr} \in [1 \times 10^{-6}, 5 \times 10^{-5}]$.%
\footnote{For \method{JensUn} on \dataset{TOFU}, preliminary runs showed consistently weak
forgetting at the smallest learning rates. We therefore narrowed its
learning-rate search to $[5 \times 10^{-6}, 5 \times 10^{-5}]$, allowing a finer search in the more promising region.}
The per-method parameters tuned in this stage are as follows: for
\method{GradDiff}, we try both \texttt{NLL} and \texttt{KL} as the
retain loss;\footnote{We only search over the retain loss on
\dataset{TOFU}. On \dataset{RWKU} we fix it to \texttt{NLL}, since the
\dataset{TOFU} Stage~1 search selected \texttt{NLL} consistently over
\texttt{KL}, and we prefer to reduce tuning over parameters for which
the choice already appears settled.} for \method{DPO} and \method{NPO}, we search the inverse
temperature $\beta$ in the ranges $[0.2, 0.5]$ and $[0.05, 0.2]$,
respectively; for \method{SimNPO}, we search the temperature $\beta$
and reward margin $\delta$ in the ranges $[2.0, 3.0]$ and $[0.0, 2.0]$,
respectively; for \method{WGA}, we tune the weight exponent $\beta$ in
the range $[0.1, 5]$; for \method{SatImp}, we search the saturation and
importance exponents $\beta_1$ and $\beta_2$ in the ranges $[1.0, 10.0]$
and $[0.1, 5]$, respectively; for \method{ETW}, we search the temperature $\tau \in [0.1, 4.0]$; 
for \method{RMU}, we search the steering coefficient $c \in [0.5, 10.0]$, 
and the unlearning layer $\ell \in \{4, \dots, 12\}$ (with the trainable parameters 
being the \texttt{mlp.down\_proj} weights of layers ${\ell-2, \ell-1, \ell}$, 
following the WMDP RMU recipe); and for \method{ATWU}, we tune $\beta \in [1.0, 10.0]$ 
and the scorer learning rate \texttt{slr}~$\in [0.01, 0.5]$.

\paragraph{Stage 2: learning-rate transfer.} On \dataset{TOFU}, the
per-method range is anchored at the best Stage~1 learning rate:
writing $\mathrm{lr}^{\star}_m$ for the learning rate selected for
method $m$ through the coarse search, we consider the range
$\mathrm{lr} \in [\mathrm{lr}^{\star}_m / 5,\; 5 \cdot \mathrm{lr}^{\star}_m]$,
with the first trial seeded at $\mathrm{lr}^{\star}_m$ itself.
For each method, the selected $\mathrm{lr}^{\star}_m$ are:
$2.27 \times 10^{-5}$ for \method{DPO};
$2.60 \times 10^{-5}$ for \method{NPO};
$2.08 \times 10^{-5}$ for \method{SimNPO};
$3.98 \times 10^{-5}$ for \method{JensUn};
$1.90 \times 10^{-5}$ for \method{GradDiff};
$1.57 \times 10^{-5}$ for \method{WGA};
$1.98 \times 10^{-5}$ for \method{SatImp};
$2.58 \times 10^{-5}$ for \method{ETW}; 
$4.88 \times 10^{-5}$ for \method{RMU}; 
and $2.00 \times 10^{-5}$ for \method{ATWU}.
On \dataset{RWKU} no analogous Stage~1 optimum is available, since the
learning rate was held fixed throughout Stage~1; we instead choose
method-specific log-scale ranges based on preliminary runs at the full
Stage~2 training budget. The resulting ranges per method are:
for \method{DPO}, \method{NPO}, and \method{SimNPO},
$[1 \times 10^{-5},\, 5 \times 10^{-5}]$;
for \method{GradDiff}, \method{JensUn},
\method{WGA}, \method{SatImp}, and \method{ATWU},
$[1 \times 10^{-6},\, 1 \times 10^{-5}]$.

\begin{table}[t]
    \centering
    \small
    \captionsetup{justification=centerlast}
    \renewcommand{\arraystretch}{1.2}
    \begin{tabular}{@{}c cc cc cc@{}}
        \toprule
        \multirow{2}{*}[-0.6ex]{\textbf{Method}} & \multicolumn{2}{c}{\textbf{Shared}} & \multicolumn{2}{c}{\textbf{Method-Specific}} & \multicolumn{2}{c}{\textbf{Learning Rate}} \\
        \cmidrule(lr){2-3} \cmidrule(lr){4-5} \cmidrule(lr){6-7}
          & $\alpha$ & $\gamma$ & $\beta$ / $\beta_1$ / $\tau$ / $c$ & $\delta$ / $\beta_2$ / $\mathrm{slr}$ / $\ell$ & \texttt{1B} & \texttt{8B} \\
        \midrule
        \method{GradDiff} & $0.80$ & $0.12$  & ---    & ---    & $1.90\!\times\!10^{-5}$ & $1.90\!\times\!10^{-5}$ \\
        \method{DPO}      & $0.15$ & $3.80$  & $0.21$ & ---    & $2.27\!\times\!10^{-5}$ & $3.12\!\times\!10^{-5}$ \\
        \method{NPO}      & $4.10$ & $0.12$  & $0.10$ & ---    & $2.60\!\times\!10^{-5}$ & $2.60\!\times\!10^{-5}$ \\
        \method{SimNPO}   & $1.28$ & $1.49$  & $2.82$ & $0.03$ & $2.08\!\times\!10^{-5}$ & $2.86\!\times\!10^{-5}$ \\
        \method{JensUn}   & $0.79$ & $0.82$  & ---    & ---    & $3.98\!\times\!10^{-5}$ & $2.66\!\times\!10^{-5}$ \\
        \method{RMU}      & $1.27$ & $0.66$  & $4.39$ & $4$    & ---                     & $2.21\!\times\!10^{-4}$ \\
        \method{ETW}      & $1.31$ & $0.033$ & $0.16$ & ---    & $2.58\!\times\!10^{-5}$ & ---                     \\
        \method{WGA}      & $0.79$ & $1.16$  & $2.14$ & ---    & $1.57\!\times\!10^{-5}$ & $1.57\!\times\!10^{-5}$ \\
        \method{SatImp}   & $0.49$ & $0.87$  & $1.43$ & $0.17$ & $1.98\!\times\!10^{-5}$ & $1.98\!\times\!10^{-5}$ \\
        \midrule
        \method{ATWU}     & $0.50$ & $3.00$  & $7.00$ & $0.050$ & $2.00\!\times\!10^{-5}$ & $1.50\!\times\!10^{-5}$ \\
        \bottomrule
    \end{tabular}
    \vspace{0.5em}
    \caption{Selected hyperparameters on \dataset{TOFU}. \texttt{1B} and \texttt{8B} are the learning rates used on \model{Llama-3.2-1B-Instruct} and \model{Llama-3.1-8B-Instruct}, respectively. The method-specific columns are overloaded across methods: $\beta$ for \method{DPO}/\method{NPO}/\method{SimNPO}/\method{WGA}, $\beta_1$ for \method{SatImp}, $\tau$ (temperature) for \method{ETW}, $c$ (steering coefficient) for \method{RMU}; $\delta$ for \method{SimNPO}, $\beta_2$ for \method{SatImp}, $\mathrm{slr}$ for \method{ATWU}, $\ell$ (target layer) for \method{RMU}. \method{ETW} was evaluated only on \texttt{1B} and \method{RMU} only on \texttt{8B}; the non-evaluated column is marked ---.}
    \label{tab:hp-tofu-selected}
\end{table}

\begin{table}[t]
    \centering
    \small
    \captionsetup{justification=centering}
    \renewcommand{\arraystretch}{1.2}
    \begin{tabular}{@{}c ccc cc@{}}
        \toprule
        \multirow{2}{*}[-0.6ex]{\textbf{Method}} & \multicolumn{3}{c}{\textbf{Shared}} & \multicolumn{2}{c}{\textbf{Method-Specific}} \\
        \cmidrule(lr){2-4} \cmidrule(lr){5-6}
          & $\mathrm{lr}$ & $\alpha$ & $\gamma$ & $\beta$ / $\beta_1$ & $\delta$ / $\beta_2$ / $\mathrm{slr}$ \\
        \midrule
        \method{GradDiff} & $1.18\!\times\!10^{-6}$ & $0.80$ & $0.12$ & ---    & ---    \\
        \method{DPO}      & $1.18\!\times\!10^{-5}$ & $2.96$ & $0.13$ & $0.35$ & ---    \\
        \method{NPO}      & $2.60\!\times\!10^{-5}$ & $0.63$ & $4.76$ & $0.07$ & ---    \\
        \method{SimNPO}   & $2.08\!\times\!10^{-5}$ & $0.46$ & $2.43$ & $2.59$ & $1.50$ \\
        \method{JensUn}   & $1.54\!\times\!10^{-6}$ & $1.52$ & $0.39$ & ---    & ---    \\
        \method{WGA}      & $1.54\!\times\!10^{-6}$ & $0.22$ & $2.79$ & $0.40$ & ---    \\
        \method{SatImp}   & $4.48\!\times\!10^{-6}$ & $0.23$ & $2.60$ & $2.64$ & $0.20$ \\
        \midrule
        \method{ATWU}     & $1.83\!\times\!10^{-6}$ & $0.54$ & $2.68$ & $2.25$ & $0.020$ \\
        \bottomrule
    \end{tabular}
    \vspace{0.5em}
    \caption{Selected hyperparameters on the canonical ten-subject \dataset{RWKU} batch. The learning rate is the one used on \model{Phi-3-Mini-4k-Instruct}.}
    \label{tab:hp-rwku-selected}
\end{table}

\subsection{Final Unlearning Runs and Evaluation}\label{app:subsec:exp:unlearning}

Once a hyperparameter configuration is fixed for each
(benchmark, base model, method) triple, we run the final unlearning
procedure and evaluate the resulting checkpoints. On \dataset{TOFU} we
run the procedure on each of the three forget splits
\texttt{forget01}, \texttt{forget05}, and \texttt{forget10}; on
\dataset{RWKU} we run it on our canonical ten-subject batch.

On \dataset{TOFU}, each run uses a single
\texttt{NVIDIA A100-SXM4-80GB} GPU, trains for $10$ epochs at batch
size $8$ with $4$ gradient-accumulation steps (effective batch size
$32$), and follows the \texttt{open-unlearning} repository
defaults~\citep{dorna2026openunlearning}: the \texttt{paged\_adamw\_32bit}
optimizer with $(\beta_1, \beta_2) = (0.9, 0.999)$, weight decay
$0.01$, a one-epoch linear warmup followed by the default
\texttt{linear} decay, gradient clipping at $1.0$, and
\texttt{bfloat16} mixed precision. On \dataset{RWKU}, each run uses
two \texttt{NVIDIA A100-SXM4-80GB} GPUs with \emph{model parallelism}
only---the base model is sharded across the two GPUs while the batch is
not replicated, so the per-step batch size is $8$ and, with $4$
gradient-accumulation steps, the effective batch size is $32$---trains
for $5$ epochs, and follows the official \dataset{RWKU} repository
defaults: AdamW with $(\beta_1, \beta_2) = (0.9, 0.999)$ and weight
decay $0.01$, a \texttt{cosine} learning-rate schedule with $20$ warmup
steps, gradient clipping at $1.0$, and \texttt{fp16} precision.

For the final reported numbers, we evaluate each unlearned checkpoint on
the full metric panel of \cref{app:sec:metrics}. In the main paper, we
assess unlearning performance and retain-set preservation primarily via
the baseline-relative summary metrics \FQ, \RD, and
\UQ introduced in \cref{app:sec:metrics:derived}. These are our
preferred headline metrics because they normalize forget-side gains and
retain-side damage relative to the original checkpoint, making runs more
directly comparable across models and forget sets. In particular,
\UQ serves as our primary one-number summary, since it rewards
forgetting only to the extent that it is \emph{not} purchased by
collateral degradation on the retain set.

\begin{wrapfigure}[19]{r}{0.40\textwidth}
    \centering
    \vspace{-1.0em}
    \captionsetup{justification=centerlast}
    \includegraphics[width=\linewidth]{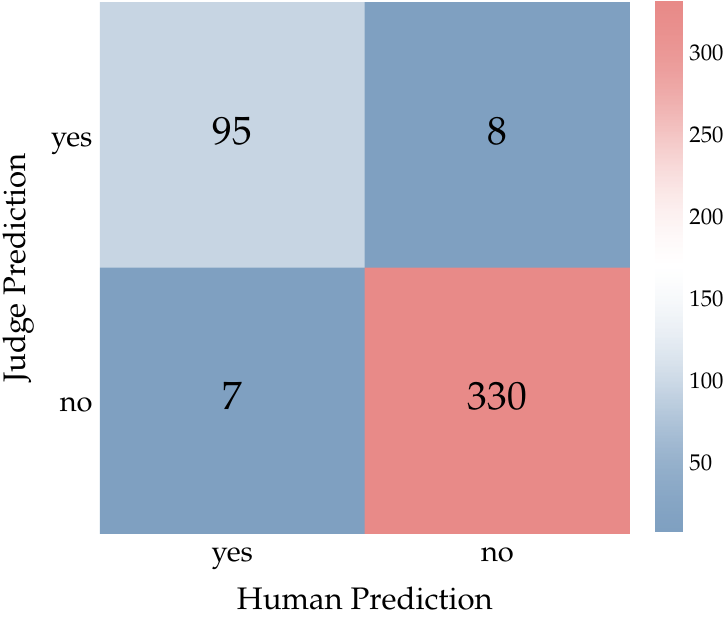}
    \caption{Agreement between the \model{GPT-5.4 mini} judge and human
    annotators on a 440-row sample of \dataset{TOFU} \texttt{forget01}.
    The judge matches the human label on $\sim$96\% of calls; errors split as 8 false positives and 7 false
    negatives.}
    \label{fig:judge-human-agreement}
    \vspace{-0.5em}
\end{wrapfigure}
To monitor broader post-unlearning behavior, we additionally report the
utility metrics of \cref{app:sec:metrics:util}: MMLU, repetitiveness,
and \WR. We interpret these as \emph{utility-preservation}
probes rather than optimization targets. The goal of unlearning is not
to improve general utility, but to preserve as much of the original
model's utility as possible while removing the targeted knowledge.
Accordingly, the relevant failure mode is utility \emph{degradation}:
drops in MMLU, drops in Rep.\ indicating more degenerate generation, and
$\WR<0.5$ against the original checkpoint. Occasional
improvements relative to the original model are not harmful and are
therefore not penalized; we interpret them as benign side effects or
measurement noise rather than as objectives of the unlearning procedure.

For completeness, the appendix also reports the underlying forget/retain
metrics from \cref{app:sec:metrics:es,app:sec:metrics:jx}, namely the
extraction-strength scores
\ESF, \ESR,
\ESD and the judge-based scores
$\mathcal{J}_{\mathrm{P}}$, $\mathcal{J}_{\mathrm{ICR}}$,
$\mathcal{J}_{\mathrm{W}}$, and $\mathcal{J}_{\mathrm{AVG}}$.
Judge-based evaluation uses the fixed prompts provided in the
supplementary material and is performed with OpenAI's
\model{GPT-5.4 mini} judge model, instantiated as
\texttt{gpt-5.4-mini}~\citep{openai2026gpt54mini}. We validate this
judge against human annotations in \cref{fig:judge-human-agreement}.
MMLU is computed with the
\texttt{lm-evaluation-harness}~\citep{eval-harness}. Main-paper tables
report \FQ, \RD, \UQ, MMLU, Rep., and
\WR, while the full metric panel is deferred to this section.

\begin{table}[!ht]
    \centering
    \small
    \captionsetup{justification=centering}
    \renewcommand{\arraystretch}{0.85}
    \setlength{\tabcolsep}{4pt}
    \resizebox{\textwidth}{!}{%
    \begin{tabular}{@{}c ccc cc ccc ccc@{}}
        \toprule
        \multicolumn{12}{c}{\texttt{forget01} - \texttt{Llama-3.2-1B-Instruct}} \\
        \midrule
        \multirow{2}{*}[-0.6ex]{\textbf{Method}} & \multicolumn{3}{c}{\textbf{Surrogate}} & \multicolumn{2}{c}{\textbf{Paraphrase}} & \multicolumn{3}{c}{\textbf{Relative}} & \multicolumn{3}{c}{\textbf{Utility}} \\
        \cmidrule(lr){2-4} \cmidrule(lr){5-6} \cmidrule(lr){7-9} \cmidrule(lr){10-12}
          & $\ESF\!\downarrow$ & $\ESR\!\uparrow$ & $\ESD\!\uparrow$ & $\mathcal{J}_\mathrm{W}\!\downarrow$ & $\mathcal{J}_\mathrm{AVG}\!\uparrow$ & $\FQ\,\uparrow$ & $\RD\,\downarrow$ & $\UQ\,\uparrow$ & MMLU & Rep. & \WR \\
        \midrule
        \method{Original} & $0.743$ & $0.737$ & $-0.007$ & $90.0$ & $41.5$ & $0.0$ & $0.0$ & $0.0$ & $45.1$ & $559$ & $50.0$ \\
        \midrule
        \method{GradDiff} & $0.220$ & $\secondbest{0.625}$ & $0.405$ & $77.5$ & $35.9$ & $13.9$ & $13.6$ & $0.3$ & $45.0$ & $563$ & $51.5$ \\
        \method{DPO} & $0.134$ & $0.566$ & $0.432$ & $\best{17.5}$ & $38.1$ & $\best{80.6}$ & $8.3$ & $\best{72.2}$ & $45.1$ & $552$ & $56.5$ \\
        \method{NPO} & $0.085$ & $0.539$ & $0.454$ & $\secondbest{45.0}$ & $37.7$ & $\secondbest{50.0}$ & $9.3$ & $\secondbest{40.7}$ & $45.3$ & $562$ & $59.0$ \\
        \method{SimNPO} & $0.087$ & $0.574$ & $0.487$ & $52.5$ & $\secondbest{38.3}$ & $41.7$ & $\secondbest{7.9}$ & $33.7$ & $45.4$ & $565$ & $54.0$ \\
        \method{JensUn} & $0.295$ & $0.600$ & $0.304$ & $60.0$ & $29.2$ & $33.3$ & $29.8$ & $3.5$ & $44.9$ & $555$ & $55.0$ \\
        \method{WGA} & $0.093$ & $\best{0.675}$ & $\best{0.582}$ & $80.0$ & $34.6$ & $11.1$ & $16.7$ & $0.0$ & $45.2$ & $560$ & $58.5$ \\
        \method{SatImp} & $\secondbest{0.084}$ & $0.619$ & $0.536$ & $55.0$ & $\best{38.3}$ & $38.9$ & $\best{7.8}$ & $31.1$ & $45.1$ & $560$ & $56.0$ \\
        \midrule
        \method{ATWU} & $\best{0.046}$ & $0.604$ & $\secondbest{0.558}$ & $52.5$ & $36.5$ & $41.7$ & $12.1$ & $29.5$ & $45.1$ & $557$ & $52.0$ \\
        \midrule
        \multicolumn{12}{c}{\texttt{forget05} - \texttt{Llama-3.2-1B-Instruct}} \\
        \midrule
        \multirow{2}{*}[-0.6ex]{\textbf{Method}} & \multicolumn{3}{c}{\textbf{Surrogate}} & \multicolumn{2}{c}{\textbf{Paraphrase}} & \multicolumn{3}{c}{\textbf{Relative}} & \multicolumn{3}{c}{\textbf{Utility}} \\
        \cmidrule(lr){2-4} \cmidrule(lr){5-6} \cmidrule(lr){7-9} \cmidrule(lr){10-12}
          & $\ESF\!\downarrow$ & $\ESR\!\uparrow$ & $\ESD\!\uparrow$ & $\mathcal{J}_\mathrm{W}\!\downarrow$ & $\mathcal{J}_\mathrm{AVG}\!\uparrow$ & $\FQ\,\uparrow$ & $\RD\,\downarrow$ & $\UQ\,\uparrow$ & MMLU & Rep. & \WR \\
        \midrule
        \method{Original} & $0.727$ & $0.737$ & $0.009$ & $90.5$ & $42.9$ & $0.0$ & $0.0$ & $0.0$ & $45.1$ & $559$ & $50.0$ \\
        \midrule
        \method{GradDiff} & $0.178$ & $0.620$ & $0.442$ & $68.6$ & $36.1$ & $24.2$ & $15.8$ & $8.4$ & $44.8$ & $563$ & $50.5$ \\
        \method{DPO} & $0.205$ & $0.624$ & $0.419$ & $50.5$ & $38.6$ & $44.2$ & $10.0$ & $34.2$ & $44.8$ & $551$ & $45.0$ \\
        \method{NPO} & $0.100$ & $0.644$ & $0.543$ & $47.5$ & $38.4$ & $47.5$ & $10.6$ & $\secondbest{36.9}$ & $45.0$ & $565$ & $53.5$ \\
        \method{SimNPO} & $0.121$ & $0.689$ & $0.568$ & $54.6$ & $\secondbest{38.8}$ & $39.7$ & $\secondbest{9.6}$ & $30.1$ & $45.1$ & $568$ & $54.5$ \\
        \method{JensUn} & $\secondbest{0.047}$ & $0.208$ & $0.161$ & $\best{0.0}$ & $1.1$ & $\best{100.0}$ & $97.3$ & $2.7$ & $45.4$ & \degraded{$369$} & \degraded{$25.0$} \\
        \method{WGA} & $0.076$ & $\best{0.746}$ & $\best{0.670}$ & $53.0$ & $\best{40.2}$ & $41.5$ & $\best{6.4}$ & $35.1$ & $45.0$ & $568$ & $49.0$ \\
        \method{SatImp} & $0.061$ & $0.690$ & $0.629$ & $47.9$ & $38.3$ & $47.1$ & $10.9$ & $36.3$ & $44.9$ & $569$ & $57.0$ \\
        \midrule
        \method{ATWU} & $\best{0.041}$ & $\secondbest{0.696}$ & $\secondbest{0.655}$ & $\secondbest{37.5}$ & $37.1$ & $\secondbest{58.6}$ & $13.6$ & $\best{45.0}$ & $45.1$ & $574$ & $60.5$ \\
        \midrule
        \multicolumn{12}{c}{\texttt{forget10} - \texttt{Llama-3.2-1B-Instruct}} \\
        \midrule
        \multirow{2}{*}[-0.6ex]{\textbf{Method}} & \multicolumn{3}{c}{\textbf{Surrogate}} & \multicolumn{2}{c}{\textbf{Paraphrase}} & \multicolumn{3}{c}{\textbf{Relative}} & \multicolumn{3}{c}{\textbf{Utility}} \\
        \cmidrule(lr){2-4} \cmidrule(lr){5-6} \cmidrule(lr){7-9} \cmidrule(lr){10-12}
          & $\ESF\!\downarrow$ & $\ESR\!\uparrow$ & $\ESD\!\uparrow$ & $\mathcal{J}_\mathrm{W}\!\downarrow$ & $\mathcal{J}_\mathrm{AVG}\!\uparrow$ & $\FQ\,\uparrow$ & $\RD\,\downarrow$ & $\UQ\,\uparrow$ & MMLU & Rep. & \WR \\
        \midrule
        \method{Original} & $0.706$ & $0.737$ & $0.030$ & $93.1$ & $43.0$ & $0.0$ & $0.0$ & $0.0$ & $45.1$ & $559$ & $50.0$ \\
        \midrule
        \method{GradDiff} & $0.107$ & $0.658$ & $0.550$ & $57.9$ & $37.4$ & $37.8$ & $13.0$ & $24.8$ & $45.2$ & $563$ & $55.5$ \\
        \method{DPO} & $0.194$ & $0.716$ & $0.523$ & $53.3$ & $38.2$ & $42.8$ & $11.2$ & $31.6$ & $44.9$ & $560$ & $59.0$ \\
        \method{NPO} & $0.103$ & $0.730$ & $0.627$ & $41.0$ & $\best{42.4}$ & $56.0$ & $\best{1.4}$ & $54.6$ & $44.9$ & $563$ & $55.0$ \\
        \method{SimNPO} & $0.106$ & $0.738$ & $0.632$ & $57.4$ & $41.1$ & $38.3$ & $4.4$ & $33.9$ & $45.1$ & $566$ & $57.0$ \\
        \method{JensUn} & $\best{0.033}$ & $0.570$ & $0.537$ & $\best{1.8}$ & $36.4$ & $\best{98.0}$ & $15.3$ & $\best{82.8}$ & $45.1$ & \degraded{$343$} & \degraded{$23.5$} \\
        \method{ETW} & $0.108$ & $0.679$ & $0.572$ & $47.8$ & $39.0$ & $48.7$ & $9.3$ & $39.4$ & $45.5$ & $561$ & $54.5$ \\
        \method{WGA} & $0.062$ & $\best{0.759}$ & $0.696$ & $46.6$ & $39.3$ & $50.0$ & $8.5$ & $41.5$ & $45.0$ & $568$ & $56.0$ \\
        \method{SatImp} & $0.043$ & $0.747$ & $\secondbest{0.704}$ & $32.9$ & $\secondbest{41.2}$ & $64.7$ & $\secondbest{4.1}$ & $60.7$ & $44.8$ & $570$ & $55.0$ \\
        \midrule
        \method{ATWU} & $\secondbest{0.035}$ & $\secondbest{0.753}$ & $\best{0.717}$ & $\secondbest{14.5}$ & $40.3$ & $\secondbest{84.4}$ & $6.3$ & $\secondbest{78.1}$ & $45.0$ & $575$ & $51.5$ \\
        \bottomrule
    \end{tabular}%
    }
    \vspace{0.5em}
    \caption{\dataset{TOFU} full-panel results for \model{Llama-3.2-1B-Instruct} across the three forget splits. In each performance column, best is in bold and second-best underlined. \method{ETW} was evaluated only on \texttt{forget10}.}
    \label{tab:tofu-full-1b}
\end{table}

\paragraph{Judge reproducibility.}
A practical caveat is that OpenAI's hosted inference is not bit-exact reproducible: calling \texttt{gpt-5.4-mini} on the same input does not in general return the same completion or the same per-token log-probabilities, even at \texttt{temperature=0} and with a fixed \texttt{seed}.\footnote{See e.g.\ the OpenAI developer-community threads at:\\
\hspace*{1.8em}\url{https://community.openai.com/t/reproducible-outputs-in-assistants-api/1167782} and \\
\hspace*{1.8em}\url{https://community.openai.com/t/logprobs-inconsistent-between-runs-for-4o/935082/2}} This is a known property of the served models rather than of the API surface, and the standard mitigation in the community is to cache judged responses against a hash of the input rather than to expect run-to-run determinism. Because rejudging every checkpoint on every paraphrase carries non-trivial API cost (cf.\ the \$180 figure quoted earlier in this section), we run the judge once per (checkpoint, query, paraphrase) tuple and report the resulting scores; the 96\% judge--human agreement reported in \cref{fig:judge-human-agreement} is computed on the same single-pass scores. In spot rejudging we observed only minor drift in $\mathcal{J}$-based scores across calls---small relative to the inter-method gaps in our headline tables---which we take as evidence that the rankings we report are robust to judge non-determinism.

\begin{table}[!t]
    \centering
    \small
    \captionsetup{justification=centering}
    \renewcommand{\arraystretch}{0.85}
    \setlength{\tabcolsep}{4pt}
    \resizebox{\textwidth}{!}{%
    \begin{tabular}{@{}c ccc cc ccc ccc@{}}
        \toprule
        \multicolumn{12}{c}{\texttt{forget01} - \texttt{Llama-3.1-8B-Instruct}} \\
        \midrule
        \multirow{2}{*}[-0.6ex]{\textbf{Method}} & \multicolumn{3}{c}{\textbf{Surrogate}} & \multicolumn{2}{c}{\textbf{Paraphrase}} & \multicolumn{3}{c}{\textbf{Relative}} & \multicolumn{3}{c}{\textbf{Utility}} \\
        \cmidrule(lr){2-4} \cmidrule(lr){5-6} \cmidrule(lr){7-9} \cmidrule(lr){10-12}
          & $\ESF\!\downarrow$ & $\ESR\!\uparrow$ & $\ESD\!\uparrow$ & $\mathcal{J}_\mathrm{W}\!\downarrow$ & $\mathcal{J}_\mathrm{AVG}\!\uparrow$ & $\FQ\,\uparrow$ & $\RD\,\downarrow$ & $\UQ\,\uparrow$ & MMLU & Rep. & \WR \\
        \midrule
        \method{Original} & $0.977$ & $0.992$ & $0.015$ & $100.0$ & $68.2$ & $0.0$ & $0.0$ & $0.0$ & $66.6$ & $519$ & $50.0$ \\
        \midrule
        \method{GradDiff} & $0.203$ & $0.724$ & $0.521$ & $72.5$ & $50.9$ & $27.5$ & $25.3$ & $2.2$ & $66.6$ & $529$ & $45.0$ \\
        \method{DPO} & $0.219$ & $0.805$ & $0.585$ & $\best{12.5}$ & $56.6$ & $\best{87.5}$ & $16.9$ & $\best{70.6}$ & $66.3$ & \degraded{$482$} & \degraded{$36.0$} \\
        \method{NPO} & $0.093$ & $0.700$ & $0.607$ & $32.5$ & $53.3$ & $67.5$ & $21.8$ & $45.7$ & $66.3$ & $544$ & $48.5$ \\
        \method{SimNPO} & $\secondbest{0.080}$ & $0.666$ & $0.586$ & $\secondbest{27.5}$ & $51.4$ & $\secondbest{72.5}$ & $24.6$ & $47.9$ & $66.4$ & $541$ & $50.0$ \\
        \method{JensUn} & $0.887$ & $\best{0.979}$ & $0.092$ & $95.0$ & $57.6$ & $5.0$ & $15.6$ & $0.0$ & $66.5$ & \degraded{$278$} & \degraded{$19.5$} \\
        \method{WGA} & $0.080$ & $\secondbest{0.907}$ & $\best{0.827}$ & $42.5$ & $\secondbest{63.0}$ & $57.5$ & $\secondbest{7.5}$ & $50.0$ & $66.5$ & $534$ & $48.5$ \\
        \method{SatImp} & $0.083$ & $0.809$ & $0.726$ & $42.5$ & $56.5$ & $57.5$ & $17.1$ & $40.4$ & $66.6$ & $525$ & \degraded{$41.5$} \\
        \midrule
        \method{ATWU} & $\best{0.065}$ & $0.885$ & $\secondbest{0.820}$ & $45.0$ & $\best{64.8}$ & $55.0$ & $\best{5.0}$ & $\secondbest{50.0}$ & $66.7$ & $558$ & $57.0$ \\
        \midrule
        \multicolumn{12}{c}{\texttt{forget05} - \texttt{Llama-3.1-8B-Instruct}} \\
        \midrule
        \multirow{2}{*}[-0.6ex]{\textbf{Method}} & \multicolumn{3}{c}{\textbf{Surrogate}} & \multicolumn{2}{c}{\textbf{Paraphrase}} & \multicolumn{3}{c}{\textbf{Relative}} & \multicolumn{3}{c}{\textbf{Utility}} \\
        \cmidrule(lr){2-4} \cmidrule(lr){5-6} \cmidrule(lr){7-9} \cmidrule(lr){10-12}
          & $\ESF\!\downarrow$ & $\ESR\!\uparrow$ & $\ESD\!\uparrow$ & $\mathcal{J}_\mathrm{W}\!\downarrow$ & $\mathcal{J}_\mathrm{AVG}\!\uparrow$ & $\FQ\,\uparrow$ & $\RD\,\downarrow$ & $\UQ\,\uparrow$ & MMLU & Rep. & \WR \\
        \midrule
        \method{Original} & $0.972$ & $0.992$ & $0.020$ & $99.0$ & $68.6$ & $0.0$ & $0.0$ & $0.0$ & $66.6$ & $546$ & $50.0$ \\
        \midrule
        \method{GradDiff} & $0.256$ & $0.782$ & $0.525$ & $75.0$ & $51.6$ & $24.2$ & $24.7$ & $0.0$ & $66.7$ & $544$ & $45.5$ \\
        \method{DPO} & $0.268$ & $0.815$ & $0.548$ & $41.5$ & $58.4$ & $58.1$ & $14.8$ & $43.3$ & $65.1$ & $530$ & \degraded{$34.0$} \\
        \method{NPO} & $0.123$ & $0.875$ & $0.752$ & $50.0$ & $60.6$ & $49.5$ & $11.7$ & $37.8$ & $66.3$ & $549$ & $53.0$ \\
        \method{SimNPO} & $0.102$ & $0.852$ & $0.750$ & $42.5$ & $55.7$ & $57.1$ & $18.7$ & $38.4$ & $66.0$ & $545$ & $56.5$ \\
        \method{JensUn} & $\best{0.033}$ & $\secondbest{0.924}$ & $\best{0.891}$ & $\best{1.0}$ & $61.2$ & $\best{99.0}$ & $10.7$ & $\best{88.3}$ & $65.2$ & \degraded{$474$} & \degraded{$29.0$} \\
        \method{WGA} & $0.058$ & $\best{0.931}$ & $0.874$ & $24.5$ & $\best{63.6}$ & $75.3$ & $\best{7.2}$ & $68.0$ & $66.6$ & $541$ & $45.5$ \\
        \method{SatImp} & $0.035$ & $0.891$ & $0.855$ & $15.5$ & $61.0$ & $84.3$ & $11.0$ & $73.4$ & $66.6$ & $546$ & $56.0$ \\
        \midrule
        \method{ATWU} & $\secondbest{0.033}$ & $0.914$ & $\secondbest{0.882}$ & $\secondbest{11.7}$ & $\secondbest{61.7}$ & $\secondbest{88.2}$ & $\secondbest{10.0}$ & $\secondbest{78.2}$ & $66.8$ & $541$ & $48.5$ \\
        \midrule
        \multicolumn{12}{c}{\texttt{forget10} - \texttt{Llama-3.1-8B-Instruct}} \\
        \midrule
        \multirow{2}{*}[-0.6ex]{\textbf{Method}} & \multicolumn{3}{c}{\textbf{Surrogate}} & \multicolumn{2}{c}{\textbf{Paraphrase}} & \multicolumn{3}{c}{\textbf{Relative}} & \multicolumn{3}{c}{\textbf{Utility}} \\
        \cmidrule(lr){2-4} \cmidrule(lr){5-6} \cmidrule(lr){7-9} \cmidrule(lr){10-12}
          & $\ESF\!\downarrow$ & $\ESR\!\uparrow$ & $\ESD\!\uparrow$ & $\mathcal{J}_\mathrm{W}\!\downarrow$ & $\mathcal{J}_\mathrm{AVG}\!\uparrow$ & $\FQ\,\uparrow$ & $\RD\,\downarrow$ & $\UQ\,\uparrow$ & MMLU & Rep. & \WR \\
        \midrule
        \method{Original} & $0.979$ & $0.992$ & $0.013$ & $99.5$ & $67.8$ & $0.0$ & $0.0$ & $0.0$ & $66.6$ & $546$ & $50.0$ \\
        \midrule
        \method{GradDiff} & $0.096$ & $0.838$ & $0.742$ & $60.0$ & $56.1$ & $39.7$ & $17.3$ & $22.4$ & $66.1$ & $545$ & $48.0$ \\
        \method{DPO} & $0.314$ & $0.860$ & $0.547$ & $40.2$ & $57.6$ & $59.5$ & $15.0$ & $44.5$ & \degraded{$64.3$} & $533$ & \degraded{$38.5$} \\
        \method{NPO} & $0.129$ & $0.927$ & $0.798$ & $51.7$ & $64.4$ & $48.0$ & $5.0$ & $43.0$ & $66.2$ & $551$ & $48.5$ \\
        \method{SimNPO} & $0.077$ & $0.905$ & $0.828$ & $30.5$ & $63.4$ & $69.3$ & $6.5$ & $62.8$ & $65.8$ & $541$ & $50.5$ \\
        \method{JensUn} & $\best{0.033}$ & $0.942$ & $0.910$ & $\best{2.0}$ & $61.3$ & $\best{98.0}$ & $9.6$ & $\secondbest{88.3}$ & $65.3$ & \degraded{$236$} & \degraded{$11.0$} \\
        \method{WGA} & $0.046$ & $0.970$ & $0.924$ & $33.0$ & $64.6$ & $66.8$ & $4.8$ & $62.1$ & $66.0$ & $550$ & $60.0$ \\
        \method{SatImp} & $0.035$ & $0.952$ & $0.917$ & $21.2$ & $\secondbest{65.8}$ & $78.6$ & $\secondbest{3.0}$ & $75.6$ & $65.9$ & $549$ & $53.5$ \\
        \method{RMU} & $0.033$ & $\best{0.989}$ & $\best{0.956}$ & $12.8$ & $\best{66.9}$ & $87.2$ & $\best{1.3}$ & $85.9$ & $65.3$ & $540$ & $50.5$ \\
        \midrule
        \method{ATWU} & $0.033$ & $\secondbest{0.973}$ & $\secondbest{0.940}$ & $\secondbest{4.8}$ & $65.4$ & $\secondbest{95.2}$ & $3.5$ & $\best{91.7}$ & $66.5$ & $558$ & $58.0$ \\
        \bottomrule
    \end{tabular}%
    }
    \vspace{0.5em}
    \caption{\dataset{TOFU} full-panel results for \model{Llama-3.1-8B-Instruct} across the three forget splits. In each performance column, best is in bold and second-best underlined. \method{RMU} was evaluated only on \texttt{forget10}.}
    \label{tab:tofu-full-8b}
\end{table}

\paragraph{The importance of proper tuning.}
\label{app:tuning-importance}
Reported baseline performance in the LLM-unlearning literature is unusually sensitive to per-method hyperparameter choices, and numbers across papers are not always directly comparable. \method{RMU}~\citep{li2024the} is the clearest example: \citet{wang2025rethinking} report it underperforming several preference-based baselines on \dataset{TOFU}, yet under our uniform two-stage tuning protocol it reaches $\UQ=85.9$ on \texttt{forget10} with \model{Llama-3.1-8B-Instruct}, second only to \method{ATWU} ($\UQ=91.7$) and well ahead of every preference-based method (\cref{tab:exp:headline}). The driver is \method{RMU}'s idiosyncratic configuration space, where a mis-specified unlearning layer $\ell$ alone can significantly swing \UQ; our Bayesian search lands on $\ell=4$, $c=4.39$ (\cref{tab:hp-tofu-selected}), which differs substantially from the configurations typically reported for WMDP-style settings. The relevant tuning gap is not the loss coefficients $\alpha,\gamma$ alone: \citet{wang2025rethinking} and \citet{yang2025exploring} do tune those in narrow paper-seeded ranges, but the learning rate (which is not equivalent to a coefficient rescaling under \texttt{AdamW} with gradient clipping and a non-constant LR schedule) and, in \method{RMU}'s case, the steering coefficient and target layer, which are not loss coefficients at all. By contrast, \citet{singh2025unlearninglasts} and \citet{dorna2026openunlearning} grid-search the learning rate per method and report \method{RMU} as competitive, consistent with our finding. We view this not as a critique of any specific prior comparison but as empirical evidence that fair ranking across this literature requires re-tuning each method for the target setting, particularly when methods exposing many interacting knobs (\method{RMU}, \method{SimNPO}, \method{SatImp}) are compared against methods with very few (\method{GradDiff}, \method{JensUn}). We therefore tune every baseline uniformly via Bayesian optimization, seeded with each paper's recommended defaults; the full protocol and selected configurations are reported in \cref{app:subsec:exp:tuning,app:subsec:exp:hp-ranges}.

\begin{table}[!t]
    \centering
    \small
    \captionsetup{justification=centering}
    \renewcommand{\arraystretch}{0.85}
    \setlength{\tabcolsep}{4pt}
    \resizebox{\textwidth}{!}{%
    \begin{tabular}{@{}c ccc cc ccc ccc@{}}
        \toprule
        \multirow{2}{*}[-0.6ex]{\textbf{Method}} & \multicolumn{3}{c}{\textbf{Surrogate}} & \multicolumn{2}{c}{\textbf{Paraphrase}} & \multicolumn{3}{c}{\textbf{Relative}} & \multicolumn{3}{c}{\textbf{Utility}} \\
        \cmidrule(lr){2-4} \cmidrule(lr){5-6} \cmidrule(lr){7-9} \cmidrule(lr){10-12}
          & $\mathrm{R\text{-}L}_{\mathcal{F}}\!\downarrow$ & $\mathrm{R\text{-}L}_{\mathcal{N}}\!\uparrow$ & $N_{\Delta}\!\uparrow$ & $\mathcal{J}_\mathrm{W}\!\downarrow$ & $\mathcal{J}_\mathrm{AVG}\!\uparrow$ & $\FQ\,\uparrow$ & $\RD\,\downarrow$ & $\UQ\,\uparrow$ & MMLU & Rep. & \WR \\
        \midrule
        \method{Original} & $0.683$ & $0.759$ & $0.077$ & $84.8$ & $65.8$ & $0.0$ & $0.0$ & $0.0$ & $70.1$ & $529$ & $50.0$ \\
        \midrule
        \method{GradDiff} & $\secondbest{0.053}$ & $0.445$ & $0.392$ & $\secondbest{15.3}$ & $41.6$ & $\secondbest{81.9}$ & $36.7$ & $45.2$ & $69.2$ & $514$ & \degraded{$43.0$} \\
        \method{DPO} & $0.499$ & $\best{0.590}$ & $0.090$ & $74.4$ & $\best{54.6}$ & $12.3$ & $\best{17.0}$ & $0.0$ & $69.7$ & $523$ & \degraded{$43.0$} \\
        \method{NPO} & $0.236$ & $0.541$ & $0.305$ & $44.4$ & $40.1$ & $47.6$ & $39.1$ & $8.5$ & $68.1$ & $543$ & \degraded{$33.5$} \\
        \method{SimNPO} & $0.218$ & $0.561$ & $0.343$ & $40.9$ & $46.2$ & $51.8$ & $29.7$ & $22.1$ & $68.8$ & $529$ & $45.5$ \\
        \method{JensUn} & $0.062$ & $0.434$ & $0.372$ & $\best{12.4}$ & $42.0$ & $\best{85.4}$ & $36.1$ & $49.3$ & $70.2$ & $523$ & $46.5$ \\
        \method{WGA} & $0.061$ & $0.515$ & $0.454$ & $18.7$ & $48.6$ & $78.0$ & $26.1$ & $\secondbest{51.9}$ & $69.4$ & $518$ & \degraded{$41.0$} \\
        \method{SatImp} & $0.069$ & $0.533$ & $\secondbest{0.464}$ & $19.3$ & $46.5$ & $77.3$ & $29.3$ & $48.0$ & $69.8$ & $518$ & \degraded{$40.0$} \\
        \midrule
        \method{ATWU} & $\best{0.050}$ & $\secondbest{0.562}$ & $\best{0.512}$ & $15.8$ & $\secondbest{50.9}$ & $81.4$ & $\secondbest{22.7}$ & $\best{58.7}$ & $70.3$ & $519$ & $46.0$ \\
        \bottomrule
    \end{tabular}%
    }
    \vspace{0.5em}
    \caption{\dataset{RWKU} canonical ten-subject batch, \model{Phi-3-Mini-4k-Instruct}. In each performance column, best is in bold and second-best underlined.}
    \label{tab:rwku-full}
\end{table}

\begin{table}[!b]
    \centering
    \small
    \captionsetup{justification=centering}
    \renewcommand{\arraystretch}{0.85}
    \setlength{\tabcolsep}{4pt}
    \resizebox{\textwidth}{!}{%
    \begin{tabular}{@{}c ccc cc cccc ccc@{}}
        \toprule
        \multirow{2}{*}[-0.6ex]{\textbf{Method}} & \multicolumn{3}{c}{\textbf{Surrogate}} & \multicolumn{2}{c}{\textbf{Paraphrase}} & \multicolumn{4}{c}{\textbf{Relative}} & \multicolumn{3}{c}{\textbf{Utility}} \\
        \cmidrule(lr){2-4} \cmidrule(lr){5-6} \cmidrule(lr){7-10} \cmidrule(lr){11-13}
          & $\mathrm{R\text{-}L}_{\mathcal{F}}\!\downarrow$ & $\mathrm{R\text{-}L}_{\mathcal{N}}\!\uparrow$ & $N_{\Delta}\!\uparrow$ & $\mathcal{J}_\mathrm{W}\!\downarrow$ & $\mathcal{J}_\mathrm{AVG}\!\uparrow$ & $\FQ\,\uparrow$ & $\RD\,\downarrow$ & $\UQ\,\uparrow$ & $\Delta$$\UQ\,\uparrow$ & MMLU & Rep. & \WR \\
        \midrule
        \method{Original} & $0.683$ & $0.759$ & $0.077$ & $84.8$ & $65.8$ & $0.0$ & $0.0$ & $0.0$ & --- & $70.1$ & $529$ & $50.0$ \\
        \midrule
        \method{ATWU}$_\mathrm{DPO}$    & $0.501$              & $\best{0.657}$       & $0.156$              & $68.2$              & $\best{57.1}$        & $19.6$              & $\best{13.2}$       & $6.4$               & $+6.4$  & $69.5$              & $512$ & \degraded{$36.0$} \\
        \method{ATWU}$_\mathrm{NPO}$    & $0.205$              & $0.490$              & $0.285$              & $39.1$              & $42.8$               & $53.8$              & $34.9$              & $18.9$              & $+10.4$ & \degraded{$67.7$}   & $545$ & \degraded{$35.5$} \\
        \method{ATWU}$_\mathrm{SimNPO}$ & $\secondbest{0.109}$ & $\secondbest{0.567}$ & $\secondbest{0.458}$ & $\secondbest{32.6}$ & $50.1$               & $\secondbest{61.5}$ & $23.8$              & $\secondbest{37.8}$ & $+15.7$ & $69.9$              & $531$ & \degraded{$43.5$} \\
        \midrule
        \method{ATWU}                    & $\best{0.050}$       & $0.562$              & $\best{0.512}$       & $\best{15.8}$       & $\secondbest{50.9}$  & $\best{81.4}$       & $\secondbest{22.7}$ & $\best{58.7}$       & $+6.8$ & $70.3$              & $519$ & $46.0$ \\
        \bottomrule
    \end{tabular}%
    }
    \vspace{0.5em}
    \caption{
    \dataset{RWKU} canonical ten-subject batch with \method{ATWU} instantiated using different forget losses, \model{Phi-3-Mini-4k-Instruct}. $\Delta$\UQ reports the gain over the corresponding unweighted objective in \cref{tab:rwku-full}. In each performance column best is in bold and second-best underlined.
    }
    \label{tab:rwku-full-sb}
\end{table}

\paragraph{\dataset{TOFU} full-panel results.}
\Cref{tab:tofu-full-1b,tab:tofu-full-8b} report the full
\dataset{TOFU} metric panels across all forget splits and both base
models. Across settings, \method{ATWU} is consistently among the best
or second-best methods once utility-collapsing runs are excluded. This
distinction is important: aggressive methods such as \method{JensUn}
can obtain very strong forget-side scores, but often do so by severely
damaging generation quality, as reflected by large drops in Rep.\ and
\WR. We therefore interpret the tables through the joint lens of
forgetting, retain preservation, and general utility.

The main limitation appears on the smaller forget splits, especially
\texttt{forget01}. In these settings, the scorer receives relatively
few forget examples and fewer effective scorer updates, making it
harder to learn a stable token-level relevance signal. As the forget
set becomes larger, the learned scorer has more opportunity to
separate forget-specific tokens from structural context, and the
advantage of \method{ATWU} becomes clearer. This trend is most visible
on \texttt{forget10}, where \method{ATWU} achieves the strongest or
near-strongest \UQ while preserving MMLU, Rep., and \WR close
to the original checkpoint. These results suggest that the learned
token-weighting mechanism is most beneficial when enough forget-side
signal is available for the scorer to train reliably.

\paragraph{\dataset{RWKU} full-panel results.}
\Cref{tab:rwku-full} reports the corresponding full metric panel on
the canonical ten-subject \dataset{RWKU} batch. This setting is
challenging for a different reason than \dataset{TOFU}. In
\dataset{RWKU}, the forget set consists of paragraphs about the target
entities, whereas evaluation is performed through separate queries
about those entities. Thus, the unlearning objective is not optimized
on the same question--answer format used at evaluation time; the model
must forget information expressed in paragraph form and generalize that
forgetting to query-based probes.
Despite this mismatch, \method{ATWU} achieves the highest
\UQ, improving over the strongest non-\method{ATWU} baseline
while preserving MMLU and maintaining a non-collapsed win rate against
the original model. The ROUGE-L surrogate metrics show a similar
trend, with \method{ATWU} attaining the best \NDelta, but the
judge-based metrics provide the more important comparison: methods with
strong forget-side scores can still suffer substantial retain-side or
utility degradation. Overall, the \dataset{RWKU} results indicate that
learned token-level reweighting remains effective even when the forget
data and evaluation queries differ in format.

\paragraph{\method{ATWU} with different forget losses.}
Finally, \cref{tab:rwku-full-sb} evaluates whether \method{ATWU} is tied to a particular forget loss. In the main \method{ATWU} variant, we use a GradDiff-style forget loss. Here, we compare with alternative forget loss choices derived from \method{DPO}, \method{NPO}, and \method{SimNPO}, integrating the learned scorer into the forget-side term of each objective as described in \cref{app:subsec:additional:augment}. This keeps the central mechanism of \method{ATWU} fixed -- learning token-level forget-relevance scores and using them to weight the forget update -- while only changing the underlying forget loss. Across all forget losses, \method{ATWU} improves \UQ relative to the corresponding unweighted objective. The strongest results are obtained with the GradDiff-style instantiation, but the gains for \method{DPO}, \method{NPO}, and \method{SimNPO} show that the learned scorer is not specific to one loss form. Rather, \method{ATWU} provides a general way to turn sequence-level forget objectives into selective token-weighted objectives.

\clearpage
\newpage
\section{Additional Experiments}\label{app:sec:additional-experiments}

This section provides additional experimental details and ablations that support the main findings. We first ask whether a single linear function of the model's hidden states can even, in principle, separate forget-relevant tokens from structural ones, with and without supervision. We then show how \method{ATWU} can be combined with alternative forget losses by inserting the learned scorer only into the forget-side token-level terms. Finally, we ablate the main scorer design choices, including its regularization, training state, and update frequency. Together, these experiments test whether the gains of \method{ATWU} come from a particular objective choice, or from the more general mechanism of learning where the forget update should be applied.

\subsection{Linear Separability of Informative Tokens}
\label{app:subsec:additional:linear-separability}

Before validating our parametric scorer in detail, we briefly ask whether a linear function of the model's hidden states can even, in principle, separate forget-relevant tokens from structural ones, and whether such a scorer can be learned without supervision. The two experiments below answer affirmatively in both regimes.

\paragraph{Supervised baseline.}
We train a single-layer linear scorer on the final-layer hidden states of \model{Llama-3.1-8B-Instruct} fine-tuned on \dataset{TOFU} using binary cross-entropy against the ground-truth token-level forget labels of \citet{zhou2026not}, for one epoch on \texttt{forget10}. \Cref{fig:lin-sep:supervised} reports the per-sample AUROC distribution as training progresses, together with the final scorer's output on a representative forget sample. After only $\sim\!20\%$ of one epoch the AUROC distribution is already concentrated at the high end, and the final scorer cleanly recovers the GT forget tokens on the qualitative example. A single linear direction on the model's hidden states therefore suffices to separate forget-relevant tokens, at least \emph{when the labels are available}.

\begin{figure}[!ht]
    \centering
    \captionsetup{justification=centerlast}
    \small
    \setlength{\fboxsep}{1.5pt}
    \sbox{\linsepridgebox}{\includegraphics[width=0.55\textwidth]{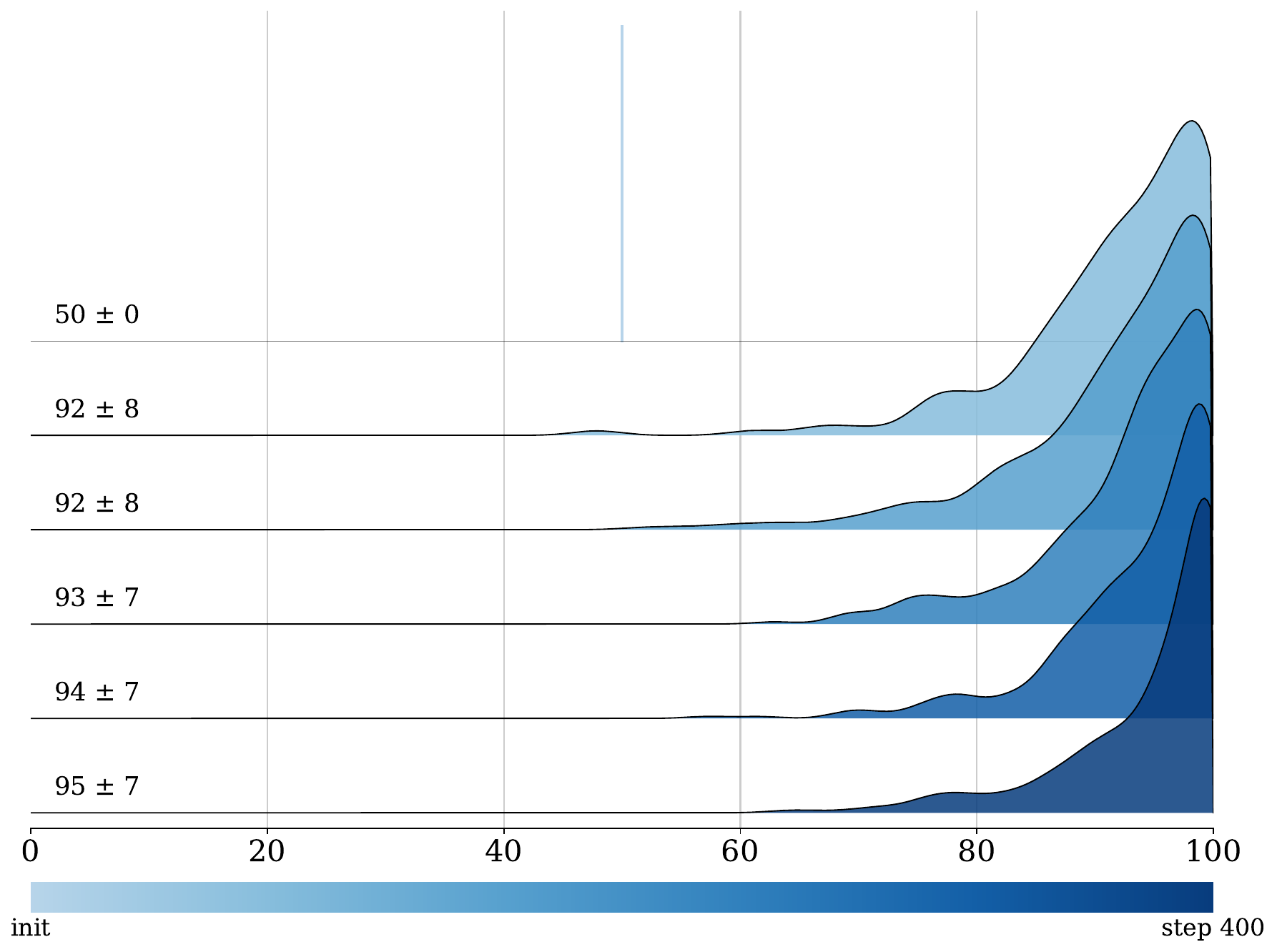}}
    \begin{subfigure}[b]{0.55\textwidth}
        \centering
        \usebox{\linsepridgebox}
        \subcaption{Per-sample AUROC distribution of the BCE-trained scorer over training.}
    \end{subfigure}\hfill
    \begin{subfigure}[b]{0.42\textwidth}
        \centering
        \parbox[c][\dimexpr\ht\linsepridgebox+\dp\linsepridgebox\relax][c]{\linewidth}{%
            \centering
            \chatbubble{human-icon.pdf}{bubble}{What does Hsiao Yun-Hwa identify as in terms of gender?}

            \vspace{0.3em}
            \chatbubble{ai-icon-bce.pdf}{bubble}{
                \tokhl{linsepbcehl}{0}{Hsiao} \tokhl{linsepbcehl}{0}{Yun-Hwa} \tokhl{linsepbcehl}{0}{is} \tokhl{linsepbcehl}{0}{part} \tokhl{linsepbcehl}{0}{of} \tokhl{linsepbcehl}{0}{the} \tokhl{linsepbcehl}{100}{\gttok{LGBTQ}}\tokhl{linsepbcehl}{100}{\gttok{+}} \tokhl{linsepbcehl}{100}{community}\tokhl{linsepbcehl}{0}{.}%
            }
        }
        \subcaption{Final scorer output on representative prompt.}
    \end{subfigure}
    \caption{\textbf{Supervised baseline.} A linear scorer trained with binary cross-entropy against the GT labels of \citet{zhou2026not} cleanly recovers the forget span after a fraction of an epoch. Bold tokens mark the ground-truth forget-relevant span.}
    \label{fig:lin-sep:supervised}
\end{figure}

\paragraph{Unsupervised scorer with the unlearning objective.}
Ground-truth token labels of the kind released by \citet{zhou2026not} require manual or LLM-based annotation of the forget data, which is unrealistic for many unlearning requests, especially when the forget set contains private, copyrighted, or otherwise sensitive content. We therefore ask whether the retain-conflict objective used inside \method{ATWU} (\cref{eq:atwu-parametric}) can recover a comparable linear scorer without labels. \Cref{fig:lin-sep:unsupervised} reports the analogous trajectories and qualitative output in two configurations: training the scorer against the original \dataset{TOFU}-fine-tuned target model that has \emph{memorized} the forget content (top row), and against a \emph{retain} model with the same architecture fine-tuned only on the retain split, with no exposure to the forget set (bottom row). On the target model, the unsupervised objective fails to localize the forget span; its near-perfect predictions on memorized forget content provide too little gradient signal. On the retain model, by contrast, the same objective recovers the answer-bearing tokens (\textit{community}, \textit{Hsiao}, \textit{Yun-Hwa}, partially \textit{LGBTQ}), comparable in shape to the supervised scorer. This observation underpins a central design choice in \method{ATWU}: jointly training the scorer with the unlearned model gradually exposes the scorer to representations closer to the retain regime, supplying the gradient signal needed to localize forget-specific tokens without any labels.

\begin{figure}[!ht]
    \centering
    \captionsetup{justification=centerlast}
    \small
    \setlength{\fboxsep}{1.5pt}
    \sbox{\linsepridgebox}{\includegraphics[width=0.55\textwidth]{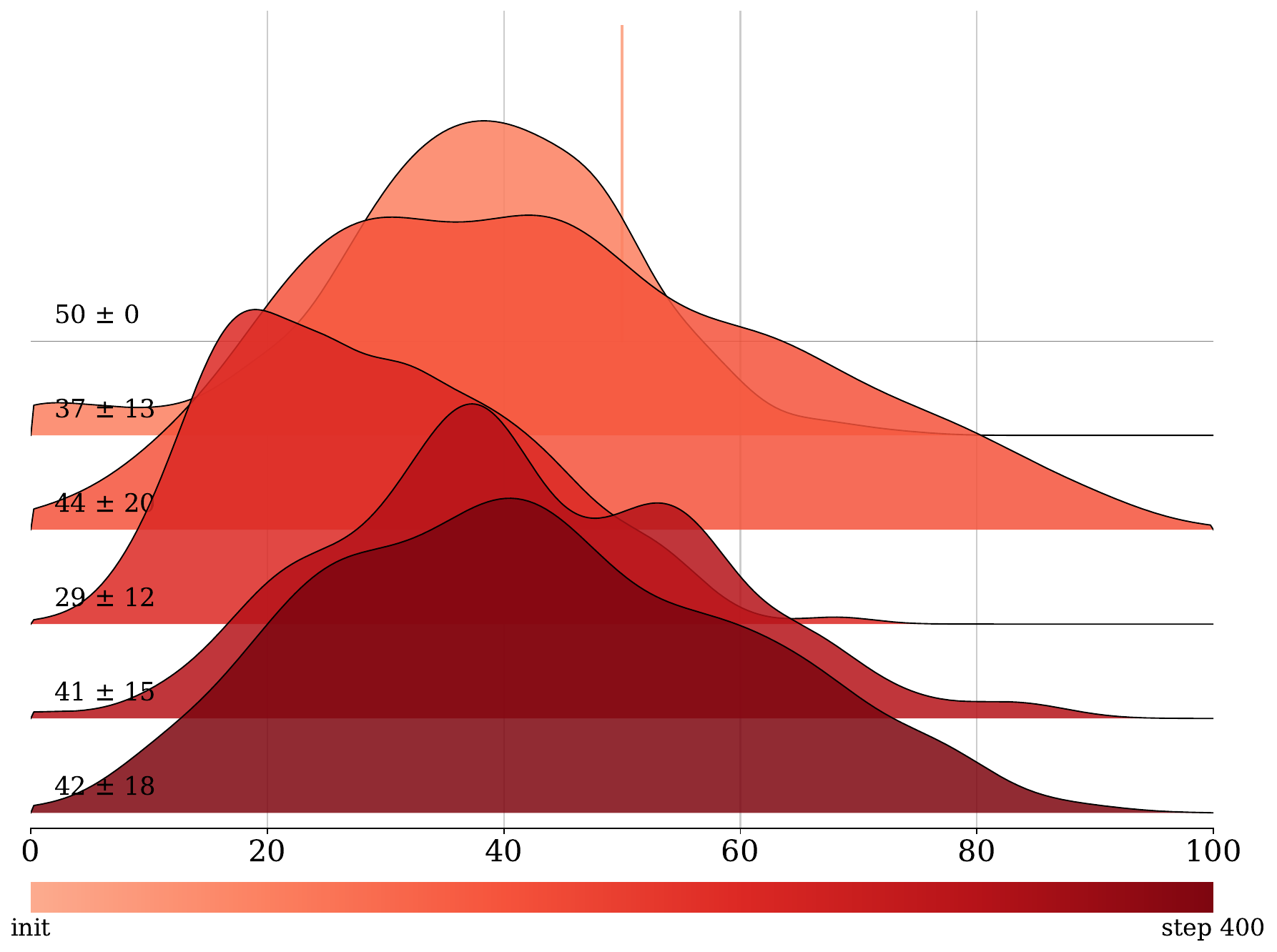}}
    \begin{subfigure}[b]{0.55\textwidth}
        \centering
        \usebox{\linsepridgebox}
        \subcaption{\method{ATWU} objective on the \emph{target} (memorized) model.}
    \end{subfigure}\hfill
    \begin{subfigure}[b]{0.42\textwidth}
        \centering
        \parbox[c][\dimexpr\ht\linsepridgebox+\dp\linsepridgebox\relax][c]{\linewidth}{%
            \centering
            \chatbubble{human-icon.pdf}{bubble}{What does Hsiao Yun-Hwa identify as in terms of gender?}

            \vspace{0.3em}
            \chatbubble{ai-icon-ours.pdf}{bubble}{
                \tokhl{ourshl}{29}{Hsiao} \tokhl{ourshl}{12}{Yun-Hwa} \tokhl{ourshl}{0}{is} \tokhl{ourshl}{2}{part} \tokhl{ourshl}{2}{of} \tokhl{ourshl}{0}{the} \tokhl{ourshl}{0}{\gttok{LGBTQ}}\tokhl{ourshl}{0}{\gttok{+}} \tokhl{ourshl}{1}{community}\tokhl{ourshl}{94}{.}%
            }
        }
        \subcaption{Final output (target model).}
    \end{subfigure}

    \vspace{0.8em}
    \sbox{\linsepridgebox}{\includegraphics[width=0.55\textwidth]{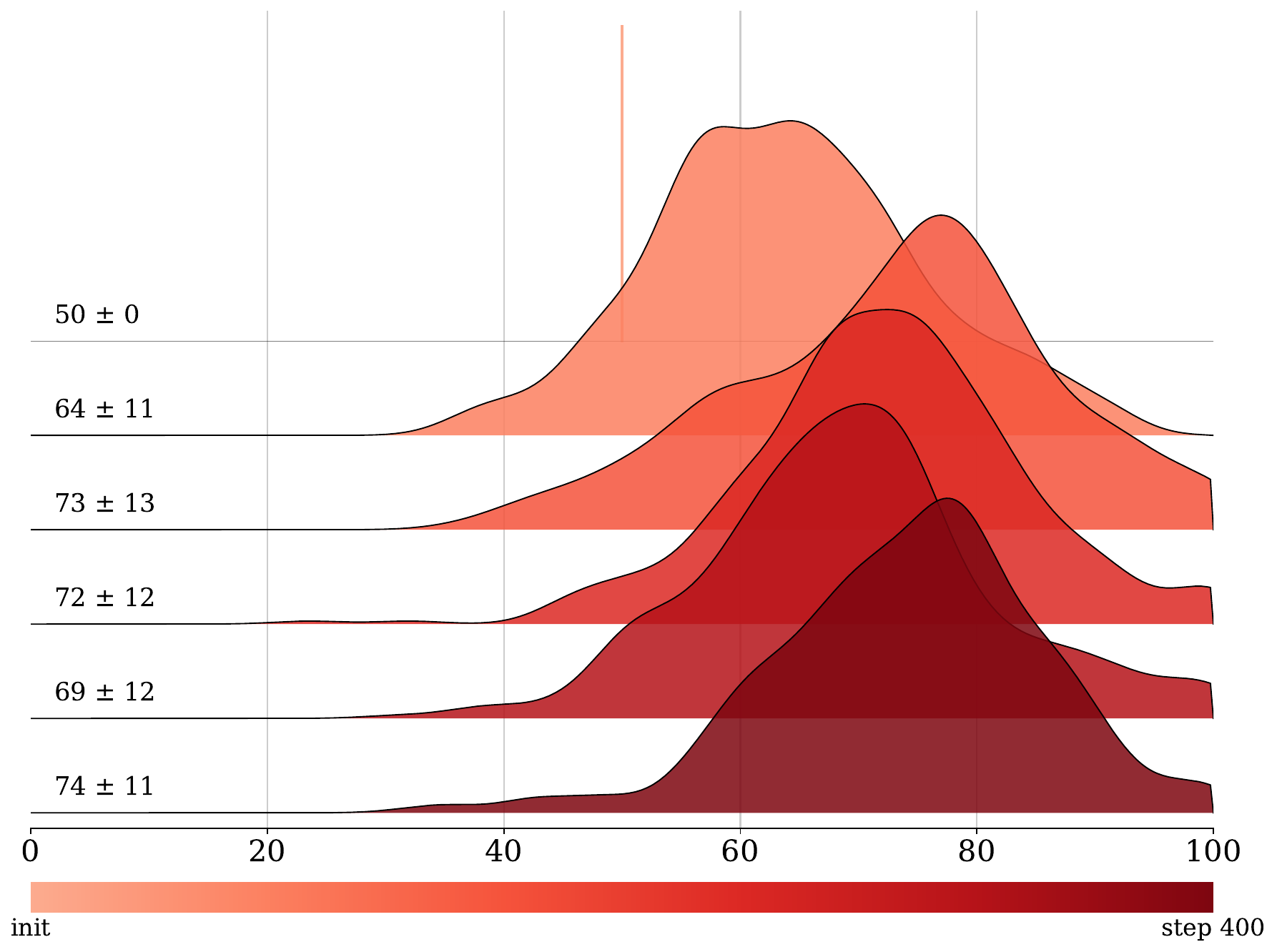}}
    \begin{subfigure}[b]{0.55\textwidth}
        \centering
        \usebox{\linsepridgebox}
        \subcaption{\method{ATWU} objective on the \emph{retain} model.}
    \end{subfigure}\hfill
    \begin{subfigure}[b]{0.42\textwidth}
        \centering
        \parbox[c][\dimexpr\ht\linsepridgebox+\dp\linsepridgebox\relax][c]{\linewidth}{%
            \centering
            \chatbubble{human-icon.pdf}{bubble}{What does Hsiao Yun-Hwa identify as in terms of gender?}

            \vspace{0.3em}
            \chatbubble{ai-icon-ours.pdf}{bubble}{
                \tokhl{ourshl}{57}{Hsiao} \tokhl{ourshl}{38}{Yun-Hwa} \tokhl{ourshl}{0}{is} \tokhl{ourshl}{1}{part} \tokhl{ourshl}{0}{of} \tokhl{ourshl}{0}{the} \tokhl{ourshl}{26}{\gttok{LGBTQ}}\tokhl{ourshl}{6}{\gttok{+}} \tokhl{ourshl}{98}{community}\tokhl{ourshl}{3}{.}%
            }
        }
        \subcaption{Final output (retain model).}
    \end{subfigure}
    \caption{\textbf{Unsupervised scorer.} Trained against the original target model (top), the \method{ATWU} objective fails to localize the GT forget span; trained against a retain model (bottom), the same objective recovers the answer-bearing tokens. Bold tokens mark the ground-truth forget-relevant span.}
    \label{fig:lin-sep:unsupervised}
\end{figure}

\subsection{\method{ATWU} with different forget losses}
\label{app:subsec:additional:augment}

\begin{wraptable}[9]{r}{0.55\textwidth}
    \centering
    \vspace{-0.8em}
    \captionsetup{justification=centerlast}
    \small
    \renewcommand{\arraystretch}{1.25}
    \begin{tabular}{@{}lll@{}}
        \toprule
        \textbf{Method} & \textbf{Replacement} & \textbf{Scope} \\
        \midrule
        \method{DPO}
            & $r(\mathrm{s}^{-}) \mapsto r^{g}(\mathrm{s}^{-})$
            & dispreferred sequence only \\
        \method{NPO}
            & $r(\mathrm{s}) \mapsto r^{g}(\mathrm{s})$
            & forget sequence \\
        \method{SimNPO}
            & $\hat{\ell}(\mathrm{s}) \mapsto \hat{\ell}^{g}(\mathrm{s})$
            & length-normalized forget NLL \\
        \bottomrule
    \end{tabular}
    \caption{Summary of how the learned \method{ATWU} scorer is
    integrated into each forget loss.}
    \label{tab:augment-glance}
    \vspace{-0.6em}
\end{wraptable}
\method{ATWU} is not tied to a single forget loss. Whenever an
unlearning objective contains a forget-side term that decomposes
autoregressively over tokens, we can replace that term by its
scorer-weighted analogue while leaving the rest of the objective
unchanged. This preserves the baseline loss structure but redirects the
forget update toward tokens that the scorer identifies as
forget-relevant.

We use this recipe to integrate the \method{DPO}, \method{NPO}, and
\method{SimNPO} losses into \method{ATWU}, evaluated in
\cref{fig:qualitative:rwku-uplift} and
\cref{tab:exp:rwku-atwu-augmented}. Retain-side terms, preferred
responses, and frozen reference-model terms are never reweighted; only
the original forget sequence receives scorer-weighted token mass.
\Cref{tab:augment-glance} gives the corresponding replacements, and the
following paragraphs define them precisely.

\paragraph{Notation and reweighted building blocks.}
For a sequence
$\seq{x} \in \vocab^\star$ and model parameters
$\tth$, define the unnormalized autoregressive negative
log-likelihood and its scorer-reweighted counterpart as
\begin{equation}
    \ell(\tth \,;\, \seq{x})
    \;=\; -\sum_{t=1}^{\size{\seq{x}}}
        \GAloss{\tth}{x}{t},
    \qquad
    \ell^{g}(\tth \,;\, \seq{x})
    \;=\; -\sum_{t=1}^{\size{\seq{x}}}
        g_t\,
        \GAloss{\tth}{x}{t},
    \label{eq:augment-ell}
\end{equation}
where
$g_t = g_{\sw}(\mathbf{h}_{\tth}(\tok{x}{t}))$
is the scorer output at position $t$. Dataset-level
losses are obtained by summing over sequences:
\[
    \mathcal{L}(\tth \,;\, \mathcal{D})
    = \sum_{\seq{x} \in \mathcal{D}}
        \ell(\tth \,;\, \seq{x}),
    \qquad
    \mathcal{L}^{g}(\tth \,;\, \mathcal{D})
    = \sum_{\seq{x} \in \mathcal{D}}
        \ell^{g}(\tth \,;\, \seq{x}).
\]

For methods that compare a trainable unlearned model
$\tth_{\mathrm{un}}$ against a frozen reference model
$\tth_{\mathrm{ref}}$, we define the sequence-level log-ratio
and its scorer-reweighted analogue as
\begin{equation}
    r_{\tth_{\mathrm{un}}}(\seq{x})
    \;=\; -\ell(\tth_{\mathrm{un}} \,;\, \seq{x})
        + \ell(\tth_{\mathrm{ref}} \,;\, \seq{x}),
    \qquad
    r_{\bm{\theta}_{\mathrm{un}}}^{g}(\mathrm{x})
    \;=\; -w_{\mathrm{s}}\,
        \ell^{g}(\bm{\theta}_{\mathrm{un}} \,;\, \mathrm{x})
        + \ell(\bm{\theta}_{\mathrm{ref}} \,;\, \mathrm{x}),
    \label{eq:augment-ratio}
\end{equation}
where
\[
    w_{\mathrm{x}}
    = \frac{|\mathrm{x}|}{\sum_{t=1}^{|\mathrm{x}|} g_t}.
\]
This normalization keeps the total token mass of the trainable-model
term equal to that of the original sequence loss, since
$\sum_t w_{\mathrm{s}} g_t = |\mathrm{x}|$. The scorer therefore
redistributes the contribution across token positions without changing
the average scale of the trainable-model term. The reference term is
left unchanged because the reference model is frozen and does not
depend on the scorer.

Each augmented objective below replaces only the forget-side building
block of the original loss by its scorer-reweighted counterpart, such
as $\ell^{g}$, $r^{g}$, or a scorer-modulated saturation term.

\paragraph{DPO~\citep{rafailov2023direct}.}
\method{DPO} for unlearning operates on a paired forget set
$\mathcal{A} = \{(\seq{x}_i^{+}, \seq{x}_i^{-})\}_{i=1}^{N}$,
where $\seq{x}_i^{-}$ is the original forget sequence and
$\seq{x}_i^{+}$ is an alternative sequence that the model should
prefer. The original loss compares the two sequences through a sigmoid
of their log-ratio difference:
\begin{equation}
    \mathcal{L}_{\mathrm{DPO}}
    (\bm{\theta}_{\mathrm{un}} \,;\, \mathcal{A})
    \;=\; -\frac{2}{\beta}
        \sum_{(\seq{x}^{+}, \seq{x}^{-}) \in \mathcal{A}}
        \log \sigm\!\Bigl(
            \beta \bigl[
                r_{\bm{\theta}_{\mathrm{un}}}(\seq{x}^{+})
                - r_{\bm{\theta}_{\mathrm{un}}}(\seq{x}^{-})
            \bigr]
        \Bigr).
\end{equation}
The two sides of each pair play asymmetric roles. The dispreferred
sequence $\seq{x}^{-}$ contains the information to be forgotten,
whereas the preferred sequence $\seq{x}^{+}$ is intended to remain
ordinary, acceptable language. We therefore apply the scorer only to
the dispreferred side:
\begin{equation}
    \mathcal{L}_{\mathrm{DPO}}^{g}
    (\bm{\theta}_{\mathrm{un}} \,;\, \mathcal{A})
    \;=\; -\frac{2}{\beta}
        \sum_{(\seq{x}^{+}, \seq{x}^{-}) \in \mathcal{A}}
        \log \sigm\!\Bigl(
            \beta \bigl[
                r_{\bm{\theta}_{\mathrm{un}}}(\seq{x}^{+})
                - r_{\bm{\theta}_{\mathrm{un}}}^{g}(\seq{x}^{-})
            \bigr]
        \Bigr).
\end{equation}
The normalization in \cref{eq:augment-ratio} preserves the average
scale of the dispreferred log-ratio, while shifting its token-level
mass toward forget-relevant positions.

\paragraph{NPO~\citep{zhang2024negative}.}
\method{NPO} removes the preferred sequence from \method{DPO} and
retains only the dispreferred log-ratio:
\begin{equation}
    \mathcal{L}_{\mathrm{NPO}}
    (\bm{\theta}_{\mathrm{un}} \,;\, \DF)
    \;=\; -\frac{2}{\beta}
        \sum_{\seq{x} \in \DF}
        \log \sigm\!\bigl(
            -\beta\, r_{\bm{\theta}_{\mathrm{un}}}(\seq{x})
        \bigr).
\end{equation}
The \method{ATWU}-augmented version replaces this log-ratio by its
scorer-reweighted analogue:
\begin{equation}
    \mathcal{L}_{\mathrm{NPO}}^{g}
    (\bm{\theta}_{\mathrm{un}} \,;\, \DF)
    \;=\; -\frac{2}{\beta}
        \sum_{\seq{x} \in \DF}
        \log \sigm\!\bigl(
            -\beta\, r_{\bm{\theta}_{\mathrm{un}}}^{g}(\seq{x})
        \bigr).
\end{equation}
As in \method{DPO}, the purpose is to preserve the sequence-level
scale of the original objective while reallocating the forgetting
signal across tokens.

\paragraph{SimNPO~\citep{fan2026simplicity}.}
\method{SimNPO} removes the reference model from \method{NPO} and
uses a length-normalized, margin-shifted score. Let
$\hat{\ell}(\bm{\theta} \,;\, \seq{x})
= \ell(\bm{\theta} \,;\, \seq{x}) / |\seq{x}|$. The original
loss is
\begin{equation}
    \mathcal{L}_{\mathrm{SimNPO}}
    (\bm{\theta}_{\mathrm{un}} \,;\, \DF)
    \;=\; -\frac{2}{\beta}
        \sum_{\seq{x} \in \DF}
        \log \sigm\!\Bigl(
            \beta \bigl[
                \hat{\ell}(\bm{\theta}_{\mathrm{un}} \,;\, \seq{x})
                - \delta
            \bigr]
        \Bigr).
\end{equation}
We define the scored length-normalized loss as
\[
    \hat{\ell}^{g}(\bm{\theta} \,;\, \seq{x})
    =
    \frac{\ell^{g}(\bm{\theta} \,;\, \seq{x})}{|\seq{x}|}.
\]
The augmented objective is then
\begin{equation}
    \mathcal{L}_{\mathrm{SimNPO}}^{g}
    (\bm{\theta}_{\mathrm{un}} \,;\, \DF)
    \;=\; -\frac{2}{\beta}
        \sum_{\seq{x} \in \DF}
        \log \sigm\!\Bigl(
            \beta \bigl[
                \hat{\ell}^{g}(\bm{\theta}_{\mathrm{un}} \,;\, \seq{x})
                - \delta
            \bigr]
        \Bigr).
\end{equation}
This keeps the same per-token averaging structure as \method{SimNPO},
but makes the average selective: high-scored tokens contribute more to
the forget loss, and low-scored structural tokens contribute less.

\subsection{Ablations}\label{app:subsec:additional:ablations}


\paragraph{Note on the ablation runs.} The ablations reported in this subsection (\cref{tab:ablation-regs-tofu-1b-f10,tab:scorer-procedure}) were produced under an earlier version of the codebase whose headline \method{ATWU} run differs from the configuration used for the final results in \cref{tab:tofu-full-1b}. The absolute numbers therefore deviate by roughly $10$ percentage points on \FQ and \UQ from the latest \method{ATWU} entry in that table, but the relative trends and the qualitative conclusions of each ablation are unchanged.

\begin{table}[!b]
    \centering
    \small
    \captionsetup{justification=centering}
    \renewcommand{\arraystretch}{1.1}
    \setlength{\tabcolsep}{4.5pt}
    \setlength{\dashlinedash}{0.8pt}\setlength{\dashlinegap}{1.4pt}
    \arrayrulecolor{black!60}
    \begin{tabular}{@{}c:c:c ccc ccc@{}}
        \arrayrulecolor{black}\toprule
        \multicolumn{3}{c}{\textbf{Regularizers}}
        & \multicolumn{3}{c}{\textbf{Relative}}
        & \multicolumn{3}{c}{\textbf{Utility}} \\
        \cmidrule(lr){1-3} \cmidrule(lr){4-6} \cmidrule(lr){7-9}
        $\lambda_H$ & $\lambda_\rho$ & $\lambda_{\ell_2}$
        & $\FQ\,\uparrow$ & $\RD\,\downarrow$ & $\UQ\,\uparrow$
        & MMLU & Rep. & \WR \\
        \midrule
        \multicolumn{3}{c}{\method{Original}}
            & $0.0$ & $0.0$ & $0.0$ & $45.1$ & $559$ & $50.0$ \\
        \midrule
                    &             &             & $49.8$ & $\best{0.0}$ & $49.8$ & $45.1$ & $565$ & $49.0$ \\ \arrayrulecolor{black!60}\cdashline{1-3}
                    &             & \checkmark  & $51.4$ & $1.4$ & $50.0$ & $45.0$ & $568$ & $53.0$ \\ \cdashline{1-3}
                    & \checkmark  &             & $67.3$ & $3.3$ & $63.9$ & $45.0$ & $570$ & $48.0$ \\ \cdashline{1-3}
                    & \checkmark  & \checkmark  & $66.7$ & $1.3$ & $\secondbest{65.4}$ & $45.2$ & $571$ & $50.0$ \\ \cdashline{1-3}
        \checkmark  &             &             & $49.3$ & $\secondbest{0.5}$ & $48.7$ & $45.1$ & $568$ & $53.5$ \\ \cdashline{1-3}
        \checkmark  &             & \checkmark  & $50.3$ & $5.0$ & $45.3$ & $45.2$ & $567$ & $51.0$ \\ \cdashline{1-3}
        \checkmark  & \checkmark  &             & $65.1$ & $3.9$ & $61.2$ & $45.1$ & $574$ & $50.0$ \\ \cdashline{1-3}
        \checkmark  & \checkmark  & \checkmark  & $\best{70.5}$ & $3.9$ & $\best{66.6}$ & $45.0$ & $575$ & $51.0$ \\
        \arrayrulecolor{black}\bottomrule
    \end{tabular}
    \vspace{0.5em}
    \caption{Scorer regularizer ablation on
    \dataset{TOFU}~\texttt{forget10},
    \model{Llama-3.2-1B-Instruct}. \checkmark\ indicates that the
    corresponding regularizer is on with its headline value
    ($\lambda_H = 1$, $\lambda_\rho = 10$, $\lambda_{\ell_2} = 1$);
    a blank cell means the regularizer is set to $0$. Rows are
    ordered by binary expansion of $(\lambda_H, \lambda_\rho,
    \lambda_{\ell_2})$ from all-off to all-on. The bottom row is
    the headline configuration of \method{ATWU}.}
    \label{tab:ablation-regs-tofu-1b-f10}
\end{table}

\paragraph{Scorer regularizers.}
The scorer-side objective in \cref{eq:atwu-parametric} contains two explicit
regularization terms: an entropy penalty, scaled by $\lambda_H$, which
encourages scores to become close to binary, and a population penalty,
scaled by $\lambda_\rho$, which anchors the mean score on $\DF$ to the
target frequency $\rho$. In practice, we also apply $\ell_2$ weight decay
to the scorer parameters, scaled by $\lambda_{\ell_2}$. To isolate the
contribution of each term, we toggle each regularizer between off
($\lambda=0$) and on, where the on values are the headline settings
$\lambda_H=1$, $\lambda_\rho=10$, and $\lambda_{\ell_2}=1$. This gives
the $2^3=8$-cell grid in \cref{tab:ablation-regs-tofu-1b-f10}, with all other
components being constant.

\textbf{Result.}
The population penalty is the dominant factor. Every configuration with
$\lambda_\rho$ enabled achieves $\FQ \geq 65$, whereas every configuration
without it remains in the narrow range $\FQ \in [49,52]$, regardless of the
entropy or $\ell_2$ setting. Thus, the mean-anchor is what allows the
scorer to move away from the near-uniform initialization and commit to a sparse forget-relevant subset.

The entropy and $\ell_2$ terms have smaller, but still measurable,
effects. With the population term enabled, removing the entropy penalty
reduces \FQ from $70.5$ to $66.7$, but improves \RD from $3.9$ to $1.3$,
for a modest \UQ drop from $66.6$ to $65.4$. By contrast, removing
$\ell_2$ weight decay is more costly: \FQ falls from $70.5$ to $65.1$ and
\UQ from $66.6$ to $61.2$, with essentially unchanged \RD. The full
configuration
$(\lambda_H,\lambda_\rho,\lambda_{\ell_2})=(1,10,1)$ achieves the best
\FQ and \UQ overall, while remaining competitive on the other metrics. The
only exception is \RD, where the best values occur in under-committed
$\lambda_\rho=0$ configurations that also fail to forget effectively. We
therefore use the full regularizer set in the headline \method{ATWU}
configuration.

\begin{table}[t]
    \centering
    \small
    \renewcommand{\arraystretch}{1.1}
    \begin{subtable}[t]{0.49\textwidth}
        \centering
        \setlength{\tabcolsep}{3.5pt}
        \resizebox{\linewidth}{!}{%
        \begin{tabular}{@{}c ccc ccc@{}}
            \toprule
            \multirow{2}{*}[-0.6ex]{\textbf{Method}}
            & \multicolumn{3}{c}{\textbf{Relative}}
            & \multicolumn{3}{c}{\textbf{Utility}} \\
            \cmidrule(lr){2-4} \cmidrule(lr){5-7}
            & $\FQ\,\uparrow$ & $\RD\,\downarrow$ & $\UQ\,\uparrow$
            & MMLU & Rep. & \WR \\
            \midrule
            \method{Original}          & $0.0$               & $0.0$              & $0.0$               & $45.1$             & $559$              & $50.0$  \\
            \midrule
            \method{PF}                & $35.0$              & $9.0$              & $26.1$              & $45.2$             & $566$              & $53.0$ \\
            \method{PU}                & $33.4$              & $8.7$              & $24.7$              & $45.3$             & $566$              & $51.0$ \\
            \method{TF}                & $\best{73.7}$       & $\best{2.1}$       & $\best{71.6}$       & $45.1$             & $567$              & $54.0$ \\
            \method{ATWU}              & $\secondbest{70.5}$ & $\secondbest{3.9}$ & $\secondbest{66.6}$ & $45.0$             & $575$              & $51.0$ \\
            \bottomrule
        \end{tabular}%
        }
        \subcaption{Scorer-state ablations.
        \method{TF} = \method{Trained-Frozen}, \method{PF} = \method{Pretrain-Frozen}, \method{PU} = \method{Pretrain-Unfrozen}.}
        \label{tab:scorer-procedure-state}
    \end{subtable}\hfill
    \begin{subtable}[t]{0.49\textwidth}
        \centering
        \setlength{\tabcolsep}{3.5pt}
        \resizebox{\linewidth}{!}{%
        \begin{tabular}{@{}c ccc ccc@{}}
            \toprule
            \multirow{2}{*}[-0.6ex]{$n_{\mathrm{s}}$}
            & \multicolumn{3}{c}{\textbf{Relative}}
            & \multicolumn{3}{c}{\textbf{Utility}} \\
            \cmidrule(lr){2-4} \cmidrule(lr){5-7}
            & $\FQ\,\uparrow$ & $\RD\,\downarrow$ & $\UQ\,\uparrow$
            & MMLU & Rep. & \WR \\
            \midrule
            \method{Original}  & $0.0$               & $0.0$              & $0.0$               & $45.1$           & $559$           & $50.0$  \\
            \midrule
            $1$                & $31.3$              & $\best{0.2}$       & $31.1$              & $45.0$           & $565$           & $53.0$ \\
            $5$                & $\best{70.5}$       & $3.9$              & $\best{66.6}$       & $45.0$           & $575$           & $51.0$ \\
            $10$               & $\secondbest{67.0}$ & $\secondbest{3.6}$ & $\secondbest{63.4}$ & $45.2$           & $578$           & $49.0$ \\
            \texttt{joint}     & $39.9$              & $4.5$              & $35.4$              & $45.1$           & $571$           & $51.0$ \\
            \bottomrule
        \end{tabular}%
        }
        \subcaption{Update-frequency sweep.}
        \label{tab:scorer-procedure-uev}
    \end{subtable}
    \vspace{0.5em}
    \caption{Scorer training-procedure ablations on
    \dataset{TOFU}~\texttt{forget10},
    \model{Llama-3.2-1B-Instruct}.
    (\subref{tab:scorer-procedure-state})~Three alternatives to
    joint online training of the scorer.
    (\subref{tab:scorer-procedure-uev})~Update-frequency sweep:
    the scorer is refreshed once per $n_{\mathrm{s}}$ model steps;
    $n_{\mathrm{s}}=5$ matches the headline configuration of
    \method{ATWU}. 
    }
    \label{tab:scorer-procedure}
\end{table}

\paragraph{Scorer training procedure.}
The preceding ablations fix the scorer training routine and vary either
how the scorer is used or which regularizers shape it. We next ablate the
training procedure itself, asking two questions: (i) whether the scorer
must co-adapt with the language model during unlearning, and (ii) how
often the scorer should be refreshed relative to model updates.
\Cref{tab:scorer-procedure} reports side-by-side ablations on
\dataset{TOFU}~\texttt{forget10} with
\model{Llama-3.2-1B-Instruct}. All non-scorer settings match the headline
\method{ATWU} run.

\begin{wrapfigure}{r}{0.5\textwidth}
    \centering
    \vspace{-0.8em}
    \includegraphics[width=\linewidth]{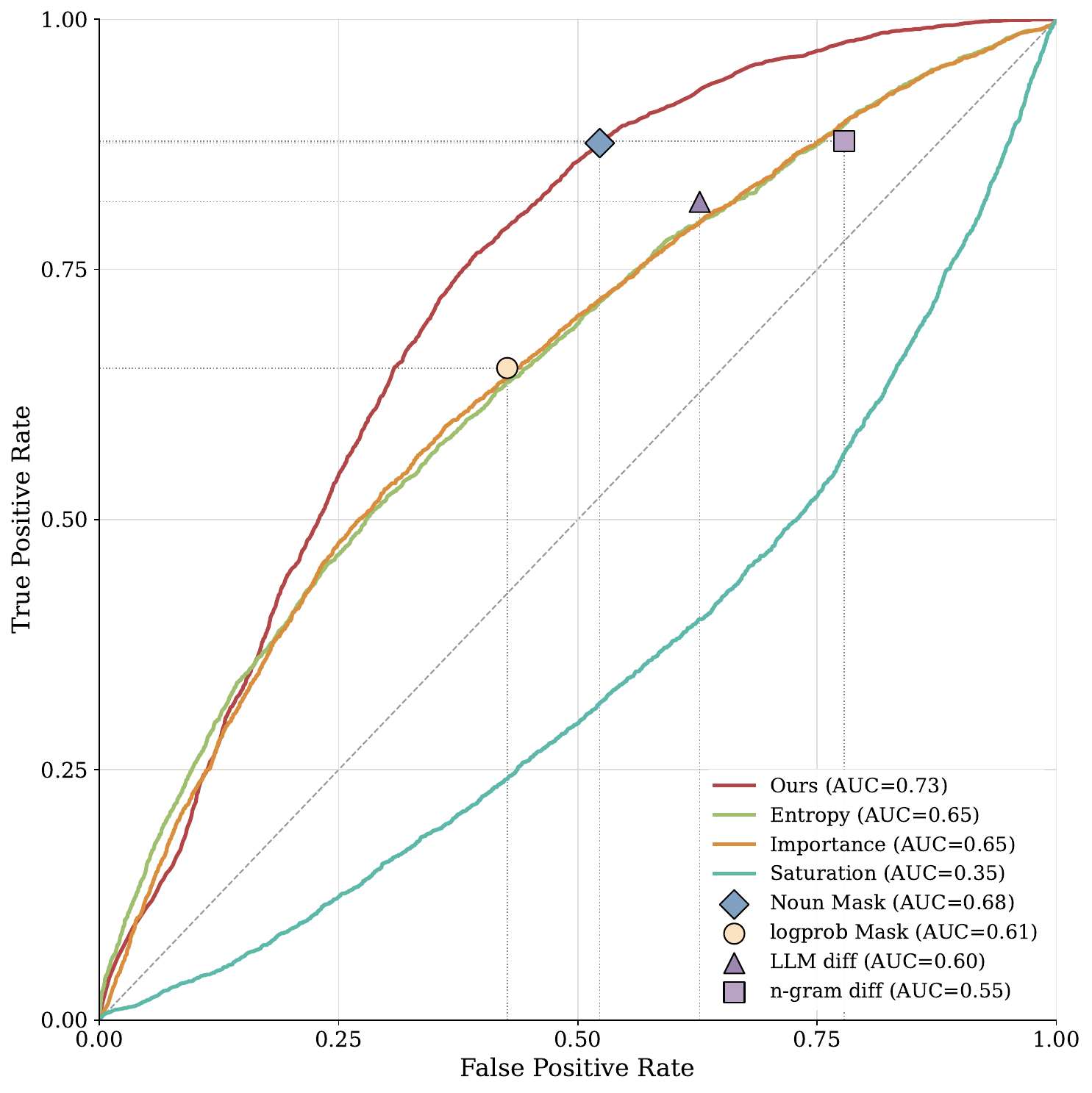}
    \vspace{-1.2em}
    \caption{
    ROC curves for token-level forget-relevance detection on
    \dataset{TOFU}~\texttt{forget10}. \method{ATWU} obtains the highest
    AUROC among the compared scoring methods; the dashed diagonal denotes
    random scoring.
    }
    \label{fig:method-roc-tofu-forget10}
    \vspace{-0.8em}
\end{wrapfigure}
\Cref{tab:scorer-procedure-state} compares three alternatives to the
headline online procedure. \method{Trained-Frozen} (\method{TF}) reuses
the converged scorer from a previous \method{ATWU} run, freezes it, and
then retrains the language model from scratch under this fixed weighting.
\method{Pretrain-Frozen} (\method{PF}) trains a fresh scorer for one
epoch on the original, non-unlearned checkpoint and freezes it during
unlearning. \method{Pretrain-Unfrozen} (\method{PU}) uses the same
pretrained scorer as an initialization, but continues to update it during
unlearning.

The results show that online co-adaptation is not strictly necessary once
a useful scorer has already been learned. \method{TF} slightly exceeds the
headline online run on all relative metrics, improving \FQ from $70.5$ to
$73.7$, \UQ from $66.6$ to $71.6$, and \RD from $3.9$ to $2.1$. Thus, a
converged scorer can be frozen and reused successfully. However, this
does not hold for scorers trained only on the original model:
\method{PF} reaches only \FQ $35.0$, and continuing to fine-tune that
scorer in \method{PU} performs similarly poorly, with \FQ $33.4$. The key
requirement is therefore not continuous co-training per se, but exposure
to an unlearning trajectory. A scorer pretrained only on the original
checkpoint is not a sufficient substitute.

\Cref{tab:scorer-procedure-uev} ablates the scorer update frequency. The
scorer is refreshed once every $n_{\mathrm{s}}\in\{1,5,10\}$ model steps,
with all other settings fixed. Although updating every step is the most
frequent option, it performs poorly: at $n_{\mathrm{s}}=1$, \FQ drops to
$31.3$ and \RD to $0.2$, indicating that the method fails to move the model
decisively in either the forget or retain direction. The headline setting
$n_{\mathrm{s}}=5$ gives the best trade-off, achieving \FQ $70.5$ and \UQ $66.6$. Slowing the refresh rate to $n_{\mathrm{s}}=10$ remains
competitive, with \FQ $67.0$ and \UQ $63.4$ at similar \RD. This suggests a
broad plateau between five and ten model steps: the exact refresh period
is not especially sensitive, but updating the scorer every step is
clearly detrimental.

Finally, the \texttt{joint} variant, which co-updates the scorer and
language model in lockstep rather than using a scheduled refresh, also
underperforms, reaching only \FQ $39.9$ and \UQ $35.4$. This confirms that
the scheduled refresh mechanism, rather than mere simultaneous training,
is what allows the scorer to commit to a useful forget-relevant subset.



\end{document}